\newcommand{\bbR}{\ensuremath{\mathbb{R}}}
\newcommand{\calR}{\ensuremath{\mathcal{R}}}
\newcommand{\calS}{\ensuremath{\mathcal{S}}}
\newcommand{\calX}{\ensuremath{\mathcal{X}}}
\newcommand{\bbZ}{\ensuremath{\mathbb{Z}}}
\newcommand{\calF}{\ensuremath{\mathcal{F}}}
\newcommand{\calA}{\ensuremath{\mathcal{A}}}
\newcommand{\calO}{\ensuremath{\mathcal{O}}}
\newcommand{\calC}{\ensuremath{\mathcal{C}}}
\newcommand{\calM}{\ensuremath{\mathcal{M}}}
\newcommand{\frakm}{\ensuremath{\mathfrak{m}}}
\newcommand{\calH}{\ensuremath{\mathcal{H}}}
\newcommand{\calT}{\ensuremath{\mathcal{T}}}
\newcommand{\calK}{\ensuremath{\mathcal{K}}}
\DeclareMathOperator*{\argmin}{argmin}
\newcommand{\E}{\mathrm{E}}
\newcommand{\cov}{\mathrm{Cov}}
\begin{document}

\title{
\vspace{20pt}\LARGE\textbf{Infinite-dimensional Mahalanobis Distance with Applications to Kernelized Novelty Detection}
}

\author{\vspace{30pt}\LARGE{Nikita Zozoulenko\textsuperscript{1,*}, Thomas Cass\textsuperscript{1,2}, Lukas Gonon\textsuperscript{1,3}}\vspace{20pt}}

\affil{\small\textsuperscript{1} Department of Mathematics, Imperial College London, London, UK\\ \textsuperscript{2} Institute for Advanced Study, Princeton, USA\\ \textsuperscript{3} School of Computer Science, University of St. Gallen, Switzerland\\
\textsuperscript{*} Corresponding author, email: \href{mailto:n.zozoulenko23@imperial.ac.uk}{n.zozoulenko23@imperial.ac.uk}}

\date{\vspace{-20pt}}

\maketitle

\vspace{20pt}
\begin{abstract}%
The Mahalanobis distance is a classical tool used to measure the covariance-adjusted distance between points in $\bbR^d$. In this work, we extend the concept of Mahalanobis distance to separable Banach spaces by reinterpreting it as a Cameron-Martin norm associated with a probability measure. This approach leads to a basis-free, data-driven notion of anomaly distance through the so-called variance norm, which can naturally be estimated using empirical measures of a sample. Our framework generalizes the classical $\bbR^d$, functional $(L^2[0,1])^d$, and kernelized settings; importantly, it incorporates non-injective covariance operators. We prove that the variance norm is invariant under invertible bounded linear transformations of the data, extending previous results which are limited to unitary operators. In the Hilbert space setting, we connect the variance norm to the RKHS of the covariance operator, and establish consistency and convergence results for estimation using empirical measures with Tikhonov regularization. Using the variance norm, we introduce the notion of a kernelized nearest-neighbour Mahalanobis distance, and study some of its finite-sample concentration properties. In an empirical study on 12 real-world data sets, we demonstrate that the kernelized nearest-neighbour Mahalanobis distance outperforms the traditional kernelized Mahalanobis distance for multivariate time series novelty detection, using state-of-the-art time series kernels such as the signature, global alignment, and Volterra reservoir kernels.

\vspace{15pt}
\noindent
\textbf{Keywords}: Mahalanobis distance; covariance operator; kernel methods; nearest neighbours; multivariate time series
\end{abstract}

\newpage

\section{Introduction}

The Mahalanobis distance \citep{1936Mahalanobis} is a classical tool used to measure the covariance-adjusted distance between points in space on $\bbR^d$. Given a random vector $X$ in $\bbR^d$ with non-singular covariance matrix $\Sigma \in \bbR^{d\times d}$ and mean $\frakm \in \bbR^d$, the Mahalanobis distance of a sample point $y \in \bbR^d$ can be defined in the following three equivalent ways:
\begin{align}
    d_M(y; X) 
    &:= \sqrt{(y-\frakm)^T \Sigma^{-1}(y-\frakm)} \label{eqMahal1}\\
    &= \big\| \Sigma^{-\frac{1}{2}}(y-\frakm)\big\|_{\bbR^d} \nonumber\\
    &= \sqrt{ \sum_{i=1}^d \frac{1}{\lambda_i} \langle y-\frakm, e_i\rangle^2}, \nonumber
\end{align}
where $(e_n, \lambda_n)_{n=1}^N$ are the eigenvector-eigenvalue pairs of the covariance matrix $\Sigma$. Initially proposed by \citet{1936Mahalanobis} for classification, the Mahalanobis distance has since become a cornerstone technique in multivariate analysis \citep{2000MahalanobisDistanceChemometrics}. It is particularly valued for outlier detection, but its utility extends broadly, finding applications in diverse fields such as medicine \citep{2011MedicineAnomalyDetection}, cybersecurity \citep{2020KernelCybersecurity}, chemometrics \citep{2000MahalanobisDistanceChemometrics}, unmanned vehicle detection \citep{2010UnmannedVehicle}, supervised classification \citep{2008MahalanobisLearning}, data clustering \citep{2022MahalanobisClustering}, and financial market anomaly detection \citep{2022CrpytoMarketAnomalyDetection}, to name a few.

In this article, we propose a novel framework for Mahalanobis-type outlier detection on separable Banach and Hilbert spaces, based on a generalized notion of variance norms \citep{2023LyonsShao} and ideas from Cameron-Martin spaces \citep[see e.g.][]{bogachevGaussianMeasures, LifshitsLecturesOnGaussianProcesses, HairerSPDE}. Our extended framework includes the classical Mahalanobis distance on $\bbR^d$ \citep{1936Mahalanobis}, the functional Mahalanobis distance on $(L^2[0,1])^d$ \citep{2015FunctionalMahalanobisTruncated, 2020FunctionalMahalanobis}, and the kernelized Mahalanobis distance \citep{2001KernelizedMahalanobisIEEE} as special cases. Notably, our formulation includes the general case of non-injective covariance operators, which is not addressed in the current literature.

Our work is motivated by the lack of theory surrounding outlier detection on general infinite-dimensional spaces, and more specifically work on novelty detection for time series data using the signature transform \citep{2023LyonsShao, 2022CrpytoMarketAnomalyDetection, 2024PaolaSignatureAnomaly}, an object originating from the theory of rough paths \citep{1998RoughPathTheory}. Existing methods for signature-based outlier detection have been limited to low-dimensional time series due to the exponential $\mathcal{O}(Td^m)$ time complexity in the path dimension $d$ when computing $m$-level truncated signatures of time series of length $T$. Our unified framework addresses this bottleneck, allowing for efficient computations of signature Mahalanobis distances in linear time with respect to $d$ through the use of signature kernels \citep{2019kernelsForSequentiallyOrderedData,2021PdeSignatureKernel}, without truncating the signature, allowing for true infinite-dimensional outlier detection in $\mathcal{O}(T^2 d)$ time. This improvement in time complexity enables these methods to be applied to high-dimensional time series data.

\subsection{Previous Infinite-dimensional Proposals}

The first extension of the finite-dimensional Mahalanobis distance to finite-dimensional, non-linear data was via the kernelized Mahalanobis distance \citep{2001KernelizedMahalanobisIEEE}. It is defined by replacing the implicit dot products in \eqref{eqMahal1} with inner products of a feature map, or equivalently, by positive definite kernel evaluations. This method has been successfully used in applications such as supervised classification \citep{2007MahalanobisDistanceKernelSVM, 2009KernelDiscriminantAnalysis, 2020KernelMahalanobisDistanceTaguchi} and outlier detection \citep{2017KernelMahalanobisFaultDetection, 2018KernelMahalanobisDynamicProcess, 2020KernelCybersecurity}.

Recently, the Mahalanobis distance was generalized to the Hilbert space $L^2[0,1]$ in the context of functional data analysis \citep{2015FunctionalMahalanobisTruncated, 2020FunctionalMahalanobis}. This extension uses Hilbert-Schmidt covariance operators to define functional analogues of the Mahalanobis distance. In this setting, we consider a stochastic process $\big(X(t)\big)_{t \in [0,1]}$ in $L^2[0,1]$ with continuous covariance function $a(s,t) := \cov[X(s),X(t)]$ and functional mean $\frakm(t) := \E[X(t)] \in L^2[0,1]$. The covariance operator $\calK$, defined by $\calK f(t) := \int_0^1 a(s,t) f(s) ds$ for $f\in L^2[0,1]$, is symmetric, positive, compact, and hence diagonalizable by the spectral theorem via the eigenvector-eigenvalue pairs $(e_n, \lambda_n)_{n=1}^\infty$ with non-negative eigenvalues. The naive definition of the functional Mahalanobis distance $d_{FM}$ reads
\begin{align}\label{eqIntroNaive}
    d_{FM}(f; X) := \|\calK^{-\frac{1}{2}} f \|_{L^2[0,1]},
\end{align}
and the difficulty in this infinite-dimensional setting is the non-invertibility of $\calK^\frac{1}{2}$. When the inverse exists, it is given by
$
    \calK^{-\frac{1}{2}} f = \sum_{n=1}^\infty \frac{1}{\sqrt{\lambda_n}}\langle e_n, f\rangle e_n,
$
but since $\calK$ is of trace class, we have that $\sum_{n=1}^\infty \lambda_n < \infty$. This restricts the set of elements for which the inverse is well defined. In fact, if $X$ is a Gaussian process and $f$ is a sample path of $X$, then a classical result from Gaussian probability theory states that $\calK^{-\frac{1}{2}}f$ will almost surely not exist \cite[see e.g.][Theorem 2.4.7]{bogachevGaussianMeasures}. The first paper to use $d_{FM}$ resolved this issue by approximating $\calK$ via its $M$ biggest eigenvalues, where $M$ was determined via cross-validation \citep{2015FunctionalMahalanobisTruncated}. Further theoretical advances were later made to the functional theory under the assumption that $\calK$ is injective \citep{2020FunctionalMahalanobis}, using the RKHS $\calH(\calK) := \calK^{1/2}(L^2[0,1])$ to regularize $d_{FM}$ by considering the minimization problem
    \begin{align}\label{eqIntroRegularization}
        f_\alpha 
        := \argmin_{h\in \calH(\calK)} \|f-h\|^2 + \alpha \|\calK^{-\frac{1}{2}}h\|^2 
        = (\calK+\alpha I)^{-1} \calK f 
        = \sum_{n=1}^\infty \frac{\lambda_n}{\lambda_n+\alpha} \langle f, e_n\rangle e_n,
    \end{align}
for some $\alpha>0$. The regularized functional Mahalanobis distance is defined by replacing $f$ with $f_\alpha$ in \eqref{eqIntroNaive}, or equivalently by considering Tikhonov regularization on the operator $\calK^\frac{1}{2}$. This effectively bypasses the previous invertibility issues, allowing for a well-behaved anomaly distance on $L^2[0,1]$ with theoretical guarantees like consistency of the sample estimator, and well-understood distributional properties under Gaussian assumptions on $X$.

\subsection{Limitations in the current Functional Theory}

In this work we want to address two major limitations in the current theory. The first limitation of the functional Mahalanobis theory is that the sample estimator of $d_{FM}$ is special to the $L^2[0,1]$ setting, and reverts back to finite-dimensional Euclidean theory. The procedure involves discretizing $d$-dimensional sample paths on a grid of $T$ time steps, and computing the Mahalanobis distance in $\mathbb{R}^{Td}$ \citep{2015FunctionalMahalanobisTruncated, FDARamsaySilverman}. While this method works for $L^2[0,1]$, it fails for other inner products that require different infinite-dimensional geometry, such as anomaly detection using the signature transform from rough path theory \citep{2022CrpytoMarketAnomalyDetection, 2023LyonsShao, 2024PaolaSignatureAnomaly, 2024CassSalviLectureNotes}. Furthermore, the theoretical guarantees of \cite{2020FunctionalMahalanobis} were developed for the special case $V = L^2[0,1]$, while we need these properties in the general Hilbert space setting for applications.

The second key limitation we address is the injectivity assumptions in the current functional theory. This is problematic because, when working with sample data, the empirical covariance operator is by definition of finite rank, and therefore non-injective in infinite-dimensional settings. In the functional case, a separate finite-dimensional construction was used for the sample estimator. Our unified framework does not require injectivity, and overcomes these issues by showing that the sample Mahalanobis estimator arises naturally by considering Cameron-Martin spaces with respect to empirical measures. Our framework encompasses both the general infinite-dimensional case and sample estimators within a single theory, eliminating the need for separate constructions.

\subsection{Overview of the Unified Framework and Contributions}

In our unified framework we work on a separable Banach space $(V, \|\cdot\|)$ with continuous dual denoted by $V^*$. Our main object of study is Borel probability measures $\mu$ on $V$ of finite second moment, denoted $\mu \in \calM_V$ as per \cref{defMV}. For such measures $\mu$, the vector-valued mean $\frakm \in V$ and covariance operator $\calK : V^* \to V$
\begin{equation*}
    \frakm := \int_V x d\mu(x), \qquad \qquad \calK f := \int_V (x-\frakm) f(x-\frakm) d\mu(x),
\end{equation*}
are well-defined as Bochner integrals \citep{1987ProbabilityDistributionsOnBanachSpaces}.
The fundamental object we will work with is the $\mu$-variance norm defined by
\begin{equation}\label{eqIntroVarNorm}
    \|x\|_{\mu\text{-cov}} := \sup_{\substack{f \in V^*, \\\cov^\mu[f,f]\leq 1}} f(x),
\end{equation}
which is well-defined for all $x \in V$, but is allowed to be infinite. The set of points for which $\|x\|_{\mu\text{-cov}} < \infty$ is called the Cameron-Martin space of $\mu$, which we denote by $H_\mu$. The measure-theoretic notion \eqref{eqIntroVarNorm} was first suggested in a pre-print of \citet{2023LyonsShao}, but the authors provided no formal theory for the infinite-dimensional case, and the idea was subsequently reworked into a variance norm with respect to a finite sample only, without the use of probability measures or laws. In our extended setting, we define the Banach space Mahalanobis distance as
\begin{equation*}
    d_M(x; \mu) := \|x - \frakm \|_{\mu\text{-cov}},
\end{equation*}
which coincides with the classical $\bbR^d$, functional $(L^2[0,1])^d$, and kernelized Mahalanobis distances, with the added benefit that our definition supports the use of non-injective covariance operators. More importantly, this entails that we no longer need one theory for the covariance operator of a random process, and a different theory for the finite sample estimator. Our framework allows to use the same results, theorems, and definitions for the sample estimator and the underlying random process by considering the variance norm with respect to empirical measures.

An important property of the classical Mahalanobis distance in $\bbR^d$ is its invariance under invertible linear transformations of the data. Whether this remains true in the infinite-dimensional setting has been an open question. \citet{2020FunctionalMahalanobis} was able to prove that invariance holds for unitary operators in the functional case $V = L^2[0,1]$. Using our framework based on variance norms, we fully extend this result in \cref{theoremInvarianceUnderBoundedOperators} to invertible bounded linear operators on Banach spaces.

When specializing to Hilbert spaces, the Cameron-Martin space $H_\mu$ becomes the RKHS of the covariance operator $\calK$, and we are able to express $\|x\|_{\mu\text{-cov}}$ in terms of the eigenvectors and eigenvalues of $\calK$. For applications, we show that the sample estimator of the variance norm is obtained by considering empirical measures of the form $\mu^N = \frac{1}{N}\sum_{i=1}^N \delta_{x_i}$. The empirical $\mu^N$-variance norm can then be computed via the procedure outlined in \cref{theoremEigenvectorsCovOperator}, which is based on an SVD decomposition of the inner product Gram matrix, and is closely related to kernel PCA \citep{1998KernelPCA}. Specifically, this framework allows us to define a kernelized nearest-neighbour Mahalanobis distance, which we show can be computed with the same time complexity as the classical kernelized Mahalanobis distance. This is $\calO(N^3 + N^2K)$ time for fitting the model, and $\calO(N(K+M))$ time for inference, where $N$ is the number of data points, $K$ is the time complexity of a single inner product evaluation, and $M\leq N$ is the number of eigenvalues considered.

A Tikhanov-regularized variance norm can also be obtained in the general Hilbert space setting, similar to the functional $L^2[0,1]$ setting introduced by \citet{2020FunctionalMahalanobis} and in \eqref{eqIntroRegularization}. This allows us to extend the consistency, speed of convergence, and Gaussian distributional results from the functional case to arbitrary separable Hilbert spaces using variance norms. More specifically, we show that the sample estimator based on empirical regularized variance norms converges almost surely to the actual regularized variance norm, with a speed of convergence in probability of $O_P(N^{-\frac{1}{4}})$. Moreover, when $\mu$ is a Gaussian measure, the regularized Mahalanobis distance is equal in distribution to an infinite series of independent standard chi-squared random variables.

We further study the finite-sample properties of the nearest-neighbour distance and its regularized Mahalanobis variant to justify their use in infinite-dimensional settings, where one might expect random points to be almost equidistant. To demonstrate the difficulty of this problem, we show for any set of linearly independent points $\{x_1, ..., x_N\} \subset V$ defining the empirical measure $\mu^N$, that the unregularized empirical Mahalanobis distance satisfies
$
\|x_i - x_j\|_{\mu^N} = 2\sqrt{N}
$
for all $i\neq j$. This highlights the need for a more nuanced analysis and provides additional justification for regularization. To address this, we establish finite-sample concentration bounds for the difference between the nearest- and furthest-neighbour distances under the Hilbert and regularized Mahalanobis norms. Our analysis is based on a Hilbert space Hanson-Wright inequality \citep{2021HansonWrightHilbertSpaces}, and concentration properties of the finite sample covariance operator \citep{2017ConcentrationInequalitiesAndMomentBoundsForSampleCovarianceOperators}. Importantly, we show that the nearest neighbour concentration phenomenon is not governed by the ambient dimension of the space $V$, but rather by the \textit{effective dimensionality} of the covariance operator of the underlying data measure, as given in \cref{defEffectiveDimensionAndRank}.

\subsection{Organization of the Paper}

\cref{sectionCamMartTheory} introduces the covariance operator and the Cameron-Martin space of a probability measure $\mu$ in the Banach space setting. We prove that the variance norm coincides with the classical Cameron-Martin norm, and show that it is invariant under invertible bounded linear transformations of the data. The Mahalanobis and nearest-neighbour Mahalanobis distance is defined, and several important properties are proved.

In \cref{subsectionTheoryHilbert} we specialize to Hilbert spaces, and connect the Cameron-Martin space to the RKHS of the covariance operator $\mu$. We derive computational formulas based on empirical measures with applications to kernel learning. We then define a Tikhanov-regularized variance norm, and derive consistency, speed of convergence, and Gaussian distributional results for the regularized variance norm. 

\cref{sectionNN} studies finite-sample properties of the nearest-neighbour Mahalanobis distance and establishes concentration bounds that justify its use in infinite-dimensional settings.

We conclude the paper with an application to kernelized multivariate time series novelty detection in \cref{sectionApplicationsToTimeSeries}, where we apply our developed framework to various state of the art time series kernels and compare their effectiveness.

\section{Theoretical Foundations of Variance Norms}\label{sectionCamMartTheory}

Throughout this section, we consider a separable Banach space $(V, \|\cdot\|)$ with continuous dual $V^*$, and a Borel probability measure $\mu$ defined on $V$. The primary objective of this section is to develop a comprehensive theory of variance norms on Banach spaces by extending the concepts of Cameron-Martin spaces and norms to non-Gaussian measures. This will allow us to extend the definition of Mahalanobis distance to the Banach space setting.

\subsection{Covariance Operators}\label{subsecCovarianceOperators}

Covariance operators serve as the natural generalization of covariance matrices to infinite-dimensional spaces \citep{1987ProbabilityDistributionsOnBanachSpaces, 2015TheoreticalFoundationsOfFunctionalDataAnalysisWithAnIntroductionToLinearOperators}. These are classical objects in probability theory, and can be defined for random measures --- or equivalently, probably measures --- with finite second moment.

\begin{definition}\label{defMV}
Let $p\geq 1$. A measure $\mu$ on $V$ is said to have finite $p$-th moment if $\|\cdot\| \in L^p( V, \mu)$. We denote by $\calM_V$ the set of all Borel probability measures $\mu$ of finite second moment.
\end{definition}

In particular, empirical measures of the form $\frac{1}{N}\sum_{i=1}^N \delta_{x_i}$ always belong to $\calM_V$, which is essential for computations with observed data. The fundamental object of study in our framework is the covariance operator of $\mu$, which is defined using the classical notion of Bochner integration \citep[see e.g.][]{1991ProbabilityInBanachSpacesLedouxTalagrand}.

\begin{definition}
    Using the Bochner integral, we defined the mean of $\mu \in \calM_V$ as the expectation
    \(
        \frakm = \int_V x d\mu(x) = \E^{x \sim \mu}\big[ x \big].
    \)
    On the continuous dual $V^*$ we define the functional covariance quadratic form $q : V^* \times V^* \to \bbR$ by
    \begin{align*}
    q(f,g) 
    &:= \cov^\mu[ f, g ] 
    = \E^{x\sim\mu}\bigg[ f(x-\frakm)g(x-\frakm) \bigg]
    = \langle f, g \rangle_{L^2(\mu_\frakm)},
    \end{align*}
    for $f,g\in V^*$. Here $\mu_\frakm$ is the measure obtained by shifting $\mu$ by the mean $\frakm$, i.e. the pushforward of $\mu$ under the map $x\mapsto x-\frakm$. The above quantities are well-defined since $\mu$ is assumed to have finite second moment, which allows for the inclusion $V^* \subset L^2(\mu_\frakm)$.
\end{definition}

One observes that $q$ defines a positive quadratic form on $V^*$, but may fail to be an inner product if $q(f,f)=0$ for $f \neq 0$. An alternative characterization of the functional covariance $q$ is through the so-called covariance operator of $\mu$. This is a natural functional-analytic object to study when we no longer have access to the Gaussian tools from the classical theory of Cameron-Martin spaces.

\begin{definition}\label{defCovarianceOperator}
    We define the covariance operator of $\mu \in \calM_V$ to be the bounded linear operator $\calK : V^* \to V$ defined via
    \begin{equation*}
        \calK f := \int_V xf(x) d\mu_\frakm(x),
    \end{equation*}
    for $f \in V^*$.
\end{definition}

The covariance operator of $\mu \in \calM_V$ is well-defined due to the bound $\|xf(x)\| \leq \|f\|_{V^*} \|x\|^2$, which additionally implies that $\calK$ indeed is a bounded linear operator. The following lemma establishes a useful relationship between the covariance operator $\calK$ and the functional covariance $q$, which will be required in the subsequent analysis. These basic properties are well-known, but we include a short proof here for completeness.

\begin{lemma}\label{lemma_qfg_equals_fCg} 
    Let $\mu \in \calM_V$. The covariance operator $\calK : V^* \to V$ is the unique operator satisfying
    \begin{equation*}
        q(f,g) = f( \calK g),
    \end{equation*}
    for all $f,g\in V^*$. Moreover, $\calK$ is compact, and in particular bounded.
\end{lemma}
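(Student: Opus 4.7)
\medskip

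\noindent\textbf{Proof plan.} The plan is to establish the identity $q(f,g)=f(\calK g)$ as a direct consequence of the defining property of the Bochner integral, deduce uniqueness from the Hahn--Banach separation property of $V^*$, and then obtain compactness of $\calK$ through a weak-$*$-to-norm sequential continuity argument.

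\medskip

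\noindent\textbf{Step 1 (Identity and uniqueness).} First I would recall that for any Bochner-integrable $V$-valued map $\phi$ and any $f\in V^*$, the functional $f$ commutes with the integral: $f\bigl(\int \phi\, d\mu\bigr)=\int f(\phi)\, d\mu$. Applied to $\phi(x)=x\,g(x)$ on the shifted measure $\mu_\frakm$, this yields
\begin{equation*}
    f(\calK g)=\int_V f(x)\,g(x)\,d\mu_\frakm(x)=\langle f,g\rangle_{L^2(\mu_\frakm)}=q(f,g),
\end{equation*}
where the first equality is the pull-through and the last is the given definition of $q$ after the change of variable $y=x-\frakm$. For uniqueness, if another bounded operator $T:V^*\to V$ satisfies $f(Tg)=q(f,g)=f(\calK g)$ for all $f,g\in V^*$, then $f(\calK g-Tg)=0$ for every $f\in V^*$, and Hahn--Banach (which ensures $V^*$ separates points of $V$) forces $\calK g=Tg$ for all $g$.

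\medskip

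\noindent\textbf{Step 2 (Compactness).} For compactness I would show that $\calK$ is sequentially continuous from the weak-$*$ topology on bounded subsets of $V^*$ into the norm topology on $V$. Let $\{f_n\}\subset V^*$ be bounded with $f_n\to f$ in the weak-$*$ sense, so that $f_n(x)\to f(x)$ pointwise and $\sup_n\|f_n\|_{V^*}=:C<\infty$. Then for $\mu_\frakm$-almost every $x$,
\begin{equation*}
    \bigl\|x\bigr\|\,|f_n(x)-f(x)|\le (C+\|f\|_{V^*})\,\|x\|^2,
\end{equation*}
and the right-hand side is integrable since $\mu\in\calM_V$. Dominated convergence therefore gives $\int \|x\|\,|f_n(x)-f(x)|\, d\mu_\frakm\to 0$, which bounds $\|\calK f_n-\calK f\|_V$ by the standard Bochner inequality. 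Because $V$ is separable, the closed unit ball of $V^*$ is weak-$*$ sequentially compact (Banach--Alaoglu together with metrizability of the weak-$*$ topology on bounded sets), so every bounded sequence in $V^*$ admits a weak-$*$ convergent subsequence, which $\calK$ then sends to a norm-convergent sequence in $V$. This is the definition of compactness, and compact operators are bounded.

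\medskip

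\noindent\textbf{Main obstacle.} The only genuinely delicate step is the compactness argument: it hinges on simultaneously exploiting separability of $V$ (to get the weak-$*$ sequential compactness of bounded subsets of $V^*$) and the finite second-moment hypothesis $\mu\in\calM_V$ (to apply dominated convergence with the integrable dominant $\|x\|^2$). The identity and uniqueness are essentially formal once one invokes, respectively, the linearity of the Bochner integral under continuous functionals and Hahn--Banach.
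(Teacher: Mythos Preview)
Your proposal is correct and follows essentially the same route as the paper: the identity via commuting continuous functionals with the Bochner integral, uniqueness via Hahn--Banach, and compactness via Banach--Alaoglu plus dominated convergence with dominant $\|x\|^2$. If anything, you are slightly more careful than the paper in explicitly invoking separability of $V$ to obtain \emph{sequential} weak-$*$ compactness of the unit ball of $V^*$.
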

\begin{proof}
    Suppose that $\calK$ is the covariance operator of $\mu$, and fix $f,g\in V^*$. Using the Bochner integral representation of $\calK$ we obtain that
    \begin{align*}
        q(f,g) 
        &= \int_V f(x)g(x) d\mu_\frakm(x) 
        = \int_V f(xg(x)) d\mu_\frakm(x) 
        = f\bigg(  \int_V xg(x) d\mu_\frakm(x) \bigg)
        = f(\calK g),
    \end{align*}
    where the second to last equality follows from the fact that bounded operators commute with Bochner integrals \citep[see e.g.][Lemma 11.45]{InfiniteDimensionalAnalysisAliprantisBorder}. Conversely, if $q(f,g) = f(\widetilde{\calK} g)$ for all $f,g \in V^*$ and some operator $\widetilde{\calK}$, then $0 = f(\widetilde{\calK} g - \calK g)$. Consequently $\widetilde{\calK} g = \calK g$ by the Hahn-Banach theorem, for each $g\in V^*$.

    As for compactness, suppose that $f_n$ is a bounded sequence in $V^*$, say $\|f_n\|_{V^*} \leq 1$. By Alaoglu's Theorem \cite[Theorem 12.3]{FunctionalAnalysisPeterDLax} there exists a weak*-convergent subsequence $f_{n_k}$ converging to some $f\in V^*$, that is $\lim_k f_{n_k} = f$ pointwise. Since $\|xf_{n_k}(x)\| \leq \|x\|^2$ for all $x\in V$, it follows by the dominated convergence theorem for Bochner integrals \cite[Theorem 11.46]{InfiniteDimensionalAnalysisAliprantisBorder} that
    \begin{align*}
        \lim_k \calK f_{n_k}
        &= \int_V \lim_k  xf_{n_k}(x)d\mu_\frakm(x) = \calK f,
    \end{align*}
    which concludes the proof that $\calK$ is compact.
\end{proof}

\begin{remark}
    An alternative way to define $\calK$ is by using the quadratic form $q$ to first define a linear operator $\calK : V^* \to V^{**}$ via $(\calK f)(g) = q(f,g)$. One then realizes that $\calK f$ actually is an evaluation functional of the vector $\int_V xf(x) d\mu_\frakm(x) \in V$ using the Lemma above, from which the first definition of $\calK$ is recovered.
\end{remark}

\subsection{The Cameron-Martin Space and Extended Covariance Operators}\label{subsecExtendedCamMartSpaces}

A key challenge when working with the covariance operator $\calK: V^* \to V$ is that $\calK$ may be non-injective. We address this by introducing what we term the extended covariance operator, which is injective in the $L^2(\mu_\frakm)$ topology. Our proposed approach of defining Cameron-Martin spaces via this extended covariance operator is to the best of our knowledge novel, and leads to an elegant Gaussian-free approach to variance norms.

\begin{definition} Let $\mu \in \calM_V$. We define the space $\calR_\mu$ to be the closure of $V^*$ in the $L^2(\mu_\frakm)$ topology.
\end{definition}

The space $\calR_\mu$ plays a crucial role throughout this section. As a closed subset of a Hilbert space, $\calR_\mu$ inherits a Hilbert space structure under the $L^2(\mu_\frakm)$ norm. Our goal is to extend $\calK$ to an operator $\calC : \calR_\mu \to V$, where the image $H_\mu = \calC(\calR_\mu)$ will be defined as the Cameron-Martin space of $\mu$.  By an \textit{extension}, we mean that $\mathcal{C}$ coincides with $\mathcal{K}$ on $V^*$. This extension is what enables our subsequent results to apply to empirical measures of a sample, which by definition gives rise to finite-rank, and in particular non-injective, covariance operators. The following proposition shows that by changing topologies from the operator norm on $V^*$ to the $L^2(V, \mu_m)$ topology, we obtain a well-defined extended injective operator. The existence of this extension is not immediately obvious, since the natural estimate $\|f\|_{L^2(\mu_\frakm)}^2 =  \int_V f(x)^2 d\mu_\frakm(x)  \leq \|f\|_{V^*}^2 \int_V \|x\|^2 \mu_\frakm(x)$ goes in the wrong direction.

\begin{proposition}\label{propCalK}
    The covariance operator $\calK : V^* \to V$ extends to a bounded linear operator $\calC : \calR_\mu \to V$, where $\calR_\mu$ is the $L^2(V, \mu_\frakm)$-closure of $V^*$, via the limit
    \begin{align*}
        \calC k &:= \lim_{n\to\infty} \calK f_n = \int_V x k(x) d\mu_\frakm(x),
    \end{align*}
    where $(f_n)_{n=1}^\infty \subset V$ is any sequence converging to $k \in \calR_\mu \subset L^2(V, \mu_\frakm)$. 
\end{proposition}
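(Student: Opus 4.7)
The plan is to sidestep the ``wrong direction'' issue flagged in the excerpt by defining $\calC$ directly through the Bochner integral formula rather than by a $V^*$-density argument, and then reading off the extension and limit characterizations as immediate consequences. Concretely, for any $k \in L^2(V, \mu_\frakm)$ I would set
\[
    \calC k := \int_V x\, k(x)\, d\mu_\frakm(x),
\]
which I claim defines a bounded linear map $L^2(V, \mu_\frakm) \to V$ whose restriction to $\calR_\mu$ gives the sought-after extension.

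The key technical check is that this Bochner integral is well-defined and $L^2(\mu_\frakm)$-continuous. Strong measurability of $x \mapsto x k(x)$ is automatic from the separability of $V$ and the $\mu_\frakm$-measurability of $k$ (via Pettis). Bochner-integrability together with the operator bound both follow from a single Cauchy--Schwarz estimate:
\[
    \Big\| \int_V x\, k(x)\, d\mu_\frakm(x) \Big\|_V \;\leq\; \int_V \|x\|\, |k(x)|\, d\mu_\frakm(x) \;\leq\; M \, \|k\|_{L^2(\mu_\frakm)},
\]
where $M := \bigl(\int_V \|x\|^2 d\mu_\frakm(x)\bigr)^{1/2}$ is finite precisely because $\mu \in \calM_V$. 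Linearity of $\calC$ is inherited from that of the Bochner integral. The identity $\calC|_{V^*} = \calK$ is then immediate from \cref{defCovarianceOperator}, and for any sequence $V^* \ni f_n \to k$ in $L^2(\mu_\frakm)$ the continuity estimate above forces $\calK f_n = \calC f_n \to \calC k$ in $V$, which yields the displayed limit formula and in particular shows the limit is independent of the approximating sequence.

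The main obstacle is exactly the one flagged by the authors: the embedding $V^* \hookrightarrow L^2(\mu_\frakm)$ satisfies $\|f\|_{L^2(\mu_\frakm)} \leq \|f\|_{V^*}\, M$, which is useless if one tries to extend $\calK$ by density in the $V^*$ norm. The resolution is to work in the $L^2(\mu_\frakm)$ topology from the outset; the Cauchy--Schwarz bound above, which is unavailable from the original pointwise estimate $\|x f(x)\| \leq \|f\|_{V^*}\|x\|^2$ used to construct $\calK$, is exactly what gives continuity of $\calC$ in the correct norm and hence the unique bounded extension to the closure $\calR_\mu$.
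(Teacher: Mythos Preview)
Your proof is correct and follows essentially the same route as the paper: both hinge on the Cauchy--Schwarz/H\"older estimate $\int_V \|x\|\,|k(x)|\,d\mu_\frakm(x) \le M\,\|k\|_{L^2(\mu_\frakm)}$ to control the Bochner integral and hence obtain $L^2(\mu_\frakm)$-continuity. The only cosmetic difference is that you define $\calC$ directly on all of $L^2(\mu_\frakm)$ and then restrict, whereas the paper fixes $k\in\calR_\mu$ and shows $\calK f_n \to \int_V x\,k(x)\,d\mu_\frakm(x)$ for any approximating sequence; the underlying estimate is identical.
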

\begin{proof}
Let $k\in \calR_\mu$. Since $V^*$ is dense in $\calR_\mu$, there exists a sequence $f_n\in V^*$ such that $\|f_n-k\|_{L^2(\mu_\frakm)} \to 0$ as $n\to\infty$. By Hölders inequality we have that
    \begin{align*}
        \bigg\| \calK f_n - \int_V x k(x) d\mu_\frakm(x) \bigg\| 
        &\leq \int_V \|x\| \big|(k-f_n)(x)\big| d\mu_\frakm(x) \\
        &= \bigg(\int_V \|x\|^2 d\mu_\frakm(x)\bigg)^\frac{1}{2}   \bigg(\int_V \big|(k-f_n)(x)\big|^2 d\mu_\frakm(x)\bigg)^\frac{1}{2},
    \end{align*}
which goes to 0 as $n\to\infty$. This holds for any such sequence, and the conclusion follows.
\end{proof}

We can now define the Cameron-Martin space of a general measure $\mu \in \calM_V$. The Cameron-Martin space will be a Hilbert space isometrically isomorphic to $\calR_\mu$, whose norm will naturally be given by the covariance-adjusted distance through the extended covariance operator. This will provide the natural generalization of the Mahalanobis distance for any separable Banach space.

\begin{definition}
    Let $\mu \in \calM_V$ with extended covariance operator  $\calC : \calR_\mu \to V$. We define the Cameron-Martin space $H_\mu$ of $\mu$ to be the set $H_\mu := \calC(R_\mu)$.
\end{definition}

\begin{proposition} 
    The operator $\calC : \calR_\mu \to H_\mu$ is invertible. Hence $H_\mu$ is a Hilbert space under the norm
    \begin{align*}
        \| h\|_{H_\mu} := \| \calC^{-1} h \|_{L^2(\mu_\frakm)}, \qquad
        \langle h, l \rangle_{H_\mu} &:= \big\langle \calC^{-1}h, \calC^{-1}l \big\rangle_{L^2(\mu_\frakm)},
    \end{align*}
    where $h,l \in H_\mu$.
\end{proposition}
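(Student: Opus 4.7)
The plan is to establish invertibility of $\calC$ by identifying the surjectivity as built into the definition $H_\mu := \calC(\calR_\mu)$, and then focusing all effort on injectivity. The Hilbert space structure on $H_\mu$ will then be inherited by transport via $\calC^{-1}$ from the Hilbert space $\calR_\mu$ (which is Hilbert as a closed subspace of $L^2(\mu_\frakm)$), and the stated norm and inner product are exactly those making $\calC$ into an isometric isomorphism, so nothing more needs to be verified for that part.

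The heart of the argument, and the main obstacle, is injectivity. My plan is to first prove a key duality identity that extends the relation $f(\calK g) = q(f,g) = \langle f, g\rangle_{L^2(\mu_\frakm)}$ from \cref{lemma_qfg_equals_fCg} to the case where the second argument lies in the closure $\calR_\mu$; explicitly, I would show that for every $f \in V^*$ and every $k \in \calR_\mu$,
\begin{equation*}
    f(\calC k) = \langle f, k \rangle_{L^2(\mu_\frakm)}.
\end{equation*}
To prove this, pick an approximating sequence $f_n \in V^*$ with $\|f_n - k\|_{L^2(\mu_\frakm)} \to 0$. On the left, continuity of $f$ together with the $V$-continuity of $\calC$ guaranteed by \cref{propCalK} gives $f(\calK f_n) \to f(\calC k)$. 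On the right, $\langle f, f_n\rangle_{L^2(\mu_\frakm)} \to \langle f, k\rangle_{L^2(\mu_\frakm)}$ by Cauchy-Schwarz, and for each $n$ the two sides agree by \cref{lemma_qfg_equals_fCg}.

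Injectivity is now immediate: if $\calC k = 0$, the identity shows $\langle f, k\rangle_{L^2(\mu_\frakm)} = 0$ for every $f \in V^*$. Since $V^*$ is dense in $\calR_\mu$ and $k \in \calR_\mu$, picking $f_n \to k$ gives $0 = \langle f_n, k\rangle \to \|k\|_{L^2(\mu_\frakm)}^2$, so $k = 0$. Combined with surjectivity, $\calC : \calR_\mu \to H_\mu$ is a bijection. Defining $\|h\|_{H_\mu} := \|\calC^{-1}h\|_{L^2(\mu_\frakm)}$ and the corresponding inner product pulls back the Hilbert structure from $\calR_\mu$; bilinearity, positive definiteness, and the parallelogram law are automatic, and completeness follows because a Cauchy sequence in $H_\mu$ corresponds under $\calC^{-1}$ to a Cauchy sequence in the complete space $\calR_\mu$, whose limit is mapped back into $H_\mu$ by $\calC$.

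I anticipate the main obstacle to be the duality identity above; it is the only non-tautological step. The subtlety is that $V^* \hookrightarrow L^2(\mu_\frakm)$ is not a bounded embedding in general (as noted in the paragraph preceding \cref{propCalK}), so one cannot simply pass to limits inside inner products on the $V^*$ side. The fix is to pass to limits on the $V$ side via the $L^2$-continuity of $\calC$ already supplied by \cref{propCalK}, and then apply the fixed continuous functional $f$; this is what makes the extension legitimate and the injectivity argument go through.
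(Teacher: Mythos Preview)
Your proposal is correct and follows essentially the same approach as the paper: the paper also reduces injectivity to showing $\langle g, k\rangle_{L^2(\mu_\frakm)}=0$ for all $g\in V^*$ by passing to the limit in $g(\calK f_n)=\langle g,f_n\rangle_{L^2(\mu_\frakm)}$ along $f_n\to k$, then using density of $V^*$ in $\calR_\mu$. The only difference is cosmetic: you isolate the duality identity $f(\calC k)=\langle f,k\rangle_{L^2(\mu_\frakm)}$ as a standalone step, whereas the paper folds it directly into the injectivity argument.
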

\begin{proof}
    We need to prove that $\calC$ is injective. To this end, assume that $\calC k = 0$ for some $k\in \calR_\mu$. By definition there exists a sequence $f_n \in V^*$ such that $f_n \to k$ in $\calR_\mu$. \cref{lemma_qfg_equals_fCg} then implies that
    \begin{align*}
        0 = \lim_n g\big(  \calK f_n \big) = \lim_n \langle g, f_n \rangle_{L^2(\mu_\frakm)} = \langle g, k\rangle_{L^2(\mu_\frakm)},
    \end{align*}
    for all $g\in V^*$. By continuity we obtain that $\langle g, k \rangle_{L^2(\mu_\frakm)} = 0$ for all $g \in \calR_\mu$. Consequently we find that $k=0$ ($\mu_\frakm$-a.e.), which shows that $\calC : \calR_\mu \to H$ is injective. The latter statement of the proposition follows from the Hilbert space structure of $\calR_\mu \subset L^2(V, \mu_\frakm)$ and the linearity of $\calC$.
\end{proof}

The following fundamental result shows that the $\mu$-variance norm $ \| \cdot \|_{\mu\text{-cov}}$ is a genuine norm on a subspace of $V$, and infinite otherwise. More precisely, this subspace is the Cameron-Martin space $H_\mu  \subset V$, and the $\mu$-variance norm coincides with the Cameron-Martin Hilbert norm when restricted to this space. This result provides a solid theoretical foundation for variance-adjusted norms in the general infinite-dimensional setting, bridging the gap between classical infinite-dimensional Gaussian probability theory and the Mahalanobis distance literature. Recall that the $\mu$-variance norm for $x\in V$ is defined as
\[ 
    \|x\|_{\mu\text{-cov}} := \sup_{f \in V^*,\, q(f,f)\leq 1}  f(x),
\]
where $q$ is the functional covariance of $\mu$.

\begin{theorem}\label{TheoremCameronMartinNormDefinitionsCoincide}
    The Cameron-Martin space of $\mu \in \calM_V$ is characterized by
    \begin{align*}
        H_\mu = \{ h\in V : \|h\|_{\mu\text{-cov}} < \infty \}.
    \end{align*}
    Furthermore, the Cameron-Martin norm $\|\cdot\|_{H_\mu}$ and the variance norm $\|\cdot\|_{\mu\text{-cov}}$ coincide on $H_\mu$, or in other words
    \begin{align*}
        \|h\|_{H_\mu} := \| \calC^{-1} h \|_{L^2(\mu_\frakm)} =  \|h\|_{\mu\text{-cov}},
    \end{align*}
    for all $h \in H_\mu$.
\end{theorem}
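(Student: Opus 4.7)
The plan is to leverage the identification $q(f,g) = \langle f,g\rangle_{L^2(\mu_\frakm)}$ for $f,g \in V^*$ from \cref{lemma_qfg_equals_fCg}, so that the constraint $q(f,f) \leq 1$ in the definition of $\|\cdot\|_{\mu\text{-cov}}$ becomes $\|f\|_{L^2(\mu_\frakm)} \leq 1$. The variance norm can then be reinterpreted as the norm of the linear functional $f \mapsto f(h)$ on the dense subspace $V^* \subset \calR_\mu$, and finiteness of this norm is precisely what is needed to apply Riesz representation in the Hilbert space $\calR_\mu$.

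First, I would establish the key identity: for any $k \in \calR_\mu$ and $f \in V^*$,
\[
    f(\calC k) = \langle f, k\rangle_{L^2(\mu_\frakm)}.
\]
This follows by picking a sequence $g_n \in V^*$ with $g_n \to k$ in $L^2(\mu_\frakm)$, so that $\calK g_n \to \calC k$ in $V$ by \cref{propCalK}. Applying $f$, passing to the limit, and using \cref{lemma_qfg_equals_fCg} gives $f(\calC k) = \lim_n f(\calK g_n) = \lim_n \langle f, g_n\rangle_{L^2(\mu_\frakm)} = \langle f, k\rangle_{L^2(\mu_\frakm)}$.

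Next, for the inclusion $H_\mu \subseteq \{h : \|h\|_{\mu\text{-cov}} < \infty\}$ and the norm identity on $H_\mu$: take $h = \calC k$ with $k \in \calR_\mu$. Since the constraint $q(f,f)\leq 1$ is symmetric under $f \mapsto -f$, we have
\[
    \|h\|_{\mu\text{-cov}} = \sup_{\substack{f\in V^*,\\ \|f\|_{L^2(\mu_\frakm)}\leq 1}} |f(h)| = \sup_{\substack{f\in V^*,\\ \|f\|_{L^2(\mu_\frakm)}\leq 1}} |\langle f,k\rangle_{L^2(\mu_\frakm)}|.
\]
Since $V^*$ is dense in $\calR_\mu$, this supremum over $V^*$ equals the same supremum over the unit ball of $\calR_\mu$, which by Cauchy--Schwarz (attained at $f = k/\|k\|_{L^2(\mu_\frakm)}$) equals $\|k\|_{L^2(\mu_\frakm)} = \|\calC^{-1} h\|_{L^2(\mu_\frakm)} = \|h\|_{H_\mu}$.

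For the reverse inclusion, suppose $h \in V$ satisfies $\|h\|_{\mu\text{-cov}} < \infty$. Then $L : V^* \to \bbR$ defined by $L(f) := f(h)$ satisfies $|L(f)| \leq \|h\|_{\mu\text{-cov}} \|f\|_{L^2(\mu_\frakm)}$, so $L$ is continuous with respect to the $L^2(\mu_\frakm)$ topology inherited from $\calR_\mu$. By density of $V^*$ in $\calR_\mu$, $L$ extends uniquely to a bounded linear functional $\tilde L$ on $\calR_\mu$ of norm $\|h\|_{\mu\text{-cov}}$. By the Riesz representation theorem in the Hilbert space $\calR_\mu$, there exists a unique $k \in \calR_\mu$ with $\tilde L(g) = \langle g, k\rangle_{L^2(\mu_\frakm)}$ and $\|k\|_{L^2(\mu_\frakm)} = \|h\|_{\mu\text{-cov}}$. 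Combining with the key identity, for every $f \in V^*$ we obtain $f(h) = \tilde L(f) = \langle f, k\rangle_{L^2(\mu_\frakm)} = f(\calC k)$, and Hahn--Banach forces $h = \calC k \in H_\mu$, with $\|h\|_{H_\mu} = \|k\|_{L^2(\mu_\frakm)} = \|h\|_{\mu\text{-cov}}$.

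The main obstacle, and really the conceptual heart of the argument, is the transition from the bound $|f(h)| \leq \|h\|_{\mu\text{-cov}} \|f\|_{L^2(\mu_\frakm)}$ on $V^*$ to recovering $h$ as an element of $\calC(\calR_\mu)$. The delicate point is that a priori the Riesz representation only yields an element $k \in \calR_\mu$ representing the extended functional, and one must then invoke the identity $f(\calC k) = \langle f, k\rangle_{L^2(\mu_\frakm)}$ together with Hahn--Banach to literally identify $h$ with $\calC k$ in $V$. Everything else is essentially bookkeeping around the density of $V^*$ in $\calR_\mu$ and the definition of the extended operator $\calC$.
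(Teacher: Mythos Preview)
Your proposal is correct and follows essentially the same route as the paper's proof: both directions hinge on the identity $f(\calC k) = \langle f,k\rangle_{L^2(\mu_\frakm)}$, the density of $V^*$ in $\calR_\mu$, and Riesz representation together with Hahn--Banach to recover $h = \calC k$. The only cosmetic difference is that you isolate the key identity as a preliminary step, whereas the paper derives it inline during the first direction.
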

\begin{proof}
     Suppose that $h = \calC k$ for some $k\in \calR_\mu$. We want to show that the variance norm $\|h\|_{\mu\text{-cov}}$ is finite and equal to $\|h\|_{H_\mu}$. To this end, observe that
     \begin{align*}
        \|h\|_{\mu\text{-cov}}
        &= \sup_{f \in V^*,\, q(f,f)\leq 1} f(\calC k) 
        = \sup_{f \in V^*,\, q(f,f)\leq 1}  \langle f, k \rangle_{L^2(\mu_\frakm)} \\
        &= \sup_{l \in \calR_\mu,\, \|l\|_{L^2(\mu_\frakm)}\leq 1} \langle l, k \rangle_{L^2(\mu_\frakm)}
        = \|\calC^{-1}h\|_{L^2(\mu_\frakm)},
     \end{align*}
     where the third equality follows by the fact that $V^*$ is dense in $\calR_\mu$.

     Conversely, assume that $\|x\|_{\mu\text{-cov}} < \infty$ for some $x \in V$. Let $T_x : V^* \to \bbR$ denote the evaluation functional $T_x g = g(x)$. For $g\in V^*$ with $\|g\|_{L^2(\mu_\frakm)}>0$ we have the bound
     \begin{align*}
         | T_x g | &= |g(x)| 
         \leq \|g\|_{L^2(\mu_\frakm)} \sup_{f \in V^*,\, q(f,f)\leq 1}  f(x),
     \end{align*}
     hence $T_x$ extends to a linear operator $\calT_x : \calR_\mu \to \bbR$ by continuity. More specifically, $\calT_x k$ for $k\in\calR_\mu$ can be defined via $\calT_x k := \lim_n T_x f^{(n)} = \lim_n f^{(n)}(x)$ where $f^{(n)} \in V^*$ is any sequence such that $\|k - f^{(n)}\|_{L^2(\mu_\frakm)} \to 0$. The operator norm for a general bounded operator $\calT \in \calR_\mu^*$ is given by
     \begin{align*}
         \|\calT\|_{\calR^*} 
         &:= \sup_{k \in \calR_\mu,\, \|k\|_{L^2(\mu_\frakm)}\leq 1} \calT k 
         \:\:= \sup_{f \in V^*,\, q(f,f)\leq 1} \calT f,
     \end{align*}
     where the last equality follows by the fact that $V^*$ is dense in $\calR_\mu$. Restricting this to extended evaluation functionals $\calT_x$ we obtain that
     \begin{align*}
         \|\calT_x\|_{\calR^*} 
         &= \sup_{f \in V^*,\, q(f,f)\leq 1} f(x) 
         = \|x\|_{\mu\text{-cov}}.
     \end{align*}
     Since $\calT_x \in \calR_\mu^*$ if and only if the operator norm is finite, we may use the fact that $\calR_\mu$ is a Hilbert space to identify $\calT_x$ with an element of $\calR_\mu$ itself, say $k_x$, such that $\calT_x l = \langle l, k_x\rangle$ for all $l\in \calR_\mu$. If $f\in V^*$, then
    \begin{align*}
        f\big( \calC k_x - x\big) 
        = \langle k_x, f\rangle_{L^2(\mu_\frakm)} - f(x) 
        = f(x) - f(x) = 0,
    \end{align*}
    and it follows by Hahn-Banach that $\calC k_x = h$. This concludes the proof.
\end{proof}
\begin{remark}
    In the above theorem we proved that $h \in H_\mu$ if and only if the evaluation functional $T_h$ extends to a continuous linear functional on $\calR_\mu \subset L^2(V, \mu_\frakm)$. This also proves that $H_\mu$ is a reproducing kernel Hilbert space.
\end{remark}

\begin{remark}
    The literature on Gaussian measures and infinite-dimensional Gaussian probability theory is rich in examples of Cameron-Martin spaces and norms. A classical example is the Wiener measure on $C[0,1]$, the space of continuous functions, where the Cameron-Martin space is the set of all absolutely continuous functions with square integrable derivative, with Cameron-Martin norm
    \(
        \|h\|_{H_\mu} = \int_0^1 |\dot{h}(t)|^2 dt.
    \)
    More generally, there exist expressions for Cameron-Martin spaces and norms for Gaussian measures on $C[0,1]$ in the case where the underlying Gaussian process can be written as an integral with respect to Gaussian white noise. We refer to \cite{LifshitsLecturesOnGaussianProcesses} for further details.
\end{remark}

One important property of the classical Mahalanobis distance on $\bbR^d$ is invariance with respect to non-singular linear transformations of the data. The infinite-dimensional case is more difficult, as \citet{2020FunctionalMahalanobis} noted in the special case $V=L^2[0,1]$ in the Hilbert space setting of functional data analysis. They were able to prove that their functional Mahalanobis distance is invariant with respect to unitary transformations of the data. Using our proposed framework based on variance norms, we are able to extend this result to the Banach space setting for general invertible bounded linear operators. The following proposition comes as a natural consequence of the Cameron-Martin perspective we take in this paper, with a short and elegant proof.

\begin{proposition}\label{theoremInvarianceUnderBoundedOperators}
    The $\mu$-variance norm is invariant under bounded invertible linear transformations of the data. More specifically, if $A:V\to V$ is an invertible bounded linear operator, and $\nu = \mu \circ A^{-1}$, then $\|Ax\|_{\nu\text{-cov}} = \|x\|_{\mu\text{-cov}}$ for all $x\in V$.
\end{proposition}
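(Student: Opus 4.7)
The plan is to reduce the invariance statement to a change of variables in the supremum defining $\|\cdot\|_{\nu\text{-cov}}$, with the bijection being implemented by the Banach-space adjoint $A^* : V^* \to V^*$. Since $A : V \to V$ is a bounded invertible operator, the open mapping theorem gives that $A^{-1}$ is bounded, and hence $A^*$ is bounded and invertible with $(A^*)^{-1} = (A^{-1})^*$. In particular $f \mapsto A^* f$ is a bijection of $V^*$ onto itself.

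First I would identify the mean of $\nu = \mu \circ A^{-1}$. Using that bounded linear operators commute with Bochner integrals (as cited in the proof of \cref{lemma_qfg_equals_fCg}), one has
\[
\frakm_\nu = \int_V y\, d\nu(y) = \int_V A x \, d\mu(x) = A \frakm_\mu.
\]
Next I would rewrite the functional covariance of $\nu$ in terms of that of $\mu$. For $f \in V^*$, applying the change of variables $y = Ax$ and the defining identity $f(Ax) = (A^* f)(x)$ yields
\[
\cov^\nu[f,f] = \E^{y \sim \nu}\bigl[(f(y) - f(\frakm_\nu))^2\bigr] = \E^{x \sim \mu}\bigl[((A^* f)(x) - (A^* f)(\frakm_\mu))^2\bigr] = \cov^\mu[A^* f, A^* f].
\]

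Then I would plug this into the definition of the variance norm at $Ax$, namely
\[
\|Ax\|_{\nu\text{-cov}} = \sup_{f \in V^*,\, \cov^\nu[f,f] \leq 1} f(Ax) = \sup_{f \in V^*,\, \cov^\mu[A^* f, A^* f] \leq 1} (A^* f)(x),
\]
and perform the substitution $g = A^* f$. Because $A^*$ is a bijection of $V^*$, the constraint set $\{f : \cov^\mu[A^* f, A^* f] \leq 1\}$ maps exactly onto $\{g \in V^* : \cov^\mu[g,g] \leq 1\}$, so the supremum becomes $\sup_{g \in V^*,\, \cov^\mu[g,g]\leq 1} g(x) = \|x\|_{\mu\text{-cov}}$, which is the claim.

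There is no serious obstacle; the only subtle point is ensuring that the bijection of the constraint sets is genuine, which is precisely where the invertibility of $A^*$ (and hence of $A$) is used. If $A$ were merely bounded and not invertible, the image $A^*(V^*)$ could be a proper subspace and the supremum over it would be $\leq \|x\|_{\mu\text{-cov}}$, not equal, so invertibility is what upgrades the inequality to equality.
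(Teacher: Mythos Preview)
Your proof is correct and follows essentially the same route as the paper's own argument: a change of variables identifies $\cov^\nu[f,f]$ with $\cov^\mu[A^*f,A^*f]$ (the paper writes $f\circ A$ in place of $A^*f$), and then the bijectivity of $A^*$ on $V^*$ lets you reindex the supremum. Your version is slightly more explicit about the mean computation and the role of the open mapping theorem, but the core idea is identical.
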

\begin{proof}
    Denote by $q^\nu$ and $q$ the functional covariance of $\nu$ and $\mu$ respectively. First, we observe by change of variables that
    \begin{align*}
        q^\nu(f, f) = \int f(x- A\frakm)^2 d\nu = \int f(Ax-A\frakm)^2 d\mu = q(f \circ A, f \circ A).
    \end{align*}
    Next, note that $\{g\in V^* : g = f \circ A \} = V^*$, which follows from the fact that the adjoint operator $A^*$ is invertible if and only if $A$ is. Combining the above, we obtain that
    \begin{align*}
        \|Ax\|_{\nu\text{-cov}} = \sup_{f \in V^*,\, q(f\circ A, f\circ A)\leq 1}  f(Ax) = \sup_{g \in V^*,\, q(g, g)\leq 1}  g(x) = \|x\|_{\mu\text{-cov}}.
    \end{align*} \qedhere
\end{proof}

\subsection{Mahalanobis Distance and Conformance Score}\label{subsecMahalConf}

Having introduced the necessary theoretical background in the previous subsections, we are now ready to define the Mahalanobis distance on any separable Banach space $V$. We claim that a natural definition of an anomaly distance on $V$ with respect to a law $\mu$ is the $\mu$-variance norm of a new sample $x$ against the mean $\frakm$, as outlined in the following definition:

\begin{definition}\label{defMahalanobisDistance}
    We define the Mahalanobis distance $d_M(x;\mu)$ of the element $x \in V$ with respect to the measure $\mu \in \calM_V$ to be
    \begin{equation*}
        d_M(x;\mu) := \|x-\frakm\|_{\mu\text{-cov}},
    \end{equation*}
    where $\frakm$ is the mean of $\mu$.
\end{definition}

For real world applications, $\mu$ is often unknown, and an estimator has to be used. This fits naturally within our proposed framework via working with empirical measures: Given a corpus of data $\{x_1, ..., x_N\} \subset V$, the empirical measure of the data is given by $\mu^N = \frac{1}{N}\sum_{i=1}^N \delta_{x_i}$, leading to the sample Mahalanobis distance $d_M(\cdot, \mu^N)$. The following examples demonstrate that our definition coincides with, and in fact extends the Mahalanobis distance in $\bbR^d$ to random variables with possibly degenerate covariance matrices $\Sigma$. Furthermore, the estimator of the Mahalanobis distance will be given by the case where $\mu$ is the empirical measure of the underlying data.

\begin{example}[\textbf{Finite-Dimensional Case}]\label{exampleVariancenormRd}
    Let $\mu$ be a measure on $\mathbb{R}^d$ with covariance matrix $\Sigma = \E^{x \sim \mu}[(x-\frakm)(x-\frakm)^T]$ and mean $\frakm = \E^{x \sim \mu}[x]$. The functional covariance $q$ with respect to $\mu$ is
    \begin{align*}
        q(a, b)
        &= \E^{x \sim \mu}\bigg[ \langle a, x-\frakm\rangle \langle b, x-\frakm\rangle \bigg] 
        =\E^{x \sim \mu} \bigg[ \sum_{i=1}^n\sum_{j=1}^n a_i b_j (x-\frakm)_i(x-\frakm)_j\bigg] \\
        &= \sum_{i=1}^n\sum_{j=1}^n  a_i b_j \Sigma_{i,j}
        = \langle a, \Sigma b \rangle = \langle \Sigma a, b \rangle = a^T \Sigma b,
    \end{align*}
    for all $a,b\in \bbR^d$. \cref{lemma_qfg_equals_fCg} implies that the covariance operator of $\mu$ is simply $\Sigma$, from which \cref{theoremHilbertCameronMartin} gives that the Cameron-Martin space is $H_\mu = \operatorname{Im}(\Sigma)$.  Therefore, by \cref{TheoremCameronMartinNormDefinitionsCoincide}, the $\mu$-variance norm, and thus the Mahalanobis distance, is:
    \begin{align*}
        \|x-\mathfrak{m}\|_{\mu\text{-cov}}^2 
        = \sup_{a \in \bbR^d} \frac{\langle a, x-\frakm\rangle^2}{a^T\Sigma a} 
        = \begin{cases}
            (x-\mathfrak{m})^T \Sigma^{-1} (x-\mathfrak{m}) & \text{if } x-\mathfrak{m} \in \operatorname{Im}(\Sigma), \\
            +\infty & \text{otherwise.}
        \end{cases}
    \end{align*}
    In the specific case of an empirical measure $\mu^N = \frac{1}{N}\sum_{i=1}^N \delta_{x_i}$, $\Sigma$ becomes the sample covariance matrix $\hat{\Sigma}$, and $\mathfrak{m}$ becomes the sample mean $\hat{\mathfrak{m}}$.  This recovers the classical (potentially degenerate) Mahalanobis distance used in finite dimensions \citep{2023LyonsShao}.
\end{example}

While the classical Mahalanobis distance is a widely used metric for outlier detection, measuring distance to the mean may perform poorly in high dimensions. For instance, consider i.i.d.\ standard Gaussian data in $\bbR^d$: the covariance operator in this case is the identity, so the Mahalanobis distance reduces to the Euclidean norm, which concentrates around the sphere of radius $\sqrt{d}$. As a result, the likelihood of a new normal sample being close to the origin is very small if $d$ is large. Consequently, if the Mahalanobis distance is used directly as an anonmaly score, such samples may incorrectly be classified as being outliers. An alternative approach in such scenarios is to use the $k$-nearest-neighbour distance to the normal corpus \citep{2004OutlierDetectionKNN, 2011MahalanobisDistanceKNNSemiconductor, 2023LyonsShao}. This approach requires choosing a metric for calculating the nearest-neighbours. In the finite-dimensional setting the Euclidean, Minkowski, Manhattan, or even the Mahalanobis distance itself are commonly used. \citet{2023LyonsShao} coined the term \textit{conformance score} for the case when the Mahalanobis distance is used in conjunction with the $1$-nearest-neighbour Mahalanobis distance \citep[see also][]{2011MahalanobisDistanceKNNSemiconductor}. Below, we generalize this notion to the Banach space setting for laws $\mu \in \calM_V$. Note that the notion of variance norm by \citet{2023LyonsShao} is restricted to empirical measures only, while our unified framework considers any law $\mu$. In \cref{sectionComputingVarNorms,sectionApplicationsToTimeSeries}, we derive computational formulas for the infinite-dimensional conformance score and evaluate these anomaly metrics in the context of time series novelty detection.

\begin{definition}\label{defConformanceScore}
    Let $\mu \in \calM_V$ for a Banach space $V$, and let $\{x_1, \cdots, x_N\} \subset V$ be a corpus of observed data. We define the conformance score $d_C(x; \mu)$ of $x$ with respect to $\mu$ and the corpus as
    \begin{equation*}
        d_C(x; \mu) := \min_{1\leq i \leq N}\|x-x_i\|_{\mu\text{-cov}}.
    \end{equation*}
\end{definition}

The following proposition extends \cref{exampleVariancenormRd} to the case of empirical measures on a Banach space. In this setting, the Cameron-Martin space $H_{\mu^N}$ associated with the empirical measure $\mu^N$ is finite-dimensional. Nevertheless, a challenge lies in the fact the variance norm depends non-trivially on all of $V^*$, which is infinite-dimensional. The result also establishes the basis-independence of the variance norm with respect to the basis in which the data is observed. This extends the results of \citet{2023LyonsShao} from the finite-dimensional setting to the Banach space setting.

\begin{proposition}\label{theoremEmpiricalBanachVarianceNorm}
    Let $\mu^N = \frac{1}{N}\sum_{i=1}^N \delta_{x_i}$ be an empirical measure. Write $y_i := x_i-\widehat{\frakm}$, $i \in \{1, \cdots, N\}$ for the centered data, where $\widehat{\frakm} = \frac{1}{N}\sum_{i=1}^N x_i$ is the empirical mean. Then the following statements hold:
    \begin{enumerate}[label=(\roman*)]
        \item The Cameron-Martin space is $H_{\mu^N} = \operatorname{span}\{y_1, \cdots, y_N\}$.
        \item Let $\{e_1, \cdots, e_M\}$ be a basis of $H_{\mu^N}$, and $A: V \to H_{\mu^N}$ be a surjective projection. Denote by $a^{(x)} \in \bbR^M$ for $x\in V$ the coordinates of $Ax$ with respect to said basis, that is $Ax = \sum_{i=1}^M a^{(x)}_i e_i$. Then the $\mu^N$-variance norm is given by
        \begin{equation*}
            \|x\|_{\mu^N\text{-cov}} 
            =\begin{cases}
                (a^{(x)})^T \Sigma^{-1} a^{(x)} & \text{if } a^{(x)} \in \operatorname{Im}(\Sigma), \\
                +\infty & \text{otherwise,}
            \end{cases}
        \end{equation*}
        where $\Sigma \in \bbR^{M\times M}$ is the empirical covariance matrix of the coordinates $a^{(y_1)}, \cdots, a^{(y_N)}$.
    \end{enumerate}
\end{proposition}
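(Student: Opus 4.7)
The plan is to handle part (i) by characterising when the variance norm is finite, and then reduce part (ii) to a finite-dimensional quadratic optimisation via a coordinate parameterisation of dual functionals.

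For part (i), I would start by computing the functional covariance of $\mu^N$: since $\mu^N_{\widehat{\frakm}}$ assigns equal mass $1/N$ to each $y_i$, we obtain $q(f,g) = \frac{1}{N}\sum_{i=1}^N f(y_i)g(y_i)$ for $f,g \in V^*$. In particular $q(f,f) = 0$ if and only if $f$ vanishes on $H := \operatorname{span}\{y_1, \ldots, y_N\}$, which is finite-dimensional and hence closed in $V$. By \cref{TheoremCameronMartinNormDefinitionsCoincide}, it suffices to show $\|x\|_{\mu^N\text{-cov}} < \infty$ if and only if $x \in H$. For $x \notin H$, Hahn--Banach (applied to the quotient $V/H$) yields some $g \in V^*$ with $g|_H \equiv 0$ and $g(x) \neq 0$; then for any $f \in V^*$ and $\lambda \in \bbR$ we have $q(f+\lambda g, f+\lambda g) = q(f,f)$ while $(f+\lambda g)(x) \to +\infty$, forcing $\|x\|_{\mu^N\text{-cov}} = +\infty$. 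For $x = \sum_i c_i y_i \in H$, Cauchy--Schwarz gives $f(x) \leq \|c\|\sqrt{N\,q(f,f)}$, proving finiteness.

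For part (ii), my plan is to introduce the coordinate map $T: V^* \to \bbR^M$ defined by $Tf := (f(e_1), \ldots, f(e_M))$. Since $\{e_j\}_{j=1}^M$ is linearly independent in $V$, Hahn--Banach supplies dual functionals biorthogonal to the basis, so $T$ is surjective. For any $h \in H_{\mu^N}$ with coordinates $a^{(h)}$, linearity gives $f(h) = (a^{(h)})^T(Tf)$, so in particular $q(f,f) = \frac{1}{N}\sum_{i=1}^N\bigl((a^{(y_i)})^T Tf\bigr)^2 = (Tf)^T \Sigma\, (Tf)$. Restricting to $x \in H_{\mu^N}$ (the only case where the norm is finite by part (i), so $Ax = x$ after noting $A$ is the identity on its image) and substituting $v = Tf$ turns the defining supremum into the finite-dimensional problem
\[
\|x\|_{\mu^N\text{-cov}} = \sup_{v \in \bbR^M,\, v^T \Sigma v \leq 1} v^T a^{(x)},
\]
which is precisely the degenerate Mahalanobis computation from \cref{exampleVariancenormRd}. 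Elementary Lagrangian analysis then yields the displayed formula: on $\operatorname{Im}(\Sigma)$ the maximiser lies in a compact ellipsoid and gives the stated quadratic form, whereas if $a^{(x)}$ has a component in $\ker \Sigma$ one can scale $v$ along that direction without changing $v^T \Sigma v$, producing the $+\infty$ branch.

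The main obstacle is the reduction step in part (ii): one must verify that both the objective $f(x)$ and the constraint $q(f,f)$ depend on $f$ only through $Tf$, which fundamentally uses $x \in H_{\mu^N}$. Otherwise there exist $g \in \ker T$ with $g(x) \neq 0$ that break the reduction --- this is precisely the mechanism responsible for the $+\infty$ alternative from part (i). Once this reduction is in place, the rest is standard constrained quadratic optimisation, and the basis-independence claim follows automatically: a change of basis $e'_j = \sum_k P_{kj} e_k$ transforms $a^{(x)} \mapsto P^{-1} a^{(x)}$ and $\Sigma \mapsto P^{-1}\Sigma(P^{-1})^T$, leaving the quadratic form $(a^{(x)})^T \Sigma^{-1} a^{(x)}$ invariant.
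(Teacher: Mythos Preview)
Your proposal is correct and follows essentially the same approach as the paper: parameterise $V^*$ by the coordinate map $f \mapsto (f(e_1),\ldots,f(e_M))$, use Hahn--Banach to see this map is surjective onto $\bbR^M$, rewrite both $f(x)$ and $q(f,f)$ in coordinates, and reduce to the finite-dimensional case of \cref{exampleVariancenormRd}. The only notable difference is the ordering: the paper establishes $H_{\mu^N}\subset\operatorname{span}\{y_i\}$ first via the finite-rank structure of $\calK$ (so $\calC(\calR_{\mu^N})=\calK(V^*)$), then runs the coordinate reduction to obtain (ii), and finally reads off the equality in (i) from the characterisation $a^{(x)}\in\operatorname{Im}(\Sigma)\Leftrightarrow x\in\operatorname{span}\{y_i\}$; you instead prove (i) directly by a Hahn--Banach/Cauchy--Schwarz argument and then invoke it in (ii). Your route to (i) is slightly more self-contained in that it avoids the extended operator $\calC$, while the paper's route has the aesthetic advantage that (i) falls out of (ii) with no extra work.
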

\begin{proof}
    We begin by proving $(ii)$, from which $(i)$ will follow. Since the covariance operator $\calK : V^* \to V$ is a finite rank operator, we have that $\calC(\calR_{\mu^N}) = \calK(V^*)$, where $\calC : \calR_{\mu^N} \to V$ is the extended covariance operator. Hence it follows from the expression
    \begin{equation*}
        \calK f = \frac{1}{N} \sum_{i=1}^N y_i f(y_i),
    \end{equation*}
    that $H_{\mu^N} = \calK(V^*) \subset \operatorname{span}\{y_1, \cdots, y_N\}$, which by \cref{TheoremCameronMartinNormDefinitionsCoincide} implies that $\|x\|_{\mu^N\text{-cov}}$ is infinite for all $x \notin \operatorname{span}\{y_1, \cdots, y_N\}$. Consequently we will only need to consider this finite span in the subsequent analysis.

    Next, observe that by writing $y_i = \sum_{m=1}^M a^{(y_i)}_m e_m$, we obtain the following expression for the functional covariance of $\mu^N$ for all $f\in V^*$
    \begin{align*}
        q(f,f) 
        &= \frac{1}{N}\sum_{i=1}^N f(y_i)^2 
        = \frac{1}{N}\sum_{i=1}^N \bigg(  \sum_{m=1}^M a^{(y_i)}_m f(e_m) \bigg)^2 
        = \frac{1}{N}\sum_{i=1}^N \langle a^{(y_i)}, b\rangle^2_{\bbR^M} 
        = \langle b, \Sigma b\rangle_{\bbR^M},
    \end{align*}
     with $b \in \bbR^M$ given by $b_m = f(e_m)$, and where $\Sigma \in \bbR^{M\times M}$ is the empirical covariance matrix of the coordinates $a^{(y_1)}, \cdots, a^{(y_N)}$. Conversely, if $b \in \bbR^M$ is fixed, then $f_b(x) := \langle a^{(x)}, b\rangle_{\bbR^M}$ defines a continuous linear functional on $\operatorname{span}\{y_1, \cdots, y_N\}$, which extends continuously to $V$ via Hahn-Banach. Consequently we obtain that
    \begin{align}\label{eqBanachEmpiricalVarianceNormEquation}
        \|x\|_{\mu^N\text{-cov}}^2
        &= \sup_{f \in V^* } \frac{f(x)^2}{q(f,f)} 
        = \sup_{b \in \bbR^M } \frac{\langle a^{(x)}, b\rangle^2_{\bbR^M}}{ \langle b, \Sigma b\rangle_{\bbR^M} }  
        =  \begin{cases}
            (a^{(x)})^T \Sigma^{-1} a^{(x)} & \text{if } a^{(x)} \in \operatorname{Im}(\Sigma), \\
            +\infty & \text{otherwise,}
            \end{cases}
    \end{align}
    where the last equality follows from \cref{exampleVariancenormRd}. The equality $H_\mu = \operatorname{span}\{y_1, \cdots, y_N\}$ then follows by \cref{TheoremCameronMartinNormDefinitionsCoincide} since $a^{(x)} \in \operatorname{Im}(\Sigma)$ if and only if $x \in \operatorname{span}\{y_1, \cdots, y_N\}$, which proves $(i)$.
\end{proof}

We want to stress that the choice of basis and projection map is purely for computational convenience, and will lead to the same result since the definition of the variance norm is basis-independent. The following corollary is a direct consequence of \cref{eqBanachEmpiricalVarianceNormEquation}, and relates the $\mu$-variance norm with respect to empirical measures on Banach spaces to the classical Mahalanobis distance in $\bbR^M$.

\begin{corollary}\label{corrVarNormBanachEmpirical}
    Under the assumptions of \cref{theoremEmpiricalBanachVarianceNorm}, we have that
    \begin{equation*}
        \|x\|_{\mu^N\text{-cov}} = \|a^{(x)}\|_{\nu-\text{cov}},
    \end{equation*}
    where $\nu = \frac{1}{N}\sum_{i=1}^N\delta_{a^{(y_i)}}$ is an empirical measure on $\bbR^M$.
\end{corollary}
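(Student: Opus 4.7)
The plan is to observe that this corollary is essentially a dictionary-translation between two already-derived formulas: the expression for $\|x\|_{\mu^N\text{-cov}}$ obtained in \cref{theoremEmpiricalBanachVarianceNorm}(ii), and the finite-dimensional Mahalanobis formula recorded in \cref{exampleVariancenormRd}. No new analytic work is required; the argument is a verification that the two sides of the claimed equality are computed by the same matrix $\Sigma$.

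First, I would invoke \cref{theoremEmpiricalBanachVarianceNorm}(ii) directly, which expresses $\|x\|_{\mu^N\text{-cov}}^2$ either as $(a^{(x)})^T \Sigma^{-1} a^{(x)}$ when $a^{(x)} \in \operatorname{Im}(\Sigma)$, or as $+\infty$ otherwise, where $\Sigma \in \bbR^{M\times M}$ is the empirical covariance matrix of the coordinate vectors $a^{(y_1)}, \dots, a^{(y_N)}$.

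Next, I would apply \cref{exampleVariancenormRd} to the empirical measure $\nu = \frac{1}{N}\sum_{i=1}^N \delta_{a^{(y_i)}}$ on $\bbR^M$. Two small checks are needed here: (a) that the sample mean of $\nu$ vanishes, and (b) that the sample covariance matrix of $\nu$ coincides with the matrix $\Sigma$ above. For (a), since $A$ is linear and $\sum_i y_i = 0$ by the definition $y_i = x_i - \widehat{\frakm}$, we get $\frac{1}{N}\sum_i a^{(y_i)} = 0$ in $\bbR^M$. For (b), the sample covariance of $\nu$ is by definition $\frac{1}{N}\sum_i a^{(y_i)} (a^{(y_i)})^T$, which is exactly $\Sigma$. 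Then \cref{exampleVariancenormRd} yields
\[
    \|a^{(x)}\|_{\nu\text{-cov}}^2 = \begin{cases} (a^{(x)})^T \Sigma^{-1} a^{(x)} & \text{if } a^{(x)} \in \operatorname{Im}(\Sigma), \\ +\infty & \text{otherwise,} \end{cases}
\]
which matches the expression for $\|x\|_{\mu^N\text{-cov}}^2$ case by case, proving the equality.

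The only conceivable obstacle is making sure the interpretation of $a^{(y_i)}$ is consistent on both sides of the equality --- that is, that the coordinates of the projected vectors $Ay_i$ appearing in the Banach-space theorem are literally the atom locations of $\nu$. This is immediate from the fact that $y_i \in H_{\mu^N}$ (by part (i) of the previous proposition) so $A y_i = y_i$, and hence $a^{(y_i)}$ has an unambiguous meaning. With this bookkeeping in place, the corollary follows as a one-line reduction.
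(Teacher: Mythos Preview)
Your proposal is correct and matches the paper's own reasoning: the paper simply says the corollary ``is a direct consequence of \eqref{eqBanachEmpiricalVarianceNormEquation}'', i.e.\ of the intermediate equality $\|x\|_{\mu^N\text{-cov}}^2 = \sup_{b\in\bbR^M} \langle a^{(x)},b\rangle^2/\langle b,\Sigma b\rangle$ derived inside the proof of \cref{theoremEmpiricalBanachVarianceNorm}, and your checks (a) and (b) are exactly what is needed to identify that supremum as $\|a^{(x)}\|_{\nu\text{-cov}}^2$ via \cref{exampleVariancenormRd}. The bookkeeping remark that $Ay_i=y_i$ is a nice touch but not strictly necessary, since the paper's $\Sigma$ is defined in terms of the $a^{(y_i)}$ from the outset.
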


While \cref{corrVarNormBanachEmpirical} provides a way to compute variance norms using coordinates relative to a basis of the Cameron-Martin space $H_{\mu^N}$, the construction of such a basis depends heavily on the structure of the underlying space $V$. This is evident, for example, in the functional $L^2[0,1]$ Mahalanobis literature, where the Mahalanobis distance for a $d$-dimensional time series of length $T$ is computed by flattening the data and applying the standard Mahalanobis distance in $\bbR^{Td}$ \citep{2015FunctionalMahalanobisTruncated, 2020FunctionalMahalanobis}. From the perspective of \cref{theoremEmpiricalBanachVarianceNorm}, this corresponds to constructing a basis for the discretized paths. This approach works well due to the specific structure of the $L^2[0,1]$ inner product. However, it does not generalize to settings where the geometry of $V$ is fundamentally different. In such cases, it is unclear how to obtain a tractable algorithm without explicitly relying on a Hilbert space structure to facilitate computations.

\section{Specialization to Hilbert Spaces}\label{sectionHilbert}

In this section we specialize to the case where $V$ is a Hilbert space with inner product $\langle \cdot, \cdot \rangle$. In doing so, we are able to diagonalize the covariance operator $\calK$ of $\mu \in \calM_V$, to express the variance norm and Cameron-Martin space in terms of the eigenvalues of $\calK$. This generalizes the results of \cite{2020FunctionalMahalanobis} from the setting $L^2[0,1]$ of functional data analysis to any separable Hilbert space, without any assumptions of continuity of a stochastic process or injectivity of covariance operator. This generalization is necessary to obtain a theory consistent for empirical measures, which by definition gives rise to non-injective covariance operators, and to obtain a general algorithm for computations which takes into account the infinite-dimensional properties of the chosen space $V$.

\subsection{Hilbert Space Characterization}\label{subsectionTheoryHilbert}

Recall from \cref{lemma_qfg_equals_fCg} that
\(
    \langle g, \calK f \rangle = q(g,f) = q(f,g) = \langle f, \calK g \rangle,
\)
for all $f,g\in V$, and that $\calK$ is compact. Consequently, this implies that $\calK$ is a symmetric, positive, compact operator, hence by the spectral theorem \cite[see e.g.][Theorem 28.3]{FunctionalAnalysisPeterDLax} there exists an orthonormal sequence of eigenvectors $(e_n)_{n=1}^\infty$ and non-negative eigenvalues $(\lambda_n)_{n=1}^\infty$ such that
\begin{equation}\label{eqDiagonalizeCovarianceOperator}
\calK f = \sum_{n=1}^\infty \lambda_n \langle f, e_n \rangle e_n, \quad \forall f \in V.
\end{equation}
If $V$ is finite-dimensional, we instead replace $(e_n)_{n=1}^\infty$ by a finite collection. Our ultimate goal is to derive an explicit computational formula for the variance norm $\|\cdot\|_{H_\mu}$. To achieve this, we first need to characterize the Cameron-Martin space $H_\mu$, as \cref{TheoremCameronMartinNormDefinitionsCoincide} states that this is the subspace of $V$ where the variance norm is finite. Understanding $H_\mu$ will also be crucial when we later consider empirical measures constructed from observed data in \cref{sectionComputingVarNorms}.

\begin{theorem}\label{theoremHilbertCameronMartin}
    Let $\mu \in \calM_V$ for a Hilbert space $V$, and let $(e_n)_{n=1}^\infty$ and $(\lambda_n)_{n=1}^\infty$ be orthonormal eigenvectors and eigenvalues that diagonalize the covariance operator $\calK : V \to V$. Then the Cameron-Martin space $H_\mu$ is given by
    \begin{align*}
         H_\mu 
         &= \bigg\{ h\in V : \sum_{n=1,\, \lambda_n\neq0}^\infty \frac{\langle h, e_n \rangle^2}{\lambda_n} < \infty  \text{ and } \bigg( \forall n\geq 1,\,  \lambda_n=0 \implies \langle h, e_n\rangle = 0\bigg) \bigg\} 
         = \calK^{\frac{1}{2}}(V), \nonumber
    \end{align*}
    where $\calK^{\frac{1}{2}}$ is the square root operator of $\calK$. The variance norm is given by
    \begin{equation}\label{eqHilbertSqrtKNorm}
        \|h\|_{\mu\text{-cov}}^2 =
        \begin{cases}
            \|\calK^{-\frac{1}{2}} h \|^2 = 
            \sum_{n=1,\, \lambda_n\neq0}^\infty \frac{\langle h, e_n \rangle^2}{\lambda_n} & \text{if } h \in \calK^{\frac{1}{2}}(V), \\ 
            \infty & \text{otherwise.}
        \end{cases}
    \end{equation}
\end{theorem}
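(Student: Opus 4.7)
The plan is to apply \cref{TheoremCameronMartinNormDefinitionsCoincide}, which gives $H_\mu = \calC(\calR_\mu)$ and $\|h\|_{H_\mu} = \|\calC^{-1}h\|_{L^2(\mu_\frakm)}$. Since $V$ is a Hilbert space, I identify $V^* \cong V$ via Riesz and use the spectral data $(e_n, \lambda_n)$ to construct an explicit orthonormal basis of $\calR_\mu$. Pushing this basis through $\calC$ then yields a concrete description of $H_\mu$, which I match with the spectral formula for $\calK^{1/2}(V)$ and read off the norm.

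The first step is to build an orthonormal basis of $\calR_\mu$ from the $e_n$. Viewing each eigenvector as a functional on $V$, \cref{lemma_qfg_equals_fCg} gives $\langle e_n, e_m\rangle_{L^2(\mu_\frakm)} = q(e_n, e_m) = \langle e_n, \calK e_m\rangle = \lambda_m \delta_{nm}$. In particular, eigenvectors with $\lambda_n = 0$ vanish in $L^2(\mu_\frakm)$, while $\tilde{e}_n := e_n/\sqrt{\lambda_n}$ for $\lambda_n \neq 0$ form an orthonormal system there. For any $f = \sum_n c_n e_n \in V$, the image in $L^2(\mu_\frakm)$ is $\sum_{\lambda_n \neq 0} (c_n \sqrt{\lambda_n})\tilde{e}_n$, with the partial sums converging since $\sum_n c_n^2 \lambda_n \leq \|\calK\|\,\|f\|^2$. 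Conversely, any target $\sum_{\lambda_n \neq 0} d_n \tilde{e}_n$ with $(d_n) \in \ell^2$ is approximated in $L^2(\mu_\frakm)$ by the image of $f_N = \sum_{n \leq N,\, \lambda_n \neq 0} (d_n/\sqrt{\lambda_n}) e_n \in V$. Hence $\{\tilde{e}_n : \lambda_n \neq 0\}$ is an orthonormal basis of $\calR_\mu$.

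Finally I apply $\calC$. Since $\tilde{e}_n \in V$, \cref{propCalK} gives $\calC(\tilde{e}_n) = \calK(\tilde{e}_n) = \sqrt{\lambda_n} e_n$, so by linearity and continuity $\calC(\sum_n d_n \tilde{e}_n) = \sum_n d_n \sqrt{\lambda_n} e_n$ for any $(d_n) \in \ell^2$. Parametrizing the image by $h = \sum_n h_n e_n$ with $h_n := d_n \sqrt{\lambda_n}$, the $\ell^2$-condition on $(d_n)$ translates into $\sum_{\lambda_n \neq 0} h_n^2 / \lambda_n < \infty$, and the constraint $h_n = 0$ when $\lambda_n = 0$ follows from the fact that $\calC(\calR_\mu)$ lies in the closed span of $\{e_n : \lambda_n \neq 0\}$. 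Comparing with the spectral definition $\calK^{1/2} f = \sum_n \sqrt{\lambda_n}\langle f, e_n\rangle e_n$ then identifies $H_\mu = \calK^{1/2}(V)$, and the norm formula $\|h\|_{H_\mu} = \|\calC^{-1}h\|_{L^2(\mu_\frakm)} = \|(d_n)\|_{\ell^2} = \|\calK^{-1/2}h\|$ drops out immediately. The main subtlety in executing this plan is the careful bookkeeping around kernel eigenvectors: they contribute nothing to $\calR_\mu$ yet impose the orthogonality condition that carves out $H_\mu$ inside $V$.
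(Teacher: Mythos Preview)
Your proof is correct and organizationally different from the paper's. The paper argues by double inclusion: for $h\in H_\mu$ it takes any sequence $f^{(n)}\to \calC^{-1}h$ in $\calR_\mu$, uses the identity $\|f\|_{L^2(\mu_\frakm)}^2=\langle f,\calK f\rangle=\|\calK^{1/2}f\|^2$, and applies Fatou's lemma to bound $\|h\|_{H_\mu}^2$ below by the spectral sum; for the reverse inclusion it constructs the explicit approximants $f^{(n)}=\sum_{j\leq n,\,\lambda_j\neq 0}\lambda_j^{-1}\langle h,e_j\rangle e_j$ and checks they are Cauchy in $L^2(\mu_\frakm)$. Your route instead diagonalizes $\calC$ at the outset: you build the orthonormal basis $\{\tilde e_n=\lambda_n^{-1/2}e_n:\lambda_n>0\}$ of $\calR_\mu$, read off $\calC\tilde e_n=\sqrt{\lambda_n}\,e_n$, and extend by continuity. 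This is somewhat more structural---it exhibits $\calC$ as a diagonal map between two concrete copies of $\ell^2$---and sidesteps the Fatou argument entirely, at the cost of having to verify completeness of $\{\tilde e_n\}$ in $\calR_\mu$. The two arguments are close cousins (your approximants $f_N$ are essentially the paper's $f^{(n)}$ reindexed), but yours packages the isometry $\calR_\mu\cong\ell^2$ more explicitly. One small point worth tightening: your expansion ``$f=\sum_n c_n e_n\in V$'' tacitly assumes the $(e_n)$ form a complete system in $V$, i.e.\ that kernel eigenvectors are included; the paper's proof makes the same implicit assumption, so this is consistent with the statement, but it would be cleaner to say so.
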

\begin{proof}
    Let $h\in H_\mu$, that is $h = \calC k = \lim_n \calK f^{(n)}$ for some $k\in \calR_\mu$ and a sequence $f^{(n)} \in V^*$ such that $\|k-f^{(n)}\|_{L^2(\mu_\frakm)} \to 0$. Using the symmetry of $\calK$ we obtain that
    \begin{align*}
        \langle h, e_j\rangle 
        = \lim_n \langle \calK f^{(n)}, e_j\rangle 
        = \lim_n \langle f^{(n)}, \lambda_j e_j\rangle,
    \end{align*}
    for all $j\geq1$, and hence $\langle h, e_j\rangle= 0$ whenever $\lambda_j = 0$. Moreover, \cref{lemma_qfg_equals_fCg} implies that $\|f^{(n)}\|_{L^2(\mu_\frakm)} = \| \calK^{\frac{1}{2}}f^{(n)}\|$ for all $n\geq 1$, and consequently we find that
    \begin{align*}
        \|h\|_{H_\mu}^2
        &=  \| k\|^2_{L^2(\mu_\frakm)}
        = \lim_n \| f^{(n)}\|_{L^2(\mu_\frakm)}^2
        = \lim_n \| \calK^{\frac{1}{2}} f^{(n)}\|^2 \\
        &= \lim_n \sum_{j=1}^\infty \lambda_j \langle f^{(n)}, e_j\rangle^2 
        \geq  \sum_{j=1}^\infty \lim_n \lambda_j \langle f^{(n)}, e_j\rangle^2 \\
        &= \sum_{j=1,\, \lambda_j\neq0}^\infty \lim_n \frac{\langle \calK f^{(n)}, e_j \rangle^2}{\lambda_j}
        = \sum_{j=1,\, \lambda_j\neq0}^\infty \frac{\langle h, e_j \rangle^2}{\lambda_j},
    \end{align*}
    where the inequality follows by Fatou's lemma. This shows that $h \in \calK^\frac{1}{2}(V)$ since $\|h\|_{H_\mu}^2 < \infty$.

    Conversely, let $h \in \calK^\frac{1}{2}(V)$. We want to show that $h = \calC k$ for some $k\in \calR_\mu$. We do this by defining the sequence
    \(
        f^{(n)} := \sum_{j=1, \lambda_j\neq 0}^n \frac{\langle h, e_j \rangle}{\lambda_j} e_j,
    \)
    and noting that $\calK f^{(n)} = \sum_{j=1}^n \langle h,e_j\rangle e_j \to h$ as $n\to\infty$. Furthermore, $f^{(n)}$ converges in $L^2(V, \mu_\frakm)$ to some element $k\in R_\mu$ since we have the bound
    \begin{align*}
        \| f^{(n)} - f^{(m)} \|_{L^2(\mu_\frakm)}^2
        &= \sum_{j=n,\, \lambda_j\neq0}^m \frac{\langle h, e_j\rangle^2}{\lambda_j} \leq \sum_{j=n,\, \lambda_j\neq0}^\infty \frac{\langle h, e_j\rangle^2}{\lambda_j},
    \end{align*}
    for all $n<m$. Consequently we obtain that $h = \calC k = \lim_n \calK(f^{(n)})$, from which \eqref{eqHilbertSqrtKNorm} follows by definition.
\end{proof}

\begin{remark}
    Note that the expression for the variance norm is $\|h\|_{\mu\text{-cov}} = \sqrt{\langle h, \calK^{-1}h\rangle }= \| \calK^{-\frac{1}{2}}h \|$, analogous to the finite-dimensional $\bbR^d$ case.
\end{remark}

In the functional data analysis literature \citet{2020FunctionalMahalanobis} argued that the naive functional Mahalanobis distance $\| \calK^{-\frac{1}{2}}h \|$ fails to be defined due to the non-invertibility of the square root operator $\calK^\frac{1}{2}$ in the $L^2[0,1]$ setting. However, when viewed through the lens of variance norms and Cameron-Martin spaces as per \cref{theoremHilbertCameronMartin}, we can see how such a notion can still be made precise despite the difficulties present in the infinite-dimensional and singular settings by allowing the anomaly distance to be infinite if the covariance structure of the underlying distribution does not match the new samples. This point of view was for instance taken by \citet{2023LyonsShao} for their conformance score anomaly distance in the finite-dimensional setting.

\subsection{Regularized Variance Norms}\label{subsecRegularizedVarianceNorms}

A classical result from Gaussian probability theory states that $\mu(H_\mu) = 0$ whenever $\mu$ is a Gaussian measure and $\dim(H_\mu) = \infty$ \cite[see e.g.][Theorem 2.4.7]{bogachevGaussianMeasures}. This means that the sample outcomes of a $V$-valued random variable will almost surely not lie in the Cameron-Martin space $H_\mu$, making the variance norm infinite with probability one. This issue was addressed in the functional data analysis literature in the special case $V = L^2[0,1]$ by regularizing the functional Mahalanobis distance, under the assumptions of a continuous covariance function and an injective covariance operator \citep{2020FunctionalMahalanobis}. Using our framework we are able to extend these results to the general Hilbert space setting without these restrictive assumptions.

There are two equivalent viewpoints for how to obtain said regularization. First, recall by \cref{theoremHilbertCameronMartin} that the $\mu$-variance norm of $x\in V$ is given by $\|x\|_{\mu\text{-cov}} = \|\calK^{-\frac{1}{2}}x\|$ if $x\in \operatorname{Im}(\calK^{\frac{1}{2}})$, and infinity otherwise. The first definition of a regularized norm is obtained by replacing the inverse $\calK^{-\frac{1}{2}}$ with the \textit{Tikhonov regularized operator} $R_\alpha = (\calK + \alpha I)^{-1}\calK^{\frac{1}{2}}$ with smoothing parameter $\alpha > 0$. Tikhonov regularization is a classical tool used in statistics (e.g.\ ridge regression) and functional analysis to deal with ill-posed equations \citep[see e.g.][]{kress2013LinearIntegralEquations}. In contrast to the inverse $\calK^{-\frac{1}{2}}$, the Tikhonov operator $R_\alpha$ is well-defined on all of $V$ and is an approximation of the pseudo-inverse of $\calK^\frac{1}{2}$.

\begin{definition}\label{defRegularizedVarianceNorm}
    Let $V$ be a Hilbert space, and let $\mu \in \calM_V$. We define the $\alpha$-regularized $\mu$-variance norm with smoothing parameter $\alpha>0$ as
    \begin{equation}\label{eqRegularizedVarianceNorm}
        \|x\|_{\mu, \alpha} := \big\| (\calK + \alpha I)^{-1} \calK^{\frac{1}{2}}x \big\|,
    \end{equation}
    for $x\in V$, where $\calK$ is the covariance operator of $\mu$. 
\end{definition}

The alterative definition, following \cite{2020FunctionalMahalanobis}, is based on the idea to approximate each $x\in V$ by an element $x_\alpha \in H_\mu$, $\alpha>0$, and then take the $\mu$-variance norm of $x_\alpha$, which is finite by construction. Since no closest element in $H_\mu$ to $x$ exists in the infinite-dimensional case (since $H_\mu$ might not closed in $V$), $x_\alpha$ is chosen by minimizing
    \begin{align}\label{eqArgmin1}
        x_\alpha 
        &:= \argmin_{h\in H_\mu} \|x-h\|^2 + \alpha \|h\|^2_{H_\mu}.
    \end{align}
Because $H_\mu$ is a reproducing kernel Hilbert space as discussed in the remark proceeding \cref{TheoremCameronMartinNormDefinitionsCoincide}, it follows from \citet[Theorem 8.4]{LearningTheoryCuckerZhou} that the unique solution $x_\alpha$ to \eqref{eqArgmin1} is given by
\begin{align*}
    x_\alpha 
    = (\calK + \alpha I)^{-1}\calK x 
    = \sum_{n=1}^\infty \frac{\lambda_n}{\lambda_n + \alpha} \langle e_n, x \rangle e_n,
\end{align*}
where $(e_n)_{n=1}^\infty$ and $(\lambda_n)_{n=1}^\infty$ are the eigenvectors and eigenvalues of $\calK$. Moreover, \cref{theoremHilbertCameronMartin} implies that the squared $\mu$-variance norm of $x_\alpha$ is
\begin{equation}\label{varianceNorm}
    \|x_\alpha\|_{\mu\text{-cov}}^2 = \|\calK^{-\frac{1}{2}}x_\alpha\|^2 = \| (\calK + \alpha I)^{-1} \calK^{\frac{1}{2}}x \|^2 = \sum_{n=1}^\infty \frac{\lambda_n}{(\lambda_n + \alpha)^2}\langle e_n, x \rangle^2,
\end{equation}
coinciding with the Tikhonov perspective of \cref{defRegularizedVarianceNorm}. Note that the element $x_\alpha$ depends not only on $\alpha$ but also on $\calK$, which depends on $\mu$. We will use the notation $\|x\|_{\mu, \alpha}$, rather than $\|x_\alpha\|_\mu$, which better highlights this dependence. This is important when working with empirical variance norms $\|x\|_{\mu^N, \alpha}$ based on a finite sample of data drawn from $\mu$.

\subsection{Computing Variance Norms and Kernelization}\label{sectionComputingVarNorms}

For most machine learning applications, the underlying probability measure $\mu$ is not explicitly known, and we must base our models on finite samples assumed to be drawn from $\mu$. A natural estimator of the underlying distribution is the empirical measure $\mu^N = \frac{1}{N} \sum_{i=1}^N \delta_{x_i}$. In this subsection, we derive computational formulas for the variance norm with respect to $\mu^N$, and show how it is directly related to kernelization via Reproducing Kernel Hilbert Spaces (RKHS).

In this subsection, we denote by $\mu^N$ the empirical measure of a sample, and $\calK_N$ the covariance operator of $\mu^N$, which we call the empirical covariance operator. The following result follows directly from \cref{theoremEmpiricalBanachVarianceNorm,theoremHilbertCameronMartin} given the fact that $\calK_N = \frac{1}{N}\sum_{i=1}^N (x_i-\widehat{\frakm}) \big\langle \cdot, x_i-\widehat{\frakm}\big\rangle$ is a finite rank operator.

\begin{proposition}\label{theoremCameronMartinEmpiricalMeasure}
    The Cameron-Martin space of $\mu^N$ is given by
    \begin{align*}
        H_{\mu^N} &= \operatorname{span}\{x_1-\widehat{\frakm}, \cdots, x_N-\widehat{\frakm}\} 
              = \operatorname{span}\{e_1, \cdots, e_M\},
    \end{align*}
    where $\widehat{\frakm} = \frac{1}{N}\sum_{i=1}^N x_i$ is the empirical mean of the data, and $e_1, \cdots, e_M$ are the eigenvectors of $\calK_N$ with positive eigenvalues. The empirical $\alpha$-regularized variance norm is given by
    \begin{align*}
        \|z\|_{\mu^N, \alpha}^2 = \sum_{m=1}^M \frac{\lambda_m}{(\lambda_m + \alpha)^2} \langle z, e_m\rangle^2,
    \end{align*}
    for $z\in V$.
\end{proposition}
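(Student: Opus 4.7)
The plan is to deduce this proposition as a direct corollary of \cref{theoremEmpiricalBanachVarianceNorm,theoremHilbertCameronMartin}, using the explicit finite-rank structure of $\calK_N$. The starting point is the observation already noted in the excerpt: with $y_i := x_i - \widehat{\frakm}$, the empirical covariance operator can be written as
\begin{equation*}
    \calK_N f = \frac{1}{N}\sum_{i=1}^N y_i \langle f, y_i\rangle,
\end{equation*}
so $\calK_N$ has rank at most $N$, and in particular only finitely many nonzero eigenvalues $\lambda_1,\dots,\lambda_M$ with corresponding orthonormal eigenvectors $e_1,\dots,e_M$.

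For the first claim, I would proceed in two steps. First, \cref{theoremEmpiricalBanachVarianceNorm}(i), applied in the Hilbert-space setting, immediately gives $H_{\mu^N} = \operatorname{span}\{y_1,\dots,y_N\}$. Second, to identify this with $\operatorname{span}\{e_1,\dots,e_M\}$, I would apply \cref{theoremHilbertCameronMartin}, which states that $H_{\mu^N} = \calK_N^{1/2}(V)$. Because $\calK_N$ is finite rank with precisely $M$ nonzero eigenvalues, the diagonalisation \eqref{eqDiagonalizeCovarianceOperator} gives $\calK_N^{1/2}f = \sum_{m=1}^M \sqrt{\lambda_m}\langle f, e_m\rangle e_m$, whose image is exactly $\operatorname{span}\{e_1,\dots,e_M\}$. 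The two descriptions of $H_{\mu^N}$ therefore coincide.

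For the second claim, I would instantiate the general Tikhonov formula \eqref{varianceNorm} at $\mu = \mu^N$. This gives
\begin{equation*}
    \|\widehat{z}_\alpha\|_{\mu^N\text{-cov}}^2 = \sum_{n=1}^\infty \frac{\lambda_n}{(\lambda_n+\alpha)^2}\langle z, e_n\rangle^2,
\end{equation*}
and the only thing to check is that the tail collapses: for every index $n$ with $\lambda_n = 0$ the numerator $\lambda_n$ vanishes so the term contributes zero, regardless of $\langle z, e_n\rangle$. Hence the series truncates to the finite sum over the $M$ positive eigenvalues, as claimed.

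No step here is genuinely hard; the only subtlety worth flagging is that \eqref{varianceNorm} was derived under the general assumption $\mu \in \calM_V$ and invoked the fact that $H_\mu$ is an RKHS so that the representer-type argument of \cite{LearningTheoryCuckerZhou} applies. Since $\mu^N \in \calM_V$ automatically, one just needs to note that nothing in the derivation of \eqref{varianceNorm} required $\calK$ to be injective or of infinite rank, so the formula transfers verbatim to the empirical setting. This is the one place where appealing cleanly to the unified framework developed earlier (rather than re-doing a separate finite-rank calculation) does the actual work.
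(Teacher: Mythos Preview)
Your proposal is correct and follows essentially the same approach as the paper: the paper simply states that the result follows directly from \cref{theoremEmpiricalBanachVarianceNorm,theoremHilbertCameronMartin} together with the fact that $\calK_N$ is finite rank, which is exactly the route you take (with the regularized-norm part being a direct specialization of \eqref{varianceNorm}, itself a consequence of \cref{theoremHilbertCameronMartin}). Your write-up merely unpacks a few more details than the paper does.
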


The following theorem presents an algorithm for computing the eigenvalues and eigenvectors of the empirical covariance operator, based on an SVD decomposition of the inner product Gram matrix. This concept is closely related to the techniques used for kernel PCA \citep{1998KernelPCA}. We however give our own original proof, and relate the results back to the Cameron-Martin space of the empirical measure.

\begin{theorem}\label{theoremEigenvectorsCovOperator}
    Let $A \in \bbR^{N\times N}$ be defined by $A_{i,j} = \langle f_i, f_j\rangle$, where $f_i = \frac{x_i-\widehat{\frakm}}{\sqrt{N}}$, with SVD decomposition $A = U \Sigma U^T$. Let $v^{(n)}$ be the $n$-th column of $U$, and $\lambda_n = \Sigma_{n,n}$. Define $M = \max\{m\leq N : \lambda_m > 0\}$. Then the elements defined by
    \begin{align}\label{eqSpectralDecompEigenvalueDef}
        e_n = \sum_{i=1}^N v^{(n)}_i f_i,
    \end{align}
    are orthogonal eigenvectors of $\calK_N$ with eigenvalues $\lambda_{n}$ and norms $\|e_n\| = \sqrt{\lambda_n}$ for $n\leq M$, and $e_n = 0$  for $M < n \leq N$. Moreover, we have that $\operatorname{span}\{e_1, \cdots, e_M\} = H_{\mu^N}$.
\end{theorem}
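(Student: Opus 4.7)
The plan is to unwind the definition of the empirical covariance operator in terms of the centred samples $f_i$, and then reduce everything to matrix-vector computations involving the Gram matrix $A$ and its eigendata. Since $\mathcal{K}_N$ is a finite-rank operator, it acts via $\mathcal{K}_N z = \frac{1}{N}\sum_{i=1}^N (x_i-\widehat{\frakm})\langle x_i - \widehat{\frakm}, z\rangle = \sum_{i=1}^N f_i \langle f_i, z\rangle$ for any $z \in V$. Note also that since $A$ is symmetric and positive semidefinite, the SVD $A = U\Sigma U^T$ is simultaneously an eigendecomposition, so $A v^{(n)} = \lambda_n v^{(n)}$.

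First I would verify the eigenvalue equation by direct substitution. Plugging the definition of $e_n$ into $\mathcal{K}_N$ and using the definition of $A_{ij}$ gives
\begin{equation*}
    \mathcal{K}_N e_n
    = \sum_{i=1}^N f_i \Big\langle f_i, \sum_{j=1}^N v^{(n)}_j f_j\Big\rangle
    = \sum_{i=1}^N f_i \sum_{j=1}^N A_{ij} v^{(n)}_j
    = \sum_{i=1}^N (A v^{(n)})_i f_i
    = \lambda_n \sum_{i=1}^N v^{(n)}_i f_i
    = \lambda_n e_n,
\end{equation*}
so each $e_n$ is an eigenvector of $\mathcal{K}_N$ with eigenvalue $\lambda_n$.

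Next I would compute inner products among the $e_n$'s to get orthogonality and to identify the vanishing ones. A similar calculation yields
\begin{equation*}
    \langle e_n, e_m \rangle
    = \sum_{i,j=1}^N v^{(n)}_i v^{(m)}_j A_{ij}
    = (v^{(n)})^T A v^{(m)}
    = \lambda_m (v^{(n)})^T v^{(m)}
    = \lambda_m \delta_{nm},
\end{equation*}
using orthonormality of the columns of $U$. In particular $\|e_n\|^2 = \lambda_n$, so $e_n = 0$ precisely when $\lambda_n = 0$, which covers the case $M < n \leq N$, and the nonzero $e_1,\ldots,e_M$ are mutually orthogonal.

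Finally I would identify the span. By construction each $e_n$ lies in $\operatorname{span}\{f_1,\ldots,f_N\}$, which by \cref{theoremCameronMartinEmpiricalMeasure} is exactly $H_{\mu^N}$. The $M$ nonzero $e_n$'s are orthogonal, hence linearly independent, so it suffices to show that $\dim H_{\mu^N} = M$. This follows from the standard identity that the rank of the Gram matrix of a finite collection of Hilbert space vectors equals the dimension of their span; since $\operatorname{rank}(A) = M$ by definition and $H_{\mu^N} = \operatorname{span}\{f_1,\ldots,f_N\}$, we conclude $\dim H_{\mu^N} = M$, so the $e_1,\ldots,e_M$ form a basis. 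There is no serious obstacle here—the proof is a direct computation—but the mildest subtlety is the final dimension-counting step, where one must carefully invoke that a Gram matrix and its generating family have matching rank in order to convert the orthogonal independence of the $e_n$'s into a spanning statement.
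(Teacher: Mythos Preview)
Your proof is correct and follows essentially the same approach as the paper: direct verification of the eigenvalue relation via $\mathcal{K}_N e_n = \sum_i (Av^{(n)})_i f_i$, orthogonality via $(v^{(n)})^T A v^{(m)} = \lambda_m \delta_{nm}$, and identification of the span with $H_{\mu^N}$. The only cosmetic difference is in the final span argument, where the paper uses the orthogonality of $U$ to write $f = Ue$ and conclude $\operatorname{span}\{f_i\} = \operatorname{span}\{e_n\}$ directly, whereas you use the equivalent dimension-counting fact that $\operatorname{rank}(A) = \dim\operatorname{span}\{f_i\}$.
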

\begin{proof}
    To prove that $e_m$ is an eigenvector of $\calK_N$ with eigenvalue $\lambda_m$ for $m\in \{1, \cdots, M\}$, observe that
    \begin{align*}
        \calK_N \bigg( \sum_{i=1}^N v_i^{(m)} f_i \bigg) 
        = \sum_{j=1}^N f_j \bigg\langle \sum_{i=1}^N v_i^{(m)} f_i, f_j \bigg\rangle 
        = \sum_{j=1}^N f_j  \sum_{i=1}^N v_i^{(m)} \langle f_i, f_j \rangle 
        = \sum_{j=1}^N f_j \lambda_m v_j^{(m)},
    \end{align*}
    where we used that $\sum_{i=1}^N v_i^{(m)} \langle f_i, f_j \rangle = (Av^{(m)})_j = \lambda_m v^{(m)}_j$.

    Next, we verify that the vectors $\{ e_1, ..., e_N \}$ are linearly independent. We see that
    \begin{align*}
        \langle e_n, e_m \rangle
        &= \bigg\langle  \sum_{i=1}^N v_i^{(m)} f_i, \sum_{j=1}^N v_j^{(n)} f_j \bigg\rangle 
        = \sum_{i=1}^N \sum_{j=1}^N v_i^{(m)} v_j^{(n)} \langle f_i, f_j\rangle\\
        &= \langle v^{(n)}, A v^{(m)} \rangle_{\bbR^N} 
        = \lambda_m \langle v^{(n)}, v^{(m)} \rangle_{\bbR^N},
    \end{align*}
    for $n,m \in \{ 1, \cdots, N\}$. When $n\neq m$ we have that $\langle v^{(n)}, v^{(m)}\rangle_{\bbR^N} = 0$, hence $ \langle e_n, e_m \rangle = 0$. For $1 \leq m \leq M$ the element $e_m$ is non-zero since $\langle e_m, e_m \rangle = \lambda_m \langle v^{(m)}, v^{(m)}\rangle_{\bbR^N} > 0$. On the other hand, if $M < n \leq N$ then $\lambda_n = 0$, hence $\langle e_n, e_n \rangle = 0$ and consequently $e_n = 0$. 
    
    Finally, we want to use \cref{theoremCameronMartinEmpiricalMeasure} to conclude that $H_{\mu^N} = \operatorname{span}\{e_1, \cdots, e_M\}$. To this end, define $f = (f_1, \cdots, f_N)$ and $e = (e_1, \cdots, e_N)$ as column vectors. Using this notation, \eqref{eqSpectralDecompEigenvalueDef} can be written as $f = U e \iff e = U^T f$, from which it follows that $\operatorname{span}\{f_1, \cdots, f_N\} = \operatorname{span}\{e_1, \cdots, e_N\} = \operatorname{span}\{e_1, \cdots, e_M\}$. This concludes the proof.
\end{proof}

\cref{theoremHilbertCameronMartin} implies that the variance norm depends only on the choice of inner product on $V$, as well as the eigenvectors of the covariance operator. For empirical measures, \cref{theoremEigenvectorsCovOperator} provides an algorithm for computing these based on a finite sample of data, given an inner product. For applications, the special case where $V$ is an RKHS  \citep{2001KernelLearningCuckerSmale, kernelLearningScholkopfSmola} is of great interest, which we briefly introduce below before we present our final algorithm for computing variance norms.


\begin{definition}
    Let $\calX$ be a set, and let $\calH$ be a Hilbert space of functions of $\calX$. A reproducing kernel is defined as a function $k : \calX \times \calX \to \bbR$ satisfying
    \begin{enumerate}[label=(\roman*)]
        \item $\forall x\in\calX,\,\, k(\cdot, x) \in \calH$,
        \item $\forall x\in\calX,\,\, \forall f\in \calH,\,\, f(x) = \big\langle f, k(\cdot, x)\big\rangle_\calH$.
    \end{enumerate}
    Furthermore, $\calH$ is said to be a \textit{reproducing kernel Hilbert space} (RKHS) if there exists a reproducing kernel for $\calH$. 
\end{definition}

From a machine learning perspective, it is often helpful to view RKHSs through the lens of \textit{feature maps}. A feature map is defined as a function $\phi : \calX \to \calF$, where $\calF$ is a Hilbert space. Every such feature map induces a positive definite kernel via $k(x,y) := \langle \phi(x), \phi(y)\rangle_\calF$. Conversely, given a RKHS with reproducing kernel $k$, the canonical feature map $\phi(x):= k(\cdot, x)$ reproduces $k$ in the sense that $\langle \phi(x), \phi(y) \rangle_\calH = \langle k(\cdot, x), k(\cdot, y) \rangle_\calH = k(x,y)$. A kernelized variance norm is obtained by lifting an initial measure $\mu$ using a feature map $\phi$ via $\nu := \mu \circ \phi^{-1}$. The $\nu$-variance norm is then computed in the RKHS associated with $\phi$. This formulation is advantageous for applications because it allows for the use of \textit{kernel tricks}: inner products in $\calF$ can be computed via kernel evaluations $k(x, y)$, even when the explicit form of $\phi(x)$ is unavailable or infinite-dimensional.

The generality of \cref{theoremEigenvectorsCovOperator}, which only assumes that $V$ is a Hilbert space, allows us to apply it directly in the kernelized setting. Let ${x_1, \dots, x_N} \subset \calX$ be a dataset and $\phi: \calX \to \calF$ a feature map into a Hilbert space $\calF$. We define the empirical measure $\mu^N := \frac{1}{N} \sum_{i=1}^N \delta_{\phi(x_i)}$ and let $V$ be the RKHS induced by $\phi$. In this setting, the Gram matrix of inner products becomes the kernel Gram matrix. This gives a solid theoretical foundation for the kernelized Mahalanobis distance \citep{2001KernelizedMahalanobisIEEE} within our unified framework. Note that the non-kernelized (linear) setting is recovered by taking $\phi = I$, the identity map, and we refer to this case as the \textit{linear kernel}.

Because the feature map $\phi$ may be non-linear and possibly infinite-dimensional, direct computation of inner products between normalized elements (e.g., $\langle f_n, f_m \rangle$) may not be feasible. To address this, we express these inner products as linear combinations of kernel evaluations $\langle x_i, x_j \rangle = k(x_i, x_j)$. Specifically, let $N$, $f_i$, $M$, $v_i^{(m)}$, $e_m$, and $\lambda_m$ be as defined in \cref{theoremEigenvectorsCovOperator}. Then by bilinearity of the inner product:
\begin{align}\label{eqAlgo1}
\langle f_m, f_n\rangle
= \frac{1}{N} \bigg( \langle x_m, x_n\rangle
- \frac{1}{N}\sum_{j=1}^N \langle x_m, x_j\rangle
- \frac{1}{N}\sum_{j=1}^N \langle x_n, x_j\rangle
+ \frac{1}{N^2} \sum_{j=1}^N \sum_{i=1}^N \langle x_i, x_j\rangle \bigg),
\end{align}
and for any test point $h$:
\begin{align}\label{eqAlgo2}
\left\langle \frac{e_m}{\sqrt{\lambda_m}}, h \right\rangle
= \sum_{i=1}^N \frac{v^{(m)}_i}{\sqrt{\lambda_m N}} \left( \langle x_i, h \rangle
- \frac{1}{N} \sum_{j=1}^N \langle x_j, h \rangle \right).
\end{align}
Furthermore, we use a simple dynamic programming procedure to avoid a naive $\mathcal{O}(N^4)$ and $\mathcal{O}(N^2)$ time complexity when computing \eqref{eqAlgo1} and \eqref{eqAlgo2}, respectively. The full procedure for computing the kernelized Mahalanobis distance and its nearest-neighbour variant (conformance score) is detailed in \crefrange{algoGramMatrix}{algoConformance}. The fitting procedure described in \cref{algoGramMatrix} has time complexity $\mathcal{O}(N^2(K + N))$, where $K$ is the time complexity of a single kernel evaluation. For inference, detailed in \cref{algoMahalanobis,algoConformance}, both the kernelized Mahalanobis distance and conformance score can be computed in $\mathcal{O}(N(K + M))$ time, where $M \leq N$ is the number of eigenvalues.

\begin{algorithm}[h!]
    \DontPrintSemicolon
    \caption{Kernelized Gram matrix w.r.t. $\mu^N = \frac{1}{N}\sum_{i=1}^N\delta_{\phi(x_i)}$.}\label{algoGramMatrix}
    
    \KwInput{Data $\{x_1, \cdots, x_N \}$. }

    \tcp{Compute normalized Gram matrix via kernel trick}

    $B_{i,j} \leftarrow \big\langle \phi(x_i), \phi(x_j) \big\rangle = k(x_i, x_j)$ for $i,j \in \{1, \cdots, N\}$

    Column mean $a_i \leftarrow \frac{1}{N}\sum_{j=1}^N B_{i,j}$ for $i \in \{1, \cdots, N\}$
    
    Matrix mean $b \leftarrow \frac{1}{N}\sum_{i=1}^N a_i$

    $A_{i,j} \leftarrow \frac{1}{N}( B_{i,j} - a_i - a_j + b)$ for $i,j \in \{1, \cdots, N\}$
    \tcp*{$A_{i,j} = \langle f_i, f_j \rangle$}
    
    \tcp{Compute inner products of eigenvectors and data}
    Compute SVD decomposition $U \Sigma U^t = A$

    Set $M \leftarrow \max\{ m\leq N : \Sigma_{m, m} > \lambda \}$

    \For{$n=1$ \textup{\textbf{to}} $N$}{
        \For{$m=1$ \textup{\textbf{to}} $M$}{
            $E_{n,m} \leftarrow \sum_{i=1}^N \frac{U_{i,m}}{\sqrt{N \Sigma_{m,m}}}( B_{i,n} - a_n)$
            \tcp*{$E_{n,m} = \langle \frac{e_m}{\sqrt{\lambda_m}}, \phi(x_n) \rangle$}
        }
    }
    \KwOutput{Matrix $E$, and SVD decomposition $A = U \Sigma U^t$.}
\end{algorithm}

\begin{algorithm}[h!]
    \DontPrintSemicolon
    \caption{Kernelized Mahalanobis distance w.r.t. $\mu = \frac{1}{N}\sum_{i=1}^N\delta_{\phi(x_i)}$.}\label{algoMahalanobis}
    \KwInput{Data $\{x_1, \cdots, x_N \}$. SVD decomposition matrices $U, \Sigma \in \bbR^{N\times N}$ and $E \in \bbR^{N\times M}$ as per Algorithm \ref{algoGramMatrix}. Regularization $\alpha>0$. A new sample $y$. }

    \tcp{Compute inner product of eigenvectors and sample}
    Use kernel trick $s_i \leftarrow \langle \phi(y), \phi(x_i) \rangle = k(y, x_i)$ for $i \in \{1, \cdots, N\}$
    
    Average $ r \leftarrow \frac{1}{N}\sum_{i=1}^N s_i$
    
    $p_m \leftarrow \frac{1}{\sqrt{N\Sigma_{m,m}}} \sum_{i=1}^N U_{i, m}(s_i-r)$ for $m \in \{1, \cdots, M\}$
    \tcp*{$p_m = \big\langle \frac{e_m}{\sqrt{\lambda_m}}, \phi(y) \big\rangle$}

    \tcp{Calculate Mahalanobis distance}
    Average $c_m \leftarrow \frac{1}{N} \sum_{i=1}^N E_{i, m}$ for $m \in \{1, \cdots, M\}$
    \tcp*{$c_m = \big\langle \frac{e_m}{\sqrt{\lambda_m}},  \frac{1}{N}\sum_{n=1}^N \phi(x_n)\big\rangle$}

    $d \leftarrow \sqrt{\sum_{m=1}^M \frac{\Sigma_{m,m}}{(\Sigma_{m,m}+\alpha)^2}(p_m-c_m)^2}$
    \tcp*{$d = \|\phi(y)-\frac{1}{N}\sum_{n=1}^N \phi(x_n)\|^2_{\mu^N, \alpha}$}

    \KwOutput{Kernelized Mahalanobis distance $d$ with $\alpha$-regularization.}
\end{algorithm}

\begin{algorithm}[h!]
    \DontPrintSemicolon
    \caption{Kernelized conformance score w.r.t. $\mu = \frac{1}{N}\sum_{i=1}^N\delta_{\phi(x_i)}$.}\label{algoConformance}
    \KwInput{Data $\{x_1, \cdots, x_N \}$. SVD decomposition matrices $U, \Sigma \in \bbR^{N\times N}$ and $E \in \bbR^{N\times M}$ as per Algorithm \ref{algoGramMatrix}. Regularization $\alpha>0$. A new sample $y$. }

    \tcp{Compute inner product of eigenvectors and sample}
    Use kernel trick $s_i \leftarrow \langle \phi(y), \phi(x_i) \rangle = k(y, x_i)$ for $i \in \{1, \cdots, N\}$
    
    Average $ r \leftarrow \frac{1}{N}\sum_{i=1}^N s_i$
    
    $p_m \leftarrow \frac{1}{\sqrt{N\Sigma_{m,m}}} \sum_{i=1}^N U_{i, m}(s_i-r)$ for $m \in \{1, \cdots, M\}$
    \tcp*{$p_m = \big\langle \frac{e_m}{\sqrt{\lambda_m}}, \phi(y) \big\rangle$}

    \tcp{Calculate nearest-neighbour Mahalanobis distance}
    $d_n \leftarrow \sum_{m=1}^M \frac{\Sigma_{m,m}}{(\Sigma_{m,m}+\alpha)^2}(p_m - E_{n,m})^2$ for $n \in \{1, \cdots, N\}$
    \tcp*{$d_{n} = \|\phi(y)-\phi(x_n)\|^2_{\mu^N, \alpha}$}
    
    $c \leftarrow \sqrt{\min_n d_n}$

    \KwOutput{Kernelized conformance score $c$ with $\alpha$-regularization.}
\end{algorithm}

\clearpage

\subsection{Consistency, Speed of Convergence, and Distributional Results}

We now turn to the statistical properties of the sample estimator $\|x\|_{\mu^N, \alpha}$. Similar results were obtained by \citet{2020FunctionalMahalanobis} in the special case $V = L^2[0,1]$ under the assumptions that $\calK$ is injective. Most of their proofs easily generalize to our setting which we instead base on variance norms of measures $\mu\in\calM_V$ for general separable Hilbert spaces $V$. The consistency analysis and speed of convergence relies on the following lemma, which follows from standard properties of covariance operators on Hilbert spaces \citep[see e.g.][Theorems 8.1.1 and 8.1.2]{2015TheoreticalFoundationsOfFunctionalDataAnalysisWithAnIntroductionToLinearOperators}.

\begin{lemma}\label{lemmaCovarianceOperatorConvergence}
    Let $\mu \in \calM_V$ and $\mu^N$ be an empirical measure of $\mu$, with covariance operators $\calK$ and $\calK_N$, respectively. Then $\|\calK_N - \calK\|_{op} \to 0$ as $N\to\infty$ almost surely. Furthermore, if $\mu$ has finite fourth moment, then $\|\calK_N - \calK \|_{op}= O_P(N^{-\frac{1}{2}})$.
\end{lemma}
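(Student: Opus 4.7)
The plan is to lift the problem into the Hilbert space $HS(V)$ of Hilbert--Schmidt operators on $V$, exploiting the bound $\|\cdot\|_{op} \leq \|\cdot\|_{HS}$. For vectors $x,y \in V$, denote by $x \otimes y$ the rank-one operator $f \mapsto \langle f, y\rangle x$, which satisfies $\|x\otimes y\|_{HS} = \|x\|\|y\|$. Under this identification the covariance operator admits the Bochner-integral representation $\calK = \E^{X\sim\mu}\bigl[(X-\frakm)\otimes(X-\frakm)\bigr]$ as an element of $HS(V)$, while the standard bias--variance decomposition yields
\[
    \calK_N = \frac{1}{N}\sum_{i=1}^N (X_i-\frakm)\otimes(X_i-\frakm) \;-\; (\widehat{\frakm}-\frakm)\otimes(\widehat{\frakm}-\frakm).
\]
It therefore suffices to control each of the two summands separately in HS norm.

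For the almost sure convergence, I would apply the strong law of large numbers in the separable Hilbert space $HS(V)$ to the first summand: the random variable $(X-\frakm)\otimes(X-\frakm)$ is Bochner integrable because $\E\|(X-\frakm)\otimes(X-\frakm)\|_{HS} = \E\|X-\frakm\|^2 < \infty$ by the finite second moment assumption $\mu\in\calM_V$, so the sample mean converges a.s.\ to $\calK$. For the second summand, the SLLN in the Hilbert space $V$ gives $\widehat{\frakm}\to\frakm$ almost surely, whence $\|(\widehat{\frakm}-\frakm)\otimes(\widehat{\frakm}-\frakm)\|_{HS} = \|\widehat{\frakm}-\frakm\|^2 \to 0$ a.s. Combining these via the triangle inequality proves the first claim.

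For the rate under the finite fourth moment assumption, the $HS(V)$-valued random variables $Z_i := (X_i-\frakm)\otimes(X_i-\frakm) - \calK$ are i.i.d., centered, and satisfy
\[
    \E\|Z_i\|_{HS}^2 \leq \E\|(X-\frakm)\otimes(X-\frakm)\|_{HS}^2 = \E\|X-\frakm\|^4 < \infty.
\]
The Hilbert-space Chebyshev inequality then gives $\bigl\|\frac{1}{N}\sum_{i=1}^N Z_i\bigr\|_{HS} = O_P(N^{-1/2})$. The bias term is even smaller: since $\E\|\widehat{\frakm}-\frakm\|^2 = \frac{1}{N}\operatorname{tr}(\calK) = O(N^{-1})$, Markov's inequality yields $\|(\widehat{\frakm}-\frakm)\otimes(\widehat{\frakm}-\frakm)\|_{HS} = O_P(N^{-1})$, which is absorbed into the leading $O_P(N^{-1/2})$ contribution. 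Passing back to the operator norm via $\|\cdot\|_{op}\leq\|\cdot\|_{HS}$ concludes the proof.

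There is no substantial obstacle here beyond bookkeeping; the key conceptual step is the outer-product lift to $HS(V)$, which converts the problem into a standard SLLN/Chebyshev statement for a Hilbert-space-valued sample mean. The only place where care is needed is verifying that the fourth moment of $\mu$ is exactly what is required to make $Z_i$ square-integrable in $HS(V)$, so that the classical $N^{-1/2}$ rate survives the infinite-dimensional lift.
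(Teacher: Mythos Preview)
Your argument is correct and is precisely the standard route: the paper does not give its own proof but defers to \cite[Theorems~8.1.1 and~8.1.2]{2015TheoreticalFoundationsOfFunctionalDataAnalysisWithAnIntroductionToLinearOperators}, whose proofs proceed exactly as you outline---lifting to $HS(V)$ via the outer product, invoking the Hilbert-space SLLN under $\E\|X-\frakm\|^2<\infty$, and obtaining the $O_P(N^{-1/2})$ rate from the variance identity $\E\bigl\|\tfrac{1}{N}\sum Z_i\bigr\|_{HS}^2 = N^{-1}\E\|Z_1\|_{HS}^2$ under the fourth-moment assumption. Your handling of the centering correction $(\widehat{\frakm}-\frakm)\otimes(\widehat{\frakm}-\frakm)$ is likewise the standard one.
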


The following theorem proves that the empirical regularized variance norm converges to the actual regularized variance norm almost surely. Furthermore, if $\mu$ has finite fourth moment, we obtain a speed of convergence of $N^{-\frac{1}{4}}$.

\begin{theorem}\label{theoremConsistentEstimator}
Let $x\in V$ and $\mu \in \calM_V$. Then the empirical regularized $\mu$-variance norm is consistent almost surely, that is,
\begin{equation*}
    \|x\|_{\mu^N, \alpha} \to \|x\|_{\mu, \alpha},
\end{equation*}
as $N\to\infty$. Additionally, if $\mu$ has finite fourth moment, then the speed of convergence in probability is
\begin{equation*}
    \|x\|_{\mu^N, \alpha} - \|x\|_{\mu, \alpha} = O_P(N^{-\frac{1}{4}}).
\end{equation*}
\end{theorem}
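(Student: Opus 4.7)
The plan is to reduce both claims to the operator norm convergence $\|\calK_N - \calK\|_{op} \to 0$ already provided by \cref{lemmaCovarianceOperatorConvergence}. Set
\[
    A := (\calK + \alpha I)^{-1} \calK^{1/2}, \qquad A_N := (\calK_N + \alpha I)^{-1} \calK_N^{1/2},
\]
so that $\|x_\alpha\|_{\mu\text{-cov}} = \|Ax\|$ and $\|\widehat{x}_\alpha\|_{\mu^N\text{-cov}} = \|A_N x\|$ by \eqref{varianceNorm}. The reverse triangle inequality gives $|\|A_N x\| - \|Ax\|| \leq \|A_N - A\|_{op} \|x\|$, so it suffices to control $\|A_N - A\|_{op}$ in terms of $\|\calK_N - \calK\|_{op}$.

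To do this I would write the telescoping identity
\[
    A_N - A = (\calK_N + \alpha I)^{-1}\bigl(\calK_N^{1/2} - \calK^{1/2}\bigr) + \bigl[(\calK_N + \alpha I)^{-1} - (\calK + \alpha I)^{-1}\bigr] \calK^{1/2},
\]
and apply the resolvent identity $(\calK_N + \alpha I)^{-1} - (\calK + \alpha I)^{-1} = (\calK_N + \alpha I)^{-1}(\calK - \calK_N)(\calK + \alpha I)^{-1}$ to the second bracket. Because $\calK, \calK_N \geq 0$, both resolvents have operator norm at most $1/\alpha$, so the second term is bounded by $\alpha^{-2} \|\calK\|_{op}^{1/2} \|\calK_N - \calK\|_{op}$, which is $O_P(N^{-1/2})$ under the fourth moment hypothesis.

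The main obstacle is the first term, which requires Hölder continuity of the operator square root on positive operators: for self-adjoint positive $A,B$ one has the classical bound $\|A^{1/2} - B^{1/2}\|_{op} \leq \|A-B\|_{op}^{1/2}$. Applying this yields
\[
    \|A_N - A\|_{op} \leq \frac{1}{\alpha}\|\calK_N - \calK\|_{op}^{1/2} + \frac{\|\calK\|_{op}^{1/2}}{\alpha^2}\|\calK_N - \calK\|_{op},
\]
which under \cref{lemmaCovarianceOperatorConvergence} tends to $0$ almost surely, giving the consistency statement. Under the finite fourth moment hypothesis the first summand dominates and gives the $O_P(N^{-1/4})$ rate via $\|\calK_N - \calK\|_{op}^{1/2} = O_P(N^{-1/4})$, concluding the proof. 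The delicate step is justifying the square-root Hölder bound; I would either cite it as a standard fact (e.g.\ Powers--Størmer-type inequality for positive operators) or sketch a short proof using the spectral integral representation $A^{1/2} = \pi^{-1}\int_0^\infty \lambda^{-1/2}(A(A+\lambda I)^{-1}) d\lambda$ combined with the resolvent identity.
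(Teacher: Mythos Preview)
Your proposal is correct and follows essentially the same route as the paper: the same telescoping split $A_N - A = (\calK_N+\alpha I)^{-1}(\calK_N^{1/2}-\calK^{1/2}) + [(\calK_N+\alpha I)^{-1}-(\calK+\alpha I)^{-1}]\calK^{1/2}$, the resolvent bound $\|(\calK_N+\alpha I)^{-1}\|_{op}\le 1/\alpha$, and the square-root H\"older inequality $\|\calK_N^{1/2}-\calK^{1/2}\|_{op}\le \|\calK_N-\calK\|_{op}^{1/2}$ (the paper cites Bhatia rather than Powers--St{\o}rmer, and quotes the resolvent perturbation bounds from a textbook rather than writing out the resolvent identity, but the substance is identical).
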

\begin{proof}
    Fix $x\in V$ and $\alpha>0$. For brevity we write $T^\alpha_N = (\calK_N + \alpha I)^{-1}$ and $T^\alpha = (\calK + \alpha I)^{-1}$. By the reverse triangle inequality we obtain that
    \begin{align}\label{eqConsistency1}
        \bigg|\|x\|_{\mu^N, \alpha} - \|x\|_{\mu, \alpha} \bigg|
        &= \bigg|\| T^\alpha_N \calK_N^{\frac{1}{2}}x\| - \| T^\alpha \calK^{\frac{1}{2}}x\| \bigg|
        \leq \bigg\|   T_N^\alpha \calK_N^{\frac{1}{2}}x - T^\alpha \calK^{\frac{1}{2}}x    \bigg\| \nonumber\\
        &\leq \| T_N^\alpha \|_{op} \|\calK_N^{\frac{1}{2}}x - \calK^{\frac{1}{2}}x\| + \|T_N^\alpha - T^\alpha\|_{op} \|\calK^{\frac{1}{2}}x\|.
    \end{align}
     First note that $T^\alpha_N$ and $T^\alpha$ are bounded by $\| T^\alpha_N\|_{op} \leq \frac{1}{\alpha}$. \cref{lemmaCovarianceOperatorConvergence} implies that $\|\calK_N - \calK\|_{op} \to 0$ almost surely, from which it follows that the first term of \eqref{eqConsistency1} goes to $0$ as $N\to\infty$. The second term also goes to $0$, by \citet[Corollary 8.3]{BasicClassesOfLinearOperators}, since $\calK - \calK_N = (\calK + \alpha I) - (\calK_N + \alpha I)$. This proves consistency.

     As for the speed of convergence, \citet[Corollary 8.2]{BasicClassesOfLinearOperators} implies that 
     \begin{align*}
         \| T_N^\alpha - T \|_{op} \leq \frac{\|T^\alpha\|_{op}^2 \|\calK_N - \calK \|_{op} }{1 - \|T^\alpha\|_{op} \|\calK_N - \calK \|_{op}}
     \end{align*}
     which is of order $O(\|\calK_N - \calK \|_{op})$ as $N\to\infty$. Furthermore, since $\calK_N$ and $\calK$ are positive operators, we have that $ \|\calK_N^{\frac{1}{2}} - \calK^{\frac{1}{2}}\|_{op} \leq \|\calK_N - \calK \|_{op}^\frac{1}{2}$ \cite[see e.g.][Theorem X.1.1]{1997MatrixAnalysisBhatia}. Combining this with \cref{lemmaCovarianceOperatorConvergence} we obtain a speed of convergence in probability of $O_P(N^{-\frac{1}{4}})$.
\end{proof}

When $\mu$ is a Gaussian measure, we are able to obtain an explicit distribution of the Mahalanobis distance as an infinite sum of independent chi-squared random variables. This generalizes the classical Hotelling's T-statistic and the Gaussian functional Mahalanobis distance case \citep{2020FunctionalMahalanobis} to the general Hilbert space setting.

\begin{theorem}\label{theoremGaussianDistribution} 
    Let $\mu$ be a Gaussian measure on a Hilbert space $V$, and let $\alpha>0$. If $X\sim \mu$ is drawn from the measure $\mu$, then the squared $\alpha$-regularized Mahalanobis distance has distribution
    \begin{equation*}
        \|X-\frakm\|^2_{\mu, \alpha} \stackrel{d}{=} \sum_{n=1}^\infty \bigg(\frac{\lambda_n}{\lambda_n + \alpha}\bigg)^2 Y_n,
    \end{equation*}
    where $\frakm$ is the mean of $\mu$, $\lambda_n$ are the eigenvalues of the covariance operator $\calK$ of $\mu$, and $Y_1, Y_2, \cdots$ is a sequence of i.i.d. standard $\chi^2_1$ random variables.
\end{theorem}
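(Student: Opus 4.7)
The plan is to apply the eigenexpansion \eqref{varianceNorm} with $x - \frakm$ playing the role of the generic argument, and then identify the resulting series as a weighted sum of independent scaled chi-squared variables via the Karhunen--Lo\`eve-type structure of the Gaussian vector $x - \frakm$ in the eigenbasis of $\calK$. Setting $\xi_n := \langle e_n, x - \frakm\rangle$, equation \eqref{varianceNorm} gives
\[
    \|(x-\frakm)_\alpha\|^2_{\mu\text{-cov}} = \sum_{n=1}^\infty \frac{\lambda_n}{(\lambda_n+\alpha)^2}\, \xi_n^2,
\]
so the task reduces to understanding the joint law of $(\xi_n)_{n\geq 1}$.

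The first step is to observe that under the Gaussian measure $\mu$, any finite collection $(\xi_{n_1}, \ldots, \xi_{n_k})$ is jointly Gaussian (as a continuous linear image of $x$) with zero mean. By \cref{lemma_qfg_equals_fCg} applied with $f = e_n$ and $g = e_m$, the covariances are $\operatorname{Cov}(\xi_n, \xi_m) = \langle e_n, \calK e_m\rangle = \lambda_n \delta_{nm}$. Hence $(\xi_n)_{n\geq 1}$ is an independent sequence of centred Gaussians with $\xi_n \sim \calN(0, \lambda_n)$. For indices with $\lambda_n > 0$ I would define $Y_n := \xi_n^2/\lambda_n$, which produces i.i.d.\ $\chi^2_1$ variables; for the remaining indices $\xi_n = 0$ almost surely, so the associated summand vanishes and $Y_n$ may be taken as an arbitrary independent $\chi^2_1$ without affecting anything. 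Substituting $\xi_n^2 = \lambda_n Y_n$ pathwise then gives
\[
    \sum_{n=1}^\infty \frac{\lambda_n}{(\lambda_n+\alpha)^2}\, \xi_n^2 = \sum_{n=1}^\infty \Big(\frac{\lambda_n}{\lambda_n+\alpha}\Big)^2 Y_n,
\]
which is the claimed representation.

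The main technical point is justifying that both sides are finite random variables so that the pathwise substitution yields an equality in distribution. The left-hand side is almost surely finite since by \cref{defRegularizedVarianceNorm} it equals $\|(\calK + \alpha I)^{-1}\calK^{1/2}(x-\frakm)\|^2$, with $(\calK+\alpha I)^{-1}\calK^{1/2}$ bounded on $V$. For the right-hand side, Gaussianity of $\mu$ on a separable Hilbert space implies that $\calK$ is of trace class, so $\sum_n \lambda_n < \infty$, whence $\sum_n (\lambda_n/(\lambda_n+\alpha))^2 \leq \alpha^{-2} \sum_n \lambda_n^2 < \infty$. The non-negative partial sums therefore have uniformly bounded expectation and converge almost surely by monotone convergence, completing the argument.
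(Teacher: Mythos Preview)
Your proof is correct and follows essentially the same approach as the paper: expand via \eqref{varianceNorm}, recognise the coordinates $\langle e_n, x-\frakm\rangle$ as independent centred Gaussians with variance $\lambda_n$ using the covariance relation $\langle e_n,\calK e_m\rangle=\lambda_n\delta_{nm}$, and rescale to obtain i.i.d.\ $\chi^2_1$ variables. You are more explicit than the paper about the degenerate indices $\lambda_n=0$ and about almost-sure finiteness of the series, but the argument is otherwise the same.
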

\begin{proof}
    Let $(e_n)_{n=1}^\infty$ be an orthonormal sequence of eigenvectors of $\calK$. It follows by \eqref{varianceNorm} that
    \begin{equation*}
        \|X-\frakm\|^2_{\mu, \alpha} = \sum_{n=1}^\infty \frac{\lambda_n}{(\lambda_n + \alpha)^2} \langle e_n, X-\frakm \rangle^2.
    \end{equation*}
    Since $\mu$ is a Gaussian measure, the vectors $e_n$ are by definition Gaussian distributed when acting as continuous linear functionals on $V$. Moreover, we have that
    \begin{align*}
        \E\big[\langle e_n, X-\frakm \rangle\langle e_m, X-\frakm \rangle\big] = \langle e_n, \calK e_m\rangle = \begin{cases}
            \lambda_n & \text{if } n=m,\\
            0 & \text{otherwise},
        \end{cases}
    \end{align*}
    and
    $
        \E\big[\langle e_n, X-\frakm \rangle\big] = \langle e_n, \E[X]\rangle - \langle e_n, \frakm \rangle= 0,
    $
    from which the result follows.
\end{proof}

\section{Nearest Neighbour Properties}\label{sectionNN}

As discussed in \cref{subsecMahalConf}, for some applications it may be advantageous to measure outlier distances via nearest-neighbours rather than the Mahalanobis distance to the mean. In this section, we study some useful properties of the infinite-dimensional nearest- and furthest-neighbour $\mu$-variance norm in the Hilbert space setting. Let $X_0, ..., X_N \sim \mu$ be i.i.d. samples. We are interested in studying the random empirical variance norm nearest-neighbour distance, defined as
\begin{equation}\label{eqRandomNNDistance}
    \min\limits_{1 \leq i \leq N} \|X_0 - X_i\|_{\mu^N,\alpha}.
\end{equation}
We adopt a stepwise approach by analyzing three progressively more complex cases:
\begin{enumerate}
    \item The reference point $X_0$ belongs to the corpus defining the empirical measure $\mu^N$.
    \item $X_0$ is out of corpus and the norm measured w.r.t.\ $\mu$; this requires $\alpha$-regularization.
    \item $X_0$ is out of corpus and the norm measured w.r.t.\ the empirical measure $\mu^N$. 
\end{enumerate}

\subsection{Case 1: Reference point in corpus}
We begin by considering deterministic sample points $\{x_1, ..., x_N \}$ defining an empirical measure $\mu^N$, and simplify \eqref{eqRandomNNDistance} by setting $x_0 = x_1$ and leaving out $x_1$ from the calculation of the minimum. In this case, one can work with the non-regularized variance norm, since $x_0$ will lie in the empirical Cameron-Martin space. In the high-dimensional setting, an interesting property emerges: if all $x_1, ..., x_N$ are linearly independent, then without regularization every pair of distinct points in the corpus is equidistant under the empirical variance norm, at distance exactly $\sqrt{2N}$. The use of the nearest-neighbour Mahalanobis distance is uninformative in this case. We formalize this in the following:

\begin{proposition}\label{propEmpiricalVarianceNormSqrt2NEquidistant}
    Let $\{ x_1, ..., x_N\} \subset V$ be linearly independent, and let $\mu^N = \frac{1}{N}\sum_{i=1}^N \delta_{x_i}$ be the empirical measure. Then for any $i \neq j$ we have
    \begin{align*}
        \sqrt{2N}\left(\frac{\lambda_{N-1}}{\lambda_{N-1}+\alpha}\right)\leq \|x_j - x_i\|_{\mu^N, \alpha} \leq \sqrt{2N}\left(\frac{\lambda_{1}}{\lambda_{1}+\alpha}\right),
    \end{align*}
where $\lambda_{m}$, $m\geq 1$, are the eigenvalues of $\calK_N$ in decreasing order. In particular, when $\alpha=0$, we obtain that $\|x_j - x_i\|_{\mu^N} = \sqrt{2N}$.
\end{proposition}
\begin{proof}
Recall from \cref{theoremCameronMartinEmpiricalMeasure} that
$
    \|\cdot\|^2_{\mu^N, \alpha} =  \sum_{m=1}^M \frac{\lambda_m}{(\lambda_m+\alpha)^2} \langle \cdot, z_m\rangle^2,
$
where $\lambda_m$ and $z_m$ are the eigenvalues and (normalized) eigenvectors of the covariance operator $\calK_N$ of $\mu^N$. \cref{theoremEigenvectorsCovOperator} says these are given by an SVD decomposition: Let $A \in \bbR^{N\times N}$ be defined by $A_{i,j} = \langle f_i, f_j\rangle$, where $f_i = \frac{x_i-\widehat{\frakm}}{\sqrt{N}}$, with SVD decomposition $A = U \Sigma U^T$. Let $v^{(n)}$ be the $n$-th column of $U$, and $\lambda_n = \Sigma_{n,n}$. Define $M = \max\{m\leq N : \lambda_m > 0\}$, and $e_n = \sum_{i=1}^N v^{(n)}_i f_i$. The vectors $z_m = e_m / \sqrt{\lambda_m}$ are orthonormal eigenvectors of $\calK_N$.
First note for all $j, m,$ that
    \begin{align*}
    \langle f_j,e_m\rangle &= 
    \left\langle f_j, \sum_{k=1}^N v^{(m)}_k f_k\right\rangle 
    = \sum_{k=1}^N v^{(m)}_k \langle f_j,  f_k\rangle 
    = (Av^{(m)})_j = \lambda_m v_j^{(m)},
\end{align*}
hence
\begin{align*}
    \|x_j - x_i \|_{\mu^N, \alpha}^2 
    &= N\|f_j - f_i \|_{\mu^N, \alpha}^2 
    = N \sum_{m=1}^M \frac{\lambda_m}{(\lambda_m+\alpha)^2} \langle f_j - f_i, \frac{e_m}{\sqrt{\lambda_m}}\rangle^2 \\
    &= N \sum_{m=1}^M \frac{\lambda_m^2}{(\lambda_m+\alpha)^2}\left( v_j^{(m)} -v_i^{(m)} \right)^2. 
\end{align*}
Next, we have that $M=N-1$ due to linear independence and mean-centering. Consequently, we have that $v^{(N)} = (\frac{1}{\sqrt{N}}, ..., \frac{1}{\sqrt{N}})$ which follows from the unit vector belonging to the null space of $A$. Thus
\begin{align*}
    N \sum_{m=1}^M \left( v_j^{(m)} -v_i^{(m)} \right)^2 
    = N \left( \| v_j - v_i\|^2 - (v_j^{(N)}-v_i^{(N)})^2\right) 
    = N(2-0) = 2N,
\end{align*}
where $v_j$ denotes the $j$-th row of $U$, and the conclusion follows from the monotonicity of $\lambda \mapsto \frac{\lambda}{\lambda + \alpha}$.
\end{proof}

In the finite-dimensional $V = \bbR^d$ case, \cref{propEmpiricalVarianceNormSqrt2NEquidistant} is only applicable when the sample size $N$ satisfies $N\leq d$. However, when $V$ is infinite-dimensional, for many relevant distributions, sample points will be linearly independent with probability one, and all corpus points will be equidistant without regularization. This suggests that the in-corpus case is ill-suited for studying the infinite-dimensional setting, and provides additional justification for using $\alpha$-regularization. As such, we consider an alternative approach in the sequel.

\subsection{Case 2: Reference point out of corpus, sub-Gaussian measure}

We now consider the \textit{out-of-corpus} case, where the random reference point $X_0$ and corpus points $X_1, ..., X_N$ are drawn identically distributed from a sub-Gaussian measure $\mu$. Our goal is to analyse the concentration properties of the difference between the nearest- and furthest-neighbor distances in the infinite-dimensional setting. For ease of notation, we use the symbol $\lesssim$ to denote inequality up to a universal constant. We use the following infinite-dimensional definition of sub-Gaussianity, which is a special case of an \textit{$R$-sub-Gaussian} random variable with respect to covariance operators \citep{1997RSubGaussianity}.

\begin{definition}\label{defSubGaussianHilbert}
    A random variable $X \sim \mu \in \calM_V$ with covariance operator $\calK$ is said to be sub-Gaussian with respect to $\calK$ if there exists a $\beta \geq 0$ such that for all $z\in V$
    \begin{equation}\label{eqSubGaussianHilbert}
        \E \left[  e^{\langle z, X-\E X\rangle}\right] \leq e^{\beta^2\langle \calK z, z\rangle}.
    \end{equation}
    Moreover, the sub-Gaussian norm of $X$ with respect to $\calK$ is defined as the smallest constant $\beta \geq 0$ such that \eqref{eqSubGaussianHilbert} holds, denoted $\|X\|_{\psi_2,\calK}$. $X$ is said to be $\calK$-Gaussian if \eqref{eqSubGaussianHilbert} is an equality with $\beta=1$.
\end{definition}

Our analysis relies on a Hilbert-space version of the classical Hanson-Wright inequality \citep{2021HansonWrightHilbertSpaces}. To apply this result, we need to furthermore impose a mild Bernstein-like tail condition on the squared norm of our random variables, as follows:

\begin{definition}\label{defBernsteinCond}
    A sub-Gaussian random variable $X \sim \mu \in \calM_V$ with covariance operator $\calK$ and sub-Gaussian norm $\beta = \|\mu\|_{\psi_2, \calK}$ is said to satisfy a Bernstein condition on the squared norm with respect to $\calK$ if
    \begin{equation}\label{eqBernsteinCond}
        \E \left| \|X\|^2-\E\|X\|^2 \right|^k \lesssim k! \beta^{k-2}\|\calK\|_{op}^{k-2} \|\calK\|^2_{HS}
    \end{equation}
    for all $k \geq 3$.
\end{definition}

The following definitions of effective rank $\textbf{r}(\calK)$ and dimension $\textbf{d}(\calK)$ for covariance operators $\calK$, defined below, will play an important role in our analysis. The effective rank $\textbf{r}(\calK)$ was previously used by \citet{2017ConcentrationInequalitiesAndMomentBoundsForSampleCovarianceOperators} in the infinite-dimensional setting, see also \citet{2012VershyninIntroductionToTheNonAsymptoticAnalysisOfRandomMatrices}. Another well-studied measure of rank in matrix theory is the so-called \textit{stable rank} of a matrix (see \citet{2025StableRankAndIntrinsicDimensionOfRealAndComplexMatrices} and references therein). The effective dimension $\textbf{d}(\calK)$ below can be obtained as the squared quotient of the stable rank and effective dimension, and has previously been used in the context of particle systems in physics under the name of \textit{participation ratio} \citep{2022AScaleDependentMeasureÓfSystemDimensionalityPHYSICS, 1993LocalizationTheoryAndExperimentPHYSICS}. These quantities naturally appear in our analysis as a byproduct of the Hanson-Wright inequality \citet{2021HansonWrightHilbertSpaces}. Note that if $X$ is a $d$-dimensional isotropic Gaussian with covariance matrix $\sigma^2 I_d$, then the effective dimension and rank is exactly $d$.

\begin{definition}\label{defEffectiveDimensionAndRank}
    Let $\mu \in \calM_V$ with covariance operator $\calK$. We define the effective dimension $\textbf{d}(\calK)$ and effective rank $\textbf{r}(\calK)$ by
    \begin{equation*}
        \textbf{d}(\calK) = \frac{\operatorname{Tr}(\calK)^2}{\|\calK\|_{HS}^2} = \frac{(\sum_{i=1}^\infty \lambda_i)^2}{\sum_{i=1}^\infty \lambda_i^2}, \qquad \qquad 
        \textbf{r}(\calK) = \frac{\operatorname{Tr}(\calK)}{\|\calK\|_{op}} = \frac{\sum_{i=1}^\infty \lambda_i}{\lambda_1}
    \end{equation*}
    where $\lambda_m$ are the eigenvalues of $\calK$ in decreasing order.
\end{definition}

We first consider the general Hilbert space case with norm $\|\cdot\|$, and then specialize to the $\mu$-variance norm via a transformation. The following result shows that with probability $1-\delta$, the relative difference between the furthest- and nearest-neighbour distance is bounded above by a term proportional to $\sqrt{\log(N/\delta)}$, divided by the effective rank or dimension of the underlying data. Therefore, if $\log(N/\delta) \lesssim \textbf{r}(\calK)$ and $\log(N/\delta) \lesssim \textbf{d}(\calK) $, then the difference between the furthest- and nearest-neighbour distance is small. This result provides a theoretical justification for why nearest-neighbour methods can remain effective in infinite-dimensional settings where one might expect random points to become nearly equidistant: The concentration phenomenon is not governed by the ambient infinite dimension of $V$, but rather by the effective dimensionality of the covariance operator $\calK$.

\begin{proposition}\label{propMinMaxDifference}
    Let $\delta \in (0,1)$. If $X_0, X_1, ..., X_N$ are drawn i.i.d. from a centered sub-Gaussian measure $\mu$ satisfying the Bernstein condition \eqref{eqBernsteinCond}, then with probability $1-\delta$
    \begin{align*}
    \frac{\max\limits_{1 \leq i \leq N}\|X_0-X_i\|^2 - \min\limits_{1 \leq i \leq N}\|X_0-X_i\|^2}{\E \|X_0 - X_1\|^2}
    & \lesssim \beta^2 \epsilon(N, \delta, \calK)
    \end{align*}
    where
    \begin{align*}
        \epsilon(N, \delta, \calK) =  \max\left\{ \sqrt{\frac{ \log(2N/\delta)}{\textbf{d}(\calK)}},\; \frac{\log(2N/\delta)}{\textbf{r}(\calK)} \right\}.
    \end{align*}
\end{proposition}
\begin{proof}
Let $D_i^2 = \| X_0 - X_i\|^2$. First note that
$
    \E [D_i^2] = \E \left( \|X_0\|^2 + \|X_i\|^2 -  2\langle X_0, X_i\rangle \right)  = 2 \operatorname{Tr}(\calK).
$
By the Hanson-Wright inequality in Hilbert spaces \cite[Theorem 2.8]{2021HansonWrightHilbertSpaces}, there exists a universal constant $C>0$ such that for any $t>0$
\begin{align*}
    P\left( \left| D_i^2 - \E[D_i^2] \right| \ge t \right) \le 2 \exp\left( -C \min\left\{ \frac{t^2}{\beta^4\|\calK\|_{HS}^2}, \frac{t}{\beta^2\|\calK\|_{op}} \right\} \right).
\end{align*}
We want to use a union bound to obtain a concentration result. Let $\calA_i = \{ |D_i^2 - \E [D_i^2]| \geq t\}$. Consider the inequality
\begin{align*}
    P(\bigcup_{i=1}^N \calA_i) 
    \leq N P(\calA_1)
    \leq  N 2 \exp\left( -C \min\left\{ \frac{t^2}{\beta^4\|\calK\|_{HS}^2}, \frac{t}{\beta^2\|\calK\|_{op}} \right\}\right) 
    \leq \delta.
\end{align*}
We want this probability to be at most $\delta$, so by rearranging the terms we obtain this happens when
\begin{align*}
    t\gtrsim \beta^2\|\calK\|_{HS} \sqrt{\log\frac{2N}{\delta}}, \qquad \textnormal{and} \qquad  t\gtrsim \beta^2 \|\calK\|_{op}\log\frac{2N}{\delta}
\end{align*}
This gives, with probability $1-\delta$ and for all $1\leq i \leq N$, that
\begin{align*}
    |D_i^2 - \E [D_i^2] | \lesssim \beta^2 \max\left\{\|\calK\|_{HS} \sqrt{\log\frac{2N}{\delta}},\; \|\calK\|_{op}\log\frac{2N}{\delta} \right\}.
\end{align*}
The result then immediately follows by considering the furthest-neighbour distance minus the nearest-neighbour distance, divided by the relative expected size $E [D_i^2] =2\operatorname{Tr}(\calK)$.
\end{proof}

For the Tikhonov-regularized variance norm, we have the identity $\|z\|_{\mu, \alpha} = \|\calS_\alpha z\|$ for all $z\in V$, where $\calS_\alpha = (\calK+\alpha I)^{-1}\sqrt{\calK}$ (see \cref{subsecRegularizedVarianceNorms}). This allows us to reframe the problem: instead of considering a random variable $X \sim \mu$ with a regularized norm, we can equivalently study the transformed variable $Y = \calS_\alpha X$ with the standard Hilbert space norm. The distribution of $Y$ is given by the pushforward measure $\mu_\alpha = \mu \circ \calS_\alpha^{-1}$, and $Y$ is $\calK_\alpha$-sub-Gaussian with respect to its own covariance operator $\calK_\alpha$ if $X$ is $\calK$-sub-Gaussian (simply apply \eqref{eqSubGaussianHilbert} to the transformed measure). The covariance operator $\calK_\alpha$ is given by
\begin{equation*}
    \calK_{\alpha}
    = \calS_\alpha \calK \calS_\alpha^*
    = (\calK + \alpha I)^{-2}\calK^2 ,
\end{equation*}
which has the spectral decomposition $\calK_{\alpha} = \sum_{i=1}^\infty \left(\frac{\lambda_i}{\lambda_i + \alpha}\right)^2 \langle e_i, \cdot \rangle$. The effective dimension and rank of this operator are then
\begin{align*}
    \textbf{d}(\calK_{\alpha}) 
    = \frac{\operatorname{Tr}(\calK_{\alpha})^2}{\|\calK_{\alpha}\|_{HS}^2}
    = \frac{\left( \sum_{i=1}^\infty \left(\frac{\lambda_i}{\lambda_i+\alpha}\right)^2 \right)^2}{\sum_{i=1}^\infty \left(\frac{\lambda_i}{\lambda_i+\alpha}\right)^4}, \qquad 
    \textbf{r}(\calK_{\alpha}) 
    = \frac{\operatorname{Tr}(\calK_{\alpha})}{\|\calK_{\alpha}\|_{op}} 
    = \frac{\sum_{i=1}^\infty \left(\frac{\lambda_i}{\lambda_i + \alpha}\right)^2}{\left(\frac{\lambda_1}{\lambda_1 + \alpha}\right)^2}.
\end{align*}

\begin{corollary}\label{corrMinMaxDifferenceVarianceNorm}
Let $\alpha, \delta>0$. If $X_0, X_1, ..., X_N$ are drawn identically distributed from a centered sub-gaussian measure $\mu$, and if $\mu_\alpha$ satisfies the Bernstein condition \eqref{eqBernsteinCond} with sub-Gaussian constant $\beta_\alpha$, then with probability $1-\delta$
\begin{align*}
    \frac{\max\limits_{1 \leq i \leq N}\|X_0-X_i\|^2_{\mu, \alpha} - \min\limits_{1 \leq i \leq N}\|X_0-X_i\|^2_{\mu, \alpha}}{\E \|X_0 - X_1\|^2_{\mu, \alpha}}
    \lesssim \beta_\alpha^2 \epsilon(N, \delta, \calK_{\alpha}).
\end{align*}
\end{corollary}
\begin{remark}
    If $\mu$ is $\calK$-Gaussian, then $\mu_\alpha$ is $\calK_\alpha$-Gaussian, and hence both satisfy the Bernstein conditions, and $\beta = \beta_\alpha = 1$ \citep[see e.g.][]{2021HansonWrightHilbertSpaces}.
\end{remark}

\subsection{Case 3: Reference point out of corpus, empirical measure}

We now proceed to study the case where we take the variance norm with respect to the empirical measure $\mu^N$. In our analysis, we will use the following concentration result from \citet[Theorem 9]{2017ConcentrationInequalitiesAndMomentBoundsForSampleCovarianceOperators} applied to centered square integrable Hilbert space valued random variables.

\begin{lemma}\label{lemmaCovarianceConcentration}[\cite{2017ConcentrationInequalitiesAndMomentBoundsForSampleCovarianceOperators}]
Let $X, X_1, ..., X_N$ be i.i.d. square integrable centered random vectors with covariance operator $\calK$. If $X$ is $\calK$-sub-Gaussian, then for all $t\geq 1$, with probability at least $1- e^{-t}$,
\begin{equation*}
    \|\calK - \calK_N\|_{op} \lesssim \|\calK\|_{op} \max\left( \sqrt{\frac{\textbf{r}(\calK)}{N}}, \frac{\textbf{r}(\calK)}{N}, \sqrt{\frac{t}{N}}, \frac{t}{N} \right).
\end{equation*}
\end{lemma}

In the next proposition, we obtain a result analogous to \cref{propMinMaxDifference}, but with an additional error term due to the use of empirical measures. This error term is of order $\calO\left( \sqrt{\frac{\log(N)}{N}}\right)$ as $N \to \infty$, assuming $\delta$ and $\alpha$ constant.

\begin{proposition}
    Let $\alpha >0,$ $\delta\in(0,1)$, and let $X_0, X_1, ..., X_N$ be drawn i.i.d.\ from a measure $\mu$ satisfying the assumptions of \cref{propMinMaxDifference} and \cref{corrMinMaxDifferenceVarianceNorm}. Then with probability $1-\delta$ the empirical $\mu^N$-variance norm satisfies
    \begin{align*}
    \frac{\max\limits_{1 \leq i \leq N}\|X_0-X_i\|^2_{\mu^N, \alpha} - \min\limits_{1 \leq i \leq N}\|X_0-X_i\|^2_{\mu^N, \alpha}}{\E \|X_0 - X_1\|^2_{\mu, \alpha}}
    & \lesssim \beta_\alpha^2\epsilon(N, \frac{\delta}{3}, \calK_{\alpha}) + \frac{\Delta(N, \frac{\delta}{3}, \calK)\operatorname{Tr}(\calK)}{\alpha^2 \operatorname{Tr}(\calK_\alpha)} \left(1 + \beta^2\epsilon(N, \frac{\delta}{3}, \calK)\right),
    \end{align*}
    where $\Delta(N, \delta, \calK) = \|\calK\|_{op} \max\left( \sqrt{\frac{\textbf{r}(\calK)}{N}}, \frac{\textbf{r}(\calK)}{N}, \sqrt{\frac{\log(1/\delta)}{N}}, \frac{\log(1/\delta)}{N} \right)$.
\end{proposition}
\begin{proof}
Let $D_i^2 = \|X_0 - X_i\|_{\mu, \alpha}^2$ and $D_{i,N}^2 =  \|X_0 - X_i\|_{\mu^N, \alpha}^2$. Consider the inequality
\begin{equation}\label{eqEmpiricalNNConc1}
    \max_{1 \leq i \leq N}D_{i,N}^2 - \min_{1 \leq i \leq N}D_{i,N}^2
     \leq \left( \max_{1 \leq i \leq N}D_i^2 - \min_{1 \leq i \leq N}D_i^2 \right) + 2 \max_{1 \leq i \leq N} \left| D_{i,N}^2 - D_i^2 \right|.
\end{equation}
The first term of \eqref{eqEmpiricalNNConc1} can be bounded by \cref{corrMinMaxDifferenceVarianceNorm} with probability $1-\delta/3$:
\begin{equation}\label{eqEmpiricalNNConc2}
    \max_{1 \leq i \leq N}D_i^2 - \min_{1 \leq i \leq N}D_i^2 \lesssim \beta_\alpha^2 \epsilon(N, \delta/3, \calK_{\alpha}) \operatorname{Tr}(\calK_\alpha).
\end{equation}
The second term of \eqref{eqEmpiricalNNConc1} captures the error from using the empirical covariance operator $\calK_N$ instead of $\calK$. Let $d_i = X_0 - X_i$. The squared norms are quadratic forms:
\begin{equation*}
    D_i^2 = \langle d_i, \calK(\calK + \alpha I)^{-2} d_i \rangle \quad \text{and} \quad D_{i,N}^2 = \langle d_i, \calK_N(\calK_N + \alpha I)^{-2} d_i \rangle.
\end{equation*}
Their difference is bounded by
\begin{align*}
    |D_{i,N}^2 - D_i^2| 
    &= \left| \langle d_i, \left(\calK_N(\calK_N + \alpha I)^{-2} - \calK(\calK + \alpha I)^{-2}\right) d_i \rangle \right| \\
    &\leq \|d_i\|^2 \left\| \calK_N(\calK_N + \alpha I)^{-2} - \calK(\calK + \alpha I)^{-2} \right\| \\
    &\leq \frac{3}{\alpha^2} \|d_i\|^2 \|\calK - \calK_N\|_{op},
\end{align*}
where the last inequality is due to the following bound: We first use the resolvent identity for invertible squared operators, $A^{-2} - B^{-2} = A^{-2}(B-A)B^{-1} + A^{-1}(B-A)B^{-2}$, where $A = \calK_N + \alpha I$ and $B = \calK + \alpha I$, to write
\begin{align*}
    \calK_N(&\calK_N+\alpha I)^{-2}-\calK(\calK+\alpha I)^{-2} \\
    &= \calK_N\big((\calK_N+\alpha I)^{-2}-(\calK+\alpha I)^{-2}\big) + (\calK_N-\calK)(\calK+\alpha I)^{-2} \\
    &= \calK_N(\calK_N+\alpha I)^{-2}(\calK-\calK_N)(\calK+\alpha I)^{-1}
        +\calK_N(\calK_N+\alpha I)^{-1}(\calK-\calK_N)(\calK+\alpha I)^{-2} \\
    &\qquad\qquad\qquad\qquad\qquad\qquad\qquad\qquad\qquad +(\calK_N-\calK)(\calK+\alpha I)^{-2}.
\end{align*}
Taking operator norms and using $\|(\calK+\alpha I)^{-1}\|_{op}\leq \frac{1}{\alpha}$ together with $\|\calK(\calK+\alpha I)^{-1}\|_{op}\leq 1$, we obtain
\begin{align*}
    \big\|&\calK_N(\calK_N+\alpha I)^{-2}-\calK(\calK+\alpha I)^{-2}\big\|_{op} \\
    &\leq \| \calK_N(\calK_N+\alpha I)^{-2}\|_{op} \frac{1}{\alpha}\|\calK-\calK_N\|_{op} + \| \calK_N(\calK_N+\alpha I)^{-1}\|_{op}\frac{1}{\alpha^{2}}\|\calK-\calK_N\|_{op}
    + \frac{1}{\alpha^{2}}\|\calK-\calK_N\|_{op}\\
    &\leq \frac{1}{\alpha}\cdot\frac{1}{\alpha}\|\calK-\calK_N\|_{op} + 1\cdot\frac{1}{\alpha^{2}}\|\calK-\calK_N\|_{op} + \frac{1}{\alpha^{2}}\|\calK-\calK_N\|_{op}\\
    &= \frac{3}{\alpha^{2}}\|\calK-\calK_N\|_{op}.
\end{align*}
Thus, the second term of \eqref{eqEmpiricalNNConc1} is bounded by
\begin{equation*}
    \max_{1 \leq i \leq N} |D_{i,N}^2 - D_i^2| \leq \frac{3}{\alpha^2} \|\calK - \calK_N\|_{op} \max_{1 \leq i \leq N} \|X_0 - X_i\|^2.
\end{equation*}
Next, we apply concentration inequalities to bound $\|\calK - \calK_N\|_{op}$ and $\max_i \|X_0 - X_i\|^2$.
By \cref{lemmaCovarianceConcentration}, with probability at least $1-\delta/3$,
\begin{equation*}
    \|\calK - \calK_N\|_{op} \lesssim \Delta(N, \delta/3, \calK).
\end{equation*}
For the maximum norm term, first recall that $\E\|X_0 - X_i\|^2 = 2\operatorname{Tr}(\calK)$. By the same argument as in \cref{propMinMaxDifference} (Hanson-Wright inequality and a union bound over the $N$ vectors $d_i = X_0 - X_i$), we have with probability at least $1-\delta/3$ that
\begin{equation*}
    \max_{1 \leq i \leq N} \|X_0 - X_i\|^2 \lesssim 2\operatorname{Tr}(\calK) + \beta^2 \epsilon(N, \delta/3, \calK)\operatorname{Tr}(\calK).
\end{equation*}
Combining these bounds via a union bound (total probability $1-\frac{2}{3}\delta$), we get
\begin{equation}\label{eqEmpiricalNNConc3}
    \max_{1 \leq i \leq N} |D_{i,N}^2 - D_i^2| \lesssim \frac{\Delta(N, \frac{\delta}{3}, \calK)\operatorname{Tr}(\calK)}{\alpha^2}  \left(1 + \beta^2 \epsilon(N, \frac{\delta}{3}, \calK)\right).
\end{equation}
Combining \eqref{eqEmpiricalNNConc2} and \eqref{eqEmpiricalNNConc3} in \eqref{eqEmpiricalNNConc1} with yet another union bound (total probability $1-\delta$) and dividing by $\E \|X_0 - X_1\|^2_{\mu, \alpha}$ yields the final result.
\end{proof}

\section{Applications to Multivariate Time Series Novelty Detection}\label{sectionApplicationsToTimeSeries}

In this section, we apply the theory developed in \crefrange{sectionCamMartTheory}{sectionHilbert} to novelty detection of multivariate time series. Given a collection of non-anomalous multivariate time series, referred to as the normal corpus, we are presented with new time series samples that we want to classify as either belonging to the normal class or as outliers. We achieve this by defining an anomaly distance with respect to the corpus using either the Mahalanobis distance or the conformance score (see \cref{defMahalanobisDistance,defConformanceScore}), assuming that our data originates from a suitable Hilbert space. For time series, a natural choice of Hilbert space is the classical space $V = (L^2[0,1])^d$, as well as letting $V$ be the RKHS induced by our choice of kernel or feature map. The choice of $V$ directly affects how we measure similarity between elements in our normal corpus. For instance, for $V=L^2[0,1]$ two time series will be compared in a linear fashion by their $L^2[0,1]$ inner products. On the other hand, if $V$ is for example the RKHS of the signature kernel \citep{2021PdeSignatureKernel}, then the two time series are compared as \textit{rough paths} \citep{1998RoughPathTheory} in a non-linear fashion. Different choices of RKHS will lead to different ways to measure similarity between points, guided by the practitioner, by domain-specific knowledge or by empirical validation through techniques
like cross-validation.

For a given normal corpus $\{ x_1, \cdots, x_N\} \subset V$ we form either the standard empirical measure $\mu^N = \frac{1}{N}\sum_{i=1}^N \delta_{x_i}$, or the kernelized empirical measure $\mu^N = \frac{1}{N}\sum_{i=1}^N \delta_{\phi(x_i)}$ where $\phi$ is a feature map corresponding to a positive definite kernel whose kernel trick is known. The empirical measure $\mu^N$ can be interpreted as an estimator of the underlying distribution of the normal corpus. Given a new sample $y$, we proceed by calculating either the Mahalanobis distance 
$$ d_M(y; \mu^N) = \|y-\widehat{\frakm}\|_{\mu^N\text{-cov}},$$
where $\widehat{\frakm} = \frac{1}{N}\sum_{i=1}^N x_i$ is the mean of the normal corpus, or the conformance score 
$$ d_C(y; \mu^N) = \min_{1\leq n \leq N} \|y-x_n\|_{\mu^N\text{-cov}},$$
using \crefrange{algoGramMatrix}{algoConformance}, which defines an anomaly distance to the normal corpus. We then classify each new sample $y$ as an outlier or as belonging to the normal class based on a threshold $\gamma > 0$. 

The threshold $\gamma$ can be determined through various approaches. A data-driven method involves splitting the normal corpus into training and validation sets, then choosing $\gamma$ as an empirical quantile of the anomaly distances in the validation set. Alternatively, a theoretical approach uses the distribution of the Mahalanobis distance as outlined in \cref{theoremGaussianDistribution}, assuming the data follows a Gaussian distribution. If a labelled subset of outliers is available, a supervised learning approach can be employed, using $k$-fold cross-validation to determine an optimal threshold; however, this requires access to a supervised data set of outliers. In our experiments, we evaluate each anomaly distance using Precision-Recall (PR) AUC and ROC-AUC metrics, which consider sensitivity across all positive thresholds, thus eliminating the need to select a fixed threshold explicitly.

Although semi-supervised anomaly detection using the nearest-neighbour Mahalanobis distance, as opposed to the classical Mahalanobis distance, has been successfully employed in the finite-dimensional $\bbR^d$ setting \citep{2011MahalanobisDistanceKNNSemiconductor, 2020MahalanobisKNNHealthMonitoring, 2023LyonsShao, 2024PaolaSignatureAnomaly}, to our knowledge no comprehensive comparison of these two anomaly distances has been carried out in the literature. In this section we carry out an extensive comparison of our newly introduced kernelized conformance score (including the non-kernelized linear case) against the kernelized Mahalanobis distance for the task of semi-supervised time series novelty detection, using the infinite-dimensional framework developed in the previous sections.

\subsection{Time Series Kernels}\label{subsecTimeSeriesKernels}
We begin by giving a brief summary of the time series kernels considered in our experimentation. These consist of the linear kernel given by the $(L^2[0,1])^d$ inner product, a family of generalized integral-class kernels related to linear time warping \citep{2001DynamicTimeWarpingKernels}, and a collection of time-dynamic state-of-the-art time series kernels including the global alignment kernel \citep{2007GlobalAlignmentKernel, 2011FastGlobalAlignment}, the Volterra reservoir kernel \citep{2022GononVolterraReservoirKernel}, and signature kernels \citep{2019kernelsForSequentiallyOrderedData, 2021PdeSignatureKernel}.

\subsubsection{Static and Integral Class Kernels}\label{subsecIntegralClassKernels}
Consider first the natural Hilbert space $(L^2[0,1])^d$, where the inner product is given by
\begin{align}\label{eqIntegralClass1}
    \langle x, y\rangle = \int_{[0,1]} \langle x_t, y_t \rangle_{\bbR^d} dt,
\end{align}
for $x,y \in (L^2[0,1])^d$. If $x$ and $y$ are discretized on a regular time grid of size $T$, and consequently can be viewed as $d$-dimensional time series of length $T$, then the inner product \eqref{eqIntegralClass1} can simply be computed by flattening $x$ and $y$ into vectors in $\bbR^{Td}$, and then calculating their Euclidean dot product. Instead of using the Euclidean dot product, or in other words the linear kernel, one could replace this with any static kernel defined on $\bbR^{dT}$ as per \cref{defIntegralClassKernel} below. Similarly, we can also replace the linear static kernel in \eqref{eqIntegralClass1} to obtain a class of integral-type kernels with respect to a static kernel. In our experimentation we consider both flattened and integral-type kernels. We give the following definitions:

\begin{definition}\label{defIntegralClassKernel}
    We define a static kernel on $\bbR^d$ as a positive definite kernel $k: \bbR^d \times \bbR^d \to \bbR$. Given such $k$, we define the time series integral class kernel of $k$ to be the kernel
    \begin{align*}
        K_k(x,y) = \int_{[0,1]} k(x_t, y_t) dt,
    \end{align*}
    defined for $d$-dimensional time series $x$ and $y$.
\end{definition}

The positive definiteness of $K_k$ follows trivially from that of $k$. By replacing the Euclidean dot product with a possibly non-linear static kernel, an algorithm may be able to take certain non-linearities of the data into account to increase classification accuracies. The integral type kernels can in fact be seen as a variant of linear time warping kernels, which were first introduced by \citet{2001DynamicTimeWarpingKernels}. The static kernels we consider in this paper are:

\begin{enumerate}[label=(\roman*)]
    \item The linear kernel $k_{linear}(x,y) = \langle x, y\rangle$, which does not have any hyperparameters.
    \item The polynomial kernel $k_{poly}(x,y) = (c + \langle x ,y\rangle)^p$ with hyperparameters $c\in \bbR$ and $p \in \bbZ_+$.
    \item The RBF kernel $k_{RBF}(x,y) = e^{-\frac{|x-y|^2}{2\sigma^2}}$ with hyperparameter $\sigma > 0$.
\end{enumerate}

We thus have two distinct classes of time series kernels parametrized by static kernels: One is to consider flattened time series and static kernels in $\bbR^{Td}$, and the other is integral class kernels with respect to static kernels on $\bbR^d$. Note that the integral and static class kernels coincide when $k$ is the linear kernel, and are distinct otherwise. These time series kernels can be computed in $\calO(Td)$ time.

\subsubsection{Global Alignment Kernel}\label{subsecGlobalAlignment}

The global alignment kernel (GAK) \citep{2007GlobalAlignmentKernel, 2011FastGlobalAlignment} is a dynamic-time kernel which is able to take non-linear time lags into account when measuring the similarity of two time series via dynamic time warping. While the classical dynamic time warping fails to define positive definite kernels due to failing to satisfy the triangle inequality, the global alignment kernel is able to overcome this by summing over all possible global alignments of the time series.

\begin{definition}
    Let $x = (x_1, \cdots, x_T)$ and $y = (y_1, \cdots, x_L)$ be two time series of length $T$ and $L$ respectively. An alignment $\pi$, denoted $\pi \in \calA(T,L)$, is defined as a pair $\pi = (\pi_1, \pi_2)$ of vectors of length $p \leq T+L-1$ such that $1 = \pi_1(1) \leq \cdots \leq \pi_1(p) = T$ and $1 = \pi_2(1) \leq \cdots \leq \pi_2(p) = L$. Given a similarity measure $\varphi : \bbR^d\times\bbR^d \to [0,\infty)$, the cost $D_{x,y}(\pi)$ is defined as
    \begin{align*}
        D_{x,y}(\pi) := \sum_{i=1}^{|\pi|} \varphi(x_{\pi_1(i)}, y_{\pi_2(i)}),
    \end{align*}
    and the global alignment kernel is defined as
    \begin{align}
        K_{GA}(x,y) &= \sum_{\pi \in \calA(T,L)} e^{- D_{x,y}(\pi)}  
        = \sum_{\pi \in \calA(T,L)} \prod_{i=1}^{|\pi|} \kappa(x_{\pi_1(i)}, y_{\pi_2(i)}), \label{eqGAK}
    \end{align}
    where $\kappa = e^{-\varphi}$ is the local similarity.
\end{definition}

\citet{2007GlobalAlignmentKernel} proved that $K_{GA}$ defined via a local kernel $\kappa$ is positive definite if $\frac{\kappa}{1+\kappa}$ is positive definite. A sufficient condition for this is for $\kappa$ to be geometrically or infinitely divisible. In practice the local kernel $\kappa = \frac{k_{RBF}}{2 - k_{RBF}}$ is often used, and due to the exponential nature of \eqref{eqGAK} the GAK kernel is always made to be normalized in feature space via $\frac{K_{GA}(x,y)}{\sqrt{K_{GA}(x,x)K_{GA}(y,y)}}$.
The GAK kernel has a single hyperparameter $\sigma>0$ inherited from the static RBF kernel, and $K_{GA}(x, y)$ can be computed in $\calO(TLd)$ time using dynamic programming.

\subsubsection{Volterra Reservoir Kernel}\label{subsecVolterraReservoirKernel}

The Volterra Reservoir Kernel (VRK) \citep{2022GononVolterraReservoirKernel} is a universal dynamic kernel designed for sequences of arbitrary length. The kernel is built by constructing a state-space representation of the classical Volterra series expansions \citep{wiener:book, sandberg1983series, Boyd1985}, a series representation for analytic maps between sequences. As discussed in detail in \cite{2022GononVolterraReservoirKernel,RC13} this idea is closely related to the principle of reservoir computing \citep{maass1, Jaeger04} and associated kernels \citep{RC16}. The VRK kernel was recently shown to outperform the RBF, GAK, and signature kernels in a market forecasting task \citep{2022GononVolterraReservoirKernel}.

For sequences of length $T$ the VRK kernel
with hyperparameters $\tau \in \bbR$ and $\lambda \in (0, 1)$ is defined as
\begin{align*}
    K^{\textsc{Volt}}(x,y) 
    &= 1 + \sum_{k=1}^T \lambda^{2k} \prod_{t=0}^{k-1} \frac{1}{1-\tau^2 \langle x_{T-t}, y_{T-t}\rangle},
\end{align*}
for time series $x$ and $y$ of equal length  such that $\tau^2 \| x\|\|y\| < 1$, where $\langle \cdot, \cdot \rangle$ is the Euclidean inner product on $\bbR^d$. The solution can be computed in $\calO(Td)$ time using a recursive relation between the kernel at different time steps.

\subsubsection{Signature Kernels}\label{subsecSignatureKernel}

The signature kernel \citep{2019kernelsForSequentiallyOrderedData, 2021PdeSignatureKernel} is a positive definite kernel for sequential data based on tools originating from stochastic analysis and rough path theory \citep{1998RoughPathTheory}. It has many desirable theoretical properties such as invariance to time-reparametrization, universality, and characteristicness on compact sets. Algorithms using signature kernels have successfully been applied to a wide variate of fields since their inception, for instance in Bayesian forecasting \citep{2020SignatureBayesianGaussianProcesses}, hypothesis testing \citep{2021SignatureKernelMeanEmbeddings}, and for support vector machines \citep{2021PdeSignatureKernel, 2023RandomFourierSignatureFeatures} achieveing state-of-the-art accuracies.

Below we give a very brief construction of the signature kernel, which is defined as an inner product in the extended tensor algebra via the so-called \textit{signature transform}. For a detailed introduction to signature kernels we refer to the seminal papers by \citet{2019kernelsForSequentiallyOrderedData, 2021PdeSignatureKernel}, the review article by \citet{2023SignatureKernelReview}, and the book by \citet{2024CassSalviLectureNotes}.

\begin{definition}
    Let $\calH$ be a Hilbert space. The $m$-fold iterated integral of a bounded variation path $x \in BV([0,1], \calH)$ is recursively defined as
    \begin{align*}
        S_0(x) := 1, \qquad S_{m+1}(x) = \int_0^1 S_{m}(x) \otimes dx_t.
    \end{align*}
    We define the signature transform as the map
    \begin{align*}
        S : BV([0,1], \calH)& \to \prod_{m=0}^\infty \calH^{\otimes m} \\
        x& \mapsto \big( S_m(x)\big)_{m=0}^\infty,
    \end{align*}
    and similarly, we define the truncated signature as the map $S_{0:n}(x) := \big( S_m(x)\big)_{m=0}^n$. Here we use the convention that $\calH^{\otimes 0} = \bbR$.
\end{definition}

Given a path $x\in BV([0,1], \bbR^d)$ and a static kernel $k$ on $\bbR^d$, we may canonically lift $k$ to a path $k_x$ taking values in its RKHS $\calH$ via 
$
    t \mapsto k(x_t, \cdot) \in \calH
$
using the reproducing kernel property of $k$. If $k$ is the linear kernel $k(x,y) = \langle x, y\rangle_{\bbR^d}$, then $k_x$ is simply the original path $x$. However, if we choose $k$ to be a non-linear kernel such as the RBF kernel, then $k_x$ would genuinely be different to $x$, and in this particular case $k_x$ would take values in an infinite-dimensional Hilbert space where direct computations of truncated signature features are impossible. The main idea behind the signature kernel is to define the sequential kernel $k^{sig}$ w.r.t. a static kernel $k$ as the inner product of signature transforms $S(k_x)$ and $S(k_y)$ given two paths $x$ and $y$.

\begin{definition}
   Let $k$ be a static kernel on $\bbR^d$. We define the $k$-lifted signature kernel as the mapping $k^{sig} : BV([0,1], \bbR^d) \times BV([0,1], \bbR^d) \to \bbR$,
   \begin{align*}
       k^{sig}(x, y) = \sum_{m=0}^\infty \big\langle S_m(k_x), S_m(k_y) \big\rangle_{\calH^{\otimes m}},
   \end{align*}
   where $ \langle \cdot, \cdot \rangle_{\calH^{\otimes m}}$ is the Hilbert-Schmidt inner product defined as $\langle a, b \rangle_{\calH^{\otimes m}} = \sum_{i=1}^n \langle a_i, b_i \rangle_\calH$ for elements $a = a_1 \otimes \cdots \otimes a_m$ and $b = b_1 \otimes \cdots \otimes b_m$. The truncated singature kernel $k^{sig}_{0:n}$ is defined similarly using the truncated signature.
\end{definition}

There are currently three main algorithms for computing signature kernels, each of which come with their separate advantages and disadvantages. We list the methods below:
\begin{enumerate}
    \item If $\dim \calH = d < \infty$, then the truncated signature $S_{0:m}(x)$ can be computed exactly in $\calO(Td^m)$ time, when treating the time series $x$ as a piecewise linear path of length $T$. The truncated signature can then be computed by taking the inner product of the truncated signature. In practice this method is only applicable when $k$ is the trivial linear kernel, and when $d$ is very small (e.g. $d<5$) due to the exponential time complexity.

    \item The second method is due to \citet[Algorithms 3 and 6]{2019kernelsForSequentiallyOrderedData}, and takes advantage of the kernel trick to compute $k_{0:m}^{sig}(x,y)$ via a Horner-type scheme. This can be computed in $\calO(LTmd)$ time using a non-geometric approximation of $k^{sig}(x,y)$, or exactly in $\calO(LT(md + m^3))$ time when viewing $x$ and $y$ as piecewise linear paths of lengths $T$ and $L$ with state-space $\bbR^d$.

    \item The last method is due to \citet{2021PdeSignatureKernel}, who proved that the signature kernel $k^{sig}(x|_{[0,s]},y|_{[0,t]})$ solves the Goursat PDE
    \begin{align}\label{eqGoursatPDE}
        k^{sig}(x|_{[0,s]},y|_{[0,t]}) = 1 + \int_0^s \int_0^t  k^{sig}(x|_{[0,u]},y|_{[0,v]}) \langle dk_{x_u}, dk_{x_v} \rangle_{\bbR^d},
    \end{align}
    where $x|_{[0,s]}$ denotes the restriction of $x$ to the interval $[0, s]$. Equation \eqref{eqGoursatPDE} can be solved for piecewise linear paths using numerical PDE methods in $\calO(LTd)$ time \citep{2021PdeSignatureKernel}, but is often much slower than the truncated approaches.
\end{enumerate}

Generally the RBF-lifted signature kernel is preferred over the vanilla signature kernel. This is partly due to the latter having a tendency to blow up when the underlying time series are not properly normalized, something which is particularly pronounced for the PDE signature kernel which essentially acts as an inner product of tensor exponentials. In our experimentation we use the truncated signature kernel with the linear and RBF static kernels, as well as the RBF-lifted PDE signature kernel.

\vspace{10pt}

Another recently introduced variant of the signature and its signature kernel is the so-called \textit{randomized signature} \citep{2021ExpressivePowerOfRandomizedSignatures}, which we now define.

\begin{definition}\label{defRandomizedSignature}
    Let $M\geq 1$ be an integer. Fix an initial condition $z_0 \in \bbR^M$, random matrices $A_1, \cdots, A_d \in \bbR^{M\times M}$, random biases $b_1, \cdots, b_d \in \bbR^{M}$ and an activation function $\sigma$. The randomized signature $Z$ of $x \in BV([0,1], \bbR^d)$ is defined as the solution of the controlled differential equation (CDE)
    \begin{align}\label{eqRandomizedSignature}
        dZ_t = \sum_{i=1}^d \sigma(A_i Z_t + b_i)dx_t^{(i)}, \qquad Z_0 = z_0,
    \end{align}
    where $x^{(i)}$ denotes the $i$'th component of $x$. The randomized signature kernel is defined as the inner product of two randomized signatures.
\end{definition}

The randomized signature was first constructed by \citet{2021ExpressivePowerOfRandomizedSignatures} as a random projection of the signature, with an argument based on a non-trivial application of the Johnson-Lindenstrauss lemma. Randomized signatures have recently been successfully used for market anomaly detection \citep{2022CrpytoMarketAnomalyDetection}, graph conversion \citep{schäfl2023gsignatures}, optimal portfolio selection \citep{2023TeichmannOptimalPortfolioSelection, 2024cuchieroSignaturePortfolioTheory}, generative time series
modelling \citep{2024NiklasGononUniversalrandomisedsignaturesgenerative}, and for learning rough dynamical systems \citep{2023OnTheEffectivenessOfRandomizedSignaturesAsReservoirForLearningRoughDynamics}. The CDE \eqref{eqRandomizedSignature} has since been studied from the perspective of randomly initialized ResNets \citep{2023NeuralSignatureKernels, 2024NikolaTheoreticalDoundationsDeepSelectiveStateSpaceModels}, and path developments on compact Lie groups \citep{2023PCFGAN, 2024PathDevelopmentNetworkWithFiniteDimensionalLieGroup, 2024WillTurnerThomasCassFreeProbabilityPathDevelopments}. In our experiments, we use Gaussian random matrices and biases, with tanh activation function.

\subsection{Experiments}\label{subsecExperiments}
In this section we present an empirical study comparing the (potentially kernelized) Mahalanobis distance to the conformance score for semi-supervised multivariate time series novelty detection. Our primary objective is to validate \crefrange{algoGramMatrix}{algoConformance} presented in this paper for this infinite-dimensional setting. For comparisons of the finite-dimensional conformance score against other established methods like isolation forests, shapelets, and local outlier factors, we refer readers to \citet{2023LyonsShao}. Within the functional data analysis literature, the $(L^2[0,1])^d$ Mahalanobis distance has been evaluated against other common functional anomaly detection methodologies such as boxplots, outliergrams, and depth-based trimming \citep{2014Outliergram, 2020FunctionalMahalanobis}.

We will use UEA multivariate time series repository \citep{2018MultivateTimeSeriesRepository, 2021MultivariateBakeoff} in our experimentation, which in recent years has become a standard benchmark for multivariate time series classification. The repository contains 30 real world data sets consisting of multivariate time series, 26 of which are of equal lengths ranging from 8 to 2500 time steps, with state-space dimension ranging from 2 to 1345, see Table \ref{tab:data sets} for a summary. For the task of semi-supervised anomaly detection task we employ a one-versus-rest approach. In each experiment, we designate a single class label as the normal corpus, while considering all other classes as outliers. We evaluate the performance using both PR-AUC and ROC AUC. The results are then averaged across all class labels for a comprehensive assessment. Our experiment code is publically available at \url{https://github.com/nikitazozoulenko/kernel-timeseries-anomaly-detection}.

\begin{table}[]
\resizebox{0.823\textwidth}{!}{\begin{minipage}{\textwidth}
\begin{tabular}{l|l|l|l|l|l|l|l}
    \toprule
    Code & Name & Train size & Test size & Dims & Length & Classes & Avg. Corpus Size \\
    \midrule
    \midrule 
    AWR & ArticularyWordRecognition & 275 & 300 & 9 & 144 & 25 & 11 \\
    AF & AtrialFibrillation & 15 & 15 & 2 & 640 & 3 & 5 \\
    BM & BasicMotions & 40 & 40 & 6 & 100 & 4 & 10\\
    CR & Cricket & 108 & 72 & 6 & 1197 & 12 & 9\\
    DDG & DuckDuckGeese & 50 & 50 & 1345 & 270 & 5 & 10 \\
    EW & EigenWorms & 128 & 131 & 6 & 17,984 & 5 & 26\\
    EP & Epilepsy & 137 & 138 & 3 & 206 & 4 & 34\\
    EC & EthanolConcentration & 261 & 263 & 3 & 1751 & 4 & 65\\
    ER & ERing & 30 & 270 & 4 & 65 & 6 & 5\\
    FD & FaceDetection & 5890 & 3524 & 144 & 62 & 2 & 2945\\
    FM & FingerMovements & 316 & 100 & 28 & 50 & 2 & 158\\
    HMD & HandMovementDirection & 160 & 74 & 10 & 400 & 4 & 40\\
    HW & Handwriting & 150 & 850 & 3 & 152 & 26 & 6\\
    HB & Heartbeat & 204 & 205 & 61 & 405 & 2 & 102\\
    LIB & Libras & 180 & 180 & 2 & 45 & 15 & 12\\
    LSST & LSST & 2459 & 2466 & 6 & 36 & 14 & 176\\
    MI & MotorImagery & 278 & 100 & 64 & 3000 & 2 & 139\\
    NATO & NATOPS & 180 & 180 & 24 & 51 & 6 & 30\\
    PD & PenDigits & 7494 & 3498 & 2 & 8 & 10 & 749\\
    PEMS & PEMS-SF & 267 & 173 & 963 & 144 & 7 & 38 \\
    PS & PhonemeSpectra & 3315 & 3353 & 11 & 217 & 39 & 85\\
    RS & RacketSports & 151 & 152 & 6 & 30 & 4 & 38\\
    SRS1 & SelfRegulationSCP1 & 268 & 293 & 6 & 896 & 2 & 134\\
    SRS2 & SelfRegulationSCP2 & 200 & 180 & 7 & 1152 & 2 & 100\\
    SWJ & StandWalkJump & 12 & 15 & 4 & 2500 & 3 & 4\\
    UW & UWaveGestureLibrary & 120 & 320 & 3 & 315 & 8 & 15\\
    \bottomrule
\end{tabular}
\end{minipage}}
\caption{Summary of the 26 equal length UEA multivariate time series data sets. In our empirical study we consider all data sets of total size less than 8000, where the average corpus size is greater than 30.}\label{tab:data sets}
\end{table}

\subsubsection{Experimental Setup}
In total we will consider two different anomaly distances, namely the Mahalanobis distance and the conformance score, together with 11 different time series kernels. This includes the linear Euclidean kernel corresponding to the non-kernelized $(L^2[0,1])^d$ setting. In our open-source code we provide efficient PyTorch implementations of each kernel on both GPU and CPU, as well as an implementation of \crefrange{algoGramMatrix}{algoConformance} for computing the kernelized Mahalanobis distance and the kernelized conformance score. We consider the following time series kernels in our experimentation, all of which were defined in \cref{subsecTimeSeriesKernels}:

\begin{enumerate}
    \item The family of time series kernels obtained by flattening a given time series of length $T$ into a vector in $\bbR^{Td}$, and then applying a static kernel. We will use the RBF, polynomial, and linear kernels as our choices of static kernels, the latter of which corresponds to the $(L^2[0,1])^d$ inner product.
    
    \item The family of integral-class kernels (linear time warping), with RBF and polynomial static kernels.

    \item The global alignment kernel (GAK).

    \item The Volterra reservoir kernel (VRK).

    \item Four different variants of the signature kernel: The truncated signature kernel, the RBF-lifted truncated signature kernel, the RBF-lifted PDE signature kernel, and randomized signatures with tanh activation.
\end{enumerate}

Our work adds to the growing body of literature on anomaly detection using signature features, which was first studied in \citet{2023LyonsShao}. This was done by explicitly computing $m$-level truncated signature features, which has $\calO(Td^m)$ time complexity. Truncated signatures were later successfully used for market anomaly detection \citep{2022CrpytoMarketAnomalyDetection}, and radio astronomy \citep{2024PaolaSignatureAnomaly}. The use of signatures has however been limited to low-dimensional time series due to the exponential time complexity of explicitly computing truncated signatures. Our unified framework addresses this bottleneck, allowing for efficient computations of both signature conformance scores and signature Mahalanobis distances in $\calO(T^2 d)$ time. This significant improvement in $d$ opens up the use of these methods for high-dimensional time series data.

\subsubsection{Hyper-parameter Selection}
For each kernel, data set, and class label, we run an extensive grid search on the designated training set using repeated $k$-fold cross-validation with 4 folds and 10 repeats to find the optimal kernel hyper-parameters. Let $\bbR^d$ be the state-space, and let $T$ be the length of the time series for a given data set. For each method using the RBF static kernel we use the range $\sigma \in \frac{1}{\sqrt{d}}\{e^{-2}, e^{-1}, 1, e^{1}, e^{2}\}$, and similarly for the polynomial kernel we use $p \in \{2, 3, 4 \}$, and $c \in \{\frac{1}{4}, \frac{1}{2}, 1, 2, 4\}$. For the GAK kernel we use the previously specified $\sigma$ without the $\sqrt{d}$ term, multiplied by $\sqrt{T}\cdot\text{med}(\|x-y\|)$ as is recommended by \citet{2011FastGlobalAlignment}. For the VRK kernel we use $\tau \in \frac{1}{\sqrt{d}}\{\frac{1}{8}, \frac{1}{4}, \frac{1}{2}, 1 \}$, and we let $\lambda$ vary from $0.25$ to $0.999$ on an inverse logarithmic grid of size $10$. 

The signature kernels inherit hyper-parameters from their respective static kernels. We additionally scale the kernel-lifted paths by $s \in \frac{1}{\sqrt{d}}\{ \frac{1}{4}, \frac{1}{2}, 1, 2, 4\}$ for the truncated signature, and by $s \in \frac{1}{\sqrt{d}}\{ \frac{1}{8}, \frac{1}{4}, \frac{1}{2}, 1\}$ for the untruncated PDE kernel. We use lower values for the PDE signature kernel since untruncated signatures essentially can be viewed as tensor exponentials, whose inner products will blow up if the input values are too big. For the truncated signature kernel we let the truncation level be in $\{1, 2, 3, 4, 5, 6, 7\}$. For the randomized signature we use the tanh activation function with number of features in $\{ 10, 25, 50, 100, 200\}$, and random matrix variances taken from a logarithmic grid of size $8$ from $0.00001$ to $1$. Since the randomized signature is a randomized kernel, we perform the cross validation with $5$ different random seeds for the random matrix initializations, and take the best performing model (using the training set only), as is common practice for randomized methods.

Furthermore, for each method we also cross-validate over the Tikhonov regularization parameter $\alpha \in \{ 10^{-8}, 10^{-5}, 10^{-2}\}$, whether to concatenate time as an additional dimension to each time series, and the eigenvalue threshold $\lambda$ in \cref{algoGramMatrix}. We set an upper limit of 50 eigenvalues for the computation of the variance norm. For numerical stability, and to make the choices of $\alpha$ and $\lambda$ be comparable across all kernels and data sets, we normalize all time series kernels $K$ in feature space via $\frac{K(x,y)}{\sqrt{K(x,x)K(y,y)}}$.

\subsubsection{Pre-processing} 
For each data set and each class label, we normalize the data to have mean zero and standard deviation one, using the statistics of the normal corpus. Average-pooling is then performed to reduce the maximum length of all time series to 100 time steps. After this, we concatenate the zero vector to each time series to allow each dynamic kernel to be translation-sensitive, and we clip all values to be in $[-5, 5]$ for additional numerical stability. Furthermore, in our cross-validation we also include the choice of adding time as an additional dimension to all time series. For the VRK kernel specifically, we perform further clipping of the data based on the $\tau$ hyper-parameter, which is required to make the VRK kernel well-defined.

\subsubsection{Data and Results}

Due to the high computational cost of evaluating 11 time series kernels on all 26 UEA data sets, with up to 40 experiments per data set-kernel combination, each of which goes through an extensive repeated k-fold cross-validation, we focus our analysis on UEA data sets with a total size under 8,000 entries (see \cref{tab:data sets}). This excluded \textsc{PenDigits} and \textsc{FaceDetection}. Additionally, to ensure a sufficient statistical sample size, we only considered data sets where the average corpus size exceeded 30 entries, resulting in a final selection of 12 data sets.

The anomaly distances were computed as described by \crefrange{algoGramMatrix}{algoConformance}. The optimal kernel hyper-parameters were obtained separately for the Mahalanobis distance and the conformance score, via a 10 times repeated 4-fold cross-validation on the training data for each data set. The objective score used in the cross-validation was the sum of ROC-AUC and PR-AUC. When calculating the precision-recall metric, we let the non-outlier class be the positive class. The final model was then evaluated on the out-of-sample test set to obtain the final results, presented in  \cref{tab:scores_roc_auc} and \cref{tab:scores_pr_auc}.

\vspace{20pt}

\begin{table}[]
\resizebox{0.90\linewidth}{!}{\begin{minipage}{\linewidth}
    \begin{tabular}{lc||ccc|cc|c|c|cccc}
        \toprule
        \multirow{2}{*}{Data set}   &  \multicolumn{12}{c}{ROC AUC} \\
        \cline{3-13}
                                & & linear & RBF & poly 
                                & $I_\text{RBF}$ & $I_\text{poly}$ 
                                & GAK & VRK
                                & $S_\text{lin}$ & $S_\text{RBF}$ & $S^\infty_\text{RBF}$ & $S^\text{rand}_\text{tanh}$ \\ 
        \hline
        \hline
		\hline
		\multirow{2}{*}{EP}    
		& C & .89 & .95 & .91 & .94 & .91 & .97 & .94 & \textbf{.98} & \textbf{.98} & .92 & .94\\
		& M & .70 & .81 & .80 & .80 & .81 & .88 & .90 & \textbf{.98} & .97 & .91 & .95\\
		\hline
		\multirow{2}{*}{EC}    
		& C & .55 & .56 & .55 & .55 & .55 & .56 & .58 & \textbf{.59} & .55 & .56 & .55\\
		& M & .57 & .56 & .58 & .55 & .57 & .57 & \textbf{.60} & .56 & .56 & .56 & .56\\
		\hline
		\multirow{2}{*}{FM}    
		& C & .58 & .52 & .54 & .54 & .53 & \textbf{.60} & .53 & .49 & .49 & .48 & .54\\
		& M & \textbf{.58} & .51 & .53 & .50 & .52 & .54 & .48 & .49 & .48 & .51 & .56\\
		\hline
		\multirow{2}{*}{HMD}    
		& C & .55 & .43 & .54 & .53 & .49 & .46 & .52 & .53 & .48 & .50 & \textbf{.57}\\
		& M & .45 & .50 & .47 & .46 & .51 & \textbf{.55} & .54 & .50 & .50 & .52 & .52\\
		\hline
		\multirow{2}{*}{HB}    
		& C & .63 & .64 & .60 & .61 & .61 & .59 & .67 & .70 & \textbf{.72} & .62 & .61\\
		& M & .61 & .59 & .61 & .62 & .59 & .61 & .62 & \textbf{.69} & .60 & .59 & .60\\
		\hline
		\multirow{2}{*}{LSST}    
		& C & .54 & .61 & .53 & .61 & .56 & \textbf{.68} & .53 & .57 & .67 & .62 & .62\\
		& M & .62 & \textbf{.68} & .66 & .67 & .66 & .67 & .67 & .67 & .65 & .63 & .67\\
		\hline
		\multirow{2}{*}{MI}    
		& C & .51 & .54 & .57 & .57 & .53 & .50 & .54 & .43 & .57 & \textbf{.60} & .45\\
		& M & .51 & .52 & .47 & .46 & .49 & .50 & \textbf{.54} & .47 & .49 & .43 & .46\\
		\hline
		\multirow{2}{*}{PEMS}    
		& C & .91 & .92 & .90 & .91 & .89 & \textbf{.93} & .90 & \textbf{.93} & .92 & \textbf{.93} & .87\\
		& M & .48 & .69 & .53 & .66 & .52 & .77 & \textbf{.90} & .80 & .79 & .71 & .72\\
		\hline
		\multirow{2}{*}{PS}    
		& C & .62 & .65 & .65 & .64 & .63 & .66 & .67 & \textbf{.70} & .69 & .56 & .67\\
		& M & .65 & .67 & .65 & .65 & .64 & .65 & .68 & \textbf{.71} & .70 & .54 & .69\\
		\hline
		\multirow{2}{*}{RS}    
		& C & .79 & .73 & .74 & .80 & \textbf{.81} & .77 & .46 & .73 & .77 & .68 & .76\\
		& M & .34 & .58 & .48 & .60 & .42 & \textbf{.83} & .61 & .79 & .73 & .75 & .69\\
		\hline
		\multirow{2}{*}{SRS1}    
		& C & .68 & \textbf{.81} & .79 & .80 & \textbf{.81} & .77 & \textbf{.81} & .61 & .77 & .71 & .77\\
		& M & .73 & .60 & .70 & .59 & .58 & .62 & .72 & \textbf{.77} & \textbf{.77} & \textbf{.77} & .75\\
		\hline
		\multirow{2}{*}{SRS2}    
		& C & \textbf{.57} & .51 & .53 & .53 & .53 & .54 & .50 & .49 & .48 & .53 & .50\\
		& M & .57 & .55 & .54 & .55 & \textbf{.59} & .55 & .53 & .54 & .50 & .50 & .52\\
		\hline
		\hline
		\hline
		\multirow{2}{*}{Avg. AUC}    
		& C & .65 & .66 & .65 & \textbf{.67} & .65 & \textbf{.67} & .64 & .65 & \textbf{.67} & .64 & .66\\
		& M & .57 & .60 & .58 & .59 & .58 & .65 & .65 & \textbf{.66} & .65 & .62 & .64\\
		\hline
		\multirow{2}{*}{Avg. Rank}    
		& C & 11.0 & 10.7 & 10.9 & 10.0 & 11.8 & \textbf{8.8} & 9.9 & 10.4 & \textbf{8.8} & 12.4 & 11.1\\
		& M & 13.5 & 13.2 & 14.1 & 14.8 & 14.8 & 9.6 & 10.2 & \textbf{8.8} & 12.1 & 15.0 & 11.3\\
		\bottomrule
    \end{tabular}
\end{minipage}}
\caption{One-versus-rest ROC-AUC for the semi-supervised anomaly detection experiments on the UEA multivariate time series repository. The conformance and Mahalanobis methods are denoted by C and M, respectively. The symbols $I$, $S$, $S^\infty$ and $S^\text{rand}$ represent the integral, truncated signature, PDE signature, and randomized signature kernels, respectively.}
\label{tab:scores_roc_auc}
\end{table}

\begin{table}[]
\resizebox{0.90\linewidth}{!}{\begin{minipage}{\linewidth}
    \begin{tabular}{lc||ccc|cc|c|c|cccc}
        \toprule
        \multirow{2}{*}{Data set}   &  \multicolumn{12}{c}{Precision-Recall AUC} \\
        \cline{3-13}
                                & & linear & RBF & poly 
                                & $I_\text{RBF}$ & $I_\text{poly}$ 
                                & GAK & VRK
                                & $S_\text{lin}$ & $S_\text{RBF}$ & $S^\infty_\text{RBF}$ & $S^\text{rand}_\text{tanh}$ \\ 
        \hline
        \hline
		\hline
		\multirow{2}{*}{EP}    
		& C & .79 & .88 & .81 & .87 & .75 & .92 & .86 & \textbf{.96} & \textbf{.96} & .80 & .85\\
		& M & .42 & .58 & .54 & .57 & .58 & .73 & .73 & \textbf{.95} & .94 & .78 & .89\\
		\hline
		\multirow{2}{*}{EC}    
		& C & .28 & .28 & .29 & .28 & .31 & .29 & \textbf{.32} & \textbf{.32} & .29 & .30 & .28\\
		& M & .30 & .32 & .31 & .30 & .32 & .32 & \textbf{.33} & .28 & .28 & .30 & .30\\
		\hline
		\multirow{2}{*}{FM}    
		& C & .56 & .52 & .55 & .57 & .56 & \textbf{.60} & .54 & .54 & .52 & .53 & .54\\
		& M & \textbf{.55} & .51 & .52 & .50 & .52 & .52 & .51 & .52 & .49 & .53 & \textbf{.55}\\
		\hline
		\multirow{2}{*}{HMD}    
		& C & .30 & .24 & \textbf{.31} & .30 & .28 & .26 & .27 & .28 & .29 & .27 & .29\\
		& M & .25 & .29 & .26 & .28 & .27 & .31 & \textbf{.33} & .29 & .29 & \textbf{.33} & .27\\
		\hline
		\multirow{2}{*}{HB}    
		& C & .58 & .60 & .53 & .56 & .55 & .55 & \textbf{.63} & \textbf{.63} & \textbf{.63} & .58 & .60\\
		& M & .58 & .56 & .57 & .61 & .55 & .59 & .61 & \textbf{.64} & .55 & .56 & .58\\
		\hline
		\multirow{2}{*}{LSST}    
		& C & .12 & .12 & .10 & .15 & .11 & .14 & .10 & .12 & \textbf{.19} & .14 & .10\\
		& M & .14 & .15 & .16 & \textbf{.17} & .14 & .14 & .15 & \textbf{.17} & \textbf{.17} & .14 & .15\\
		\hline
		\multirow{2}{*}{MI}    
		& C & .54 & .54 & .57 & .57 & .54 & .49 & .55 & .47 & .56 & \textbf{.60} & .49\\
		& M & .55 & .54 & .53 & .49 & .53 & .53 & \textbf{.57} & .51 & .50 & .46 & .48\\
		\hline
		\multirow{2}{*}{PEMS}    
		& C & .79 & .82 & .79 & .81 & .79 & \textbf{.83} & \textbf{.83} & \textbf{.83} & .81 & .82 & .72\\
		& M & .34 & .41 & .33 & .39 & .31 & .48 & \textbf{.64} & .50 & .52 & .38 & .40\\
		\hline
		\multirow{2}{*}{PS}    
		& C & .05 & .06 & .06 & .05 & .05 & \textbf{.07} & .06 & \textbf{.07} & \textbf{.07} & .03 & .06\\
		& M & .05 & .06 & .05 & .05 & .05 & .05 & .06 & .07 & \textbf{.08} & .03 & .07\\
		\hline
		\multirow{2}{*}{RS}    
		& C & .65 & .64 & .66 & \textbf{.72} & .70 & .62 & .40 & .56 & .67 & .52 & .55\\
		& M & .20 & .33 & .26 & .34 & .23 & \textbf{.66} & .36 & .61 & .53 & .57 & .44\\
		\hline
		\multirow{2}{*}{SRS1}    
		& C & .69 & \textbf{.78} & .76 & \textbf{.78} & .77 & .74 & .76 & .63 & .76 & .70 & .75\\
		& M & \textbf{.77} & .67 & .69 & .66 & .64 & .69 & .68 & .75 & .73 & .74 & .74\\
		\hline
		\multirow{2}{*}{SRS2}    
		& C & \textbf{.57} & .51 & .53 & .54 & .52 & .55 & .52 & .51 & .50 & .54 & .51\\
		& M & .55 & .55 & .54 & .56 & \textbf{.58} & .55 & .54 & .55 & .51 & .51 & .52\\
		\hline
		\hline
		\hline
		\multirow{2}{*}{Avg. AUC}    
		& C & .49 & .50 & .50 & \textbf{.52} & .49 & .51 & .49 & .49 & \textbf{.52} & .49 & .48\\
		& M & .39 & .41 & .40 & .41 & .39 & .46 & .46 & \textbf{.49} & .47 & .44 & .45\\
		\hline
		\multirow{2}{*}{Avg. Rank}    
		& C & 11.0 & 11.1 & 10.6 & 8.0 & 11.4 & 10.3 & 9.8 & 10.6 & \textbf{7.7} & 11.2 & 12.8\\
		& M & 13.2 & 13.1 & 15.2 & 13.8 & 15.1 & 10.5 & 10.0 & \textbf{8.8} & 12.4 & 14.5 & 12.0\\
		\bottomrule
    \end{tabular}
\end{minipage}}
\caption{One-versus-rest precision-recall AUC for the semi-supervised anomaly detection experiments on the UEA multivariate time series repository. The conformance and Mahalanobis methods are denoted by C and M, respectively. The symbols $I$, $S$, $S^\infty$ and $S^\text{rand}$ represent the integral, truncated signature, PDE signature, and randomized signature kernels, respectively.}
\label{tab:scores_pr_auc}
\end{table}

\begin{figure}
  \begin{minipage}{\textwidth}
    \begin{tabular}{c@{\extracolsep{0pt}}c}
      \centering
      \includegraphics[width=0.5\textwidth]{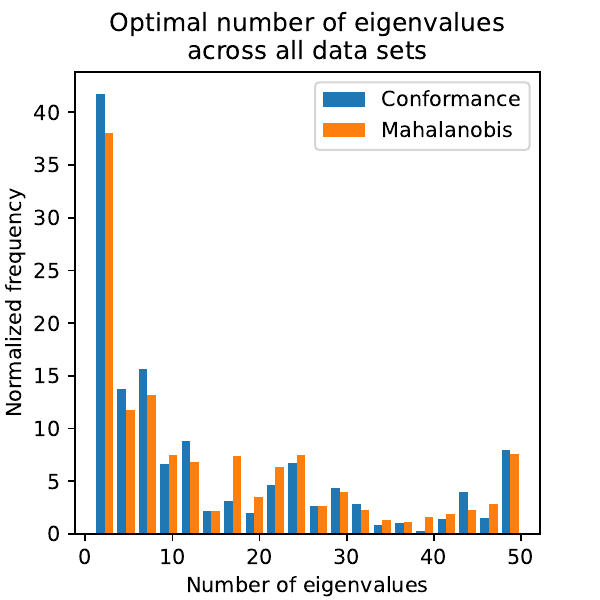} &
      \centering
      \includegraphics[width=0.5\textwidth]{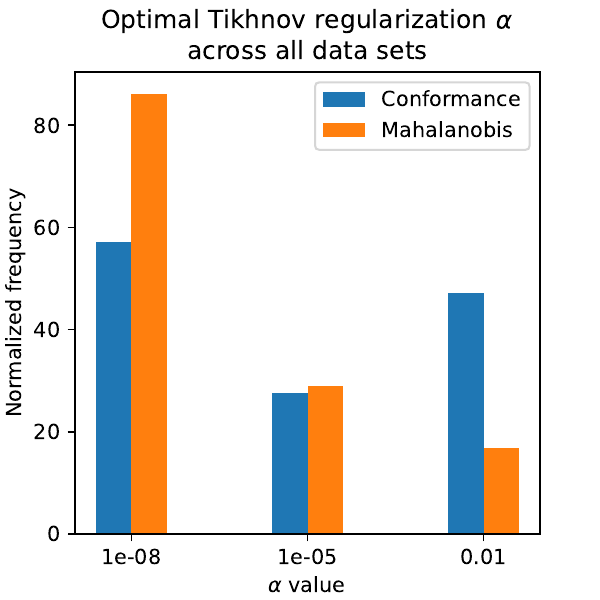}
    \end{tabular}
    \caption{Optimal hyper-parameters for computing the anomaly distance as per \crefrange{algoGramMatrix}{algoConformance}, sampled across all data sets and all kernels, normalized by the number of classes per data set. The results were obtained via a repeated $k$-fold cross-validation on the train set.}
    \label{fig:CV-alpha-threshold}
  \end{minipage}
\end{figure}

\begin{figure}
    \centering
    \includegraphics[width=0.59\textwidth]{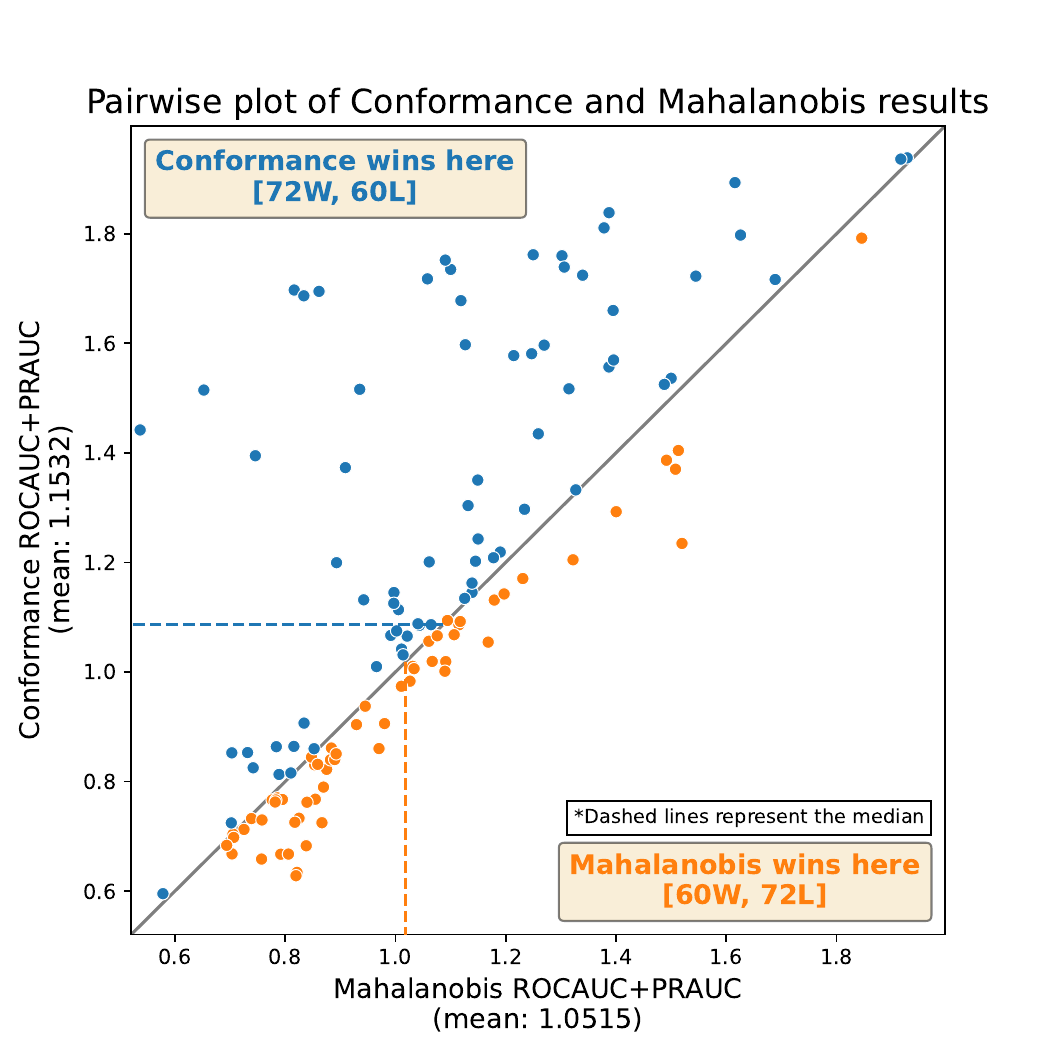}
    \caption{Pairwise comparison of one-versus-rest test scores for the Mahalanobis distance and the Conformance score. Each point represents one kernel and one data set.}
    \label{fig:pairwise-scatter}
\end{figure}

\subsubsection{Discussion}

For the Mahalanobis distance, there seems to be a clear advantage to working in the kernelized setting, as the results show that the linear $(L^2[0,1])^d$ inner product achieves the lowest average test scores out of all methods, with ROC-AUC and PR-AUC scores of 0.57 and 0.39, respectively. The GAK, VRK and truncated signature kernels on the other hand perform best in this regard, obtaining AUC scores of 0.65-0.66 and 0.46-0.49, respectively. 

The average test scores for the conformance score (nearest-neighbour Mahalanobis distance) do not differ much between the choices of kernels, but can have significant differences within a single data set. The average ROC and PR AUC scores calculated over all data sets range from 0.64-0.67 and 0.49-0.52, respectively, with the best results obtained from the RBF integral, RBF signature, and GAK kernel. 

When it comes to average rank, the Mahalanobis linear truncated signature and conformance RBF truncated signature take the number one spot, with the VRK and GAK kernels as close second place contenders. These kernels also have the most number of first places across all data sets, especially the linear truncated signature kernel. However, since the results are very data set dependent, the best performing model and kernel combination will vary on a case-by-case basis. 

\cref{fig:pairwise-scatter} shows a pairwise scatter plot of the Mahalanobis distance and conformance score test results for all kernels and all data sets. The results suggest that most of the time there is no significant advantage to using one anomaly distance over the other, except for a few cases seen in the upper left quadrant where the conformance score greatly outperforms the Mahalanobis distance. The difference in performance seem to be more pronounced for the simple flattened and integral-class kernels, where the average difference is 0.07 points, as opposed to the dynamic-time kernels where the average difference is 0.02 points. This difference is more pronounced for the PR-AUC metric, and two interesting examples are \textsc{RacketSports} and \textsc{PEMS-SF} where the PR-AUC doubles for select kernels when using the conformance method.

When it comes to computing the variance norm according to \crefrange{algoGramMatrix}{algoConformance}, both the Mahalanobis and conformance methods on average obtained their highest cross validation scores using a low number of eigenvalues, as seen in \cref{fig:CV-alpha-threshold}. Furthermore, we see that both methods in general preferred a low regularization parameter $\alpha$, with $\alpha=1\text{e-08}$ being most commonly used.


\acks{TC has been supported by the EPSRC Programme Grant EP/S026347/1 and acknowledges the support of the Erik Ellentuck Fellowship at the Institute for Advanced Study. NZ has been supported by the Roth Scholarship. We acknowledge computational resources and support provided by the Imperial College Research Computing Service (DOI: \texttt{10.14469/hpc/2232}). For the purpose of open access, the authors have applied a Creative Commons Attribution (CC BY) licence to any Author Accepted Manuscript version arising. We would like to thank the anonymous reviewers for their helpful comments on earlier versions of the manuscript which helped significantly improve the paper.}

\vskip 0.2in
\printbibliography

\end{document}